\documentclass{article}

\usepackage[preprint,nonatbib]{neurips_2026}
\usepackage{amsmath,amssymb,amsthm,mathtools,bm}
\usepackage{booktabs,array}
\usepackage{graphicx}
\usepackage{enumitem}
\usepackage{microtype}
\usepackage{placeins,needspace}
\usepackage{xcolor}
\usepackage[authoryear]{natbib}
\usepackage{xurl}
\usepackage{caption}
\usepackage{etoolbox}
\AtBeginEnvironment{table}{\small}
\AtBeginEnvironment{table*}{\small}
\usepackage{hyperref}
\definecolor{paperlink}{RGB}{35,65,105}
\hypersetup{colorlinks=true,linkcolor=paperlink,citecolor=paperlink,urlcolor=paperlink,
  pdftitle={Two-Stage Score Estimation for Group-Specific Diffusion Models},
  pdfauthor={Huitao Yang; Oscar Hernan Madrid Padilla}}
\newtheorem{theorem}{Theorem}
\newtheorem{proposition}{Proposition}
\newtheorem{lemma}{Lemma}
\newtheorem{corollary}{Corollary}
\newtheorem{assumption}{Assumption}
\newtheorem{definition}{Definition}

\newtheorem{example}{Example}

\newcommand{\R}{\mathbb{R}}

\newcommand{\E}{\mathbb{E}}
\newcommand{\Pp}{\mathbb{P}}
\newcommand{\one}{\mathbf{1}}
\newcommand{\calF}{\mathcal{F}}
\newcommand{\calG}{\mathcal{G}}

\newcommand{\calN}{\mathcal{N}}
\newcommand{\calR}{\mathcal{R}}
\newcommand{\calD}{\mathcal{D}}
\newcommand{\Tmin}{\underline T}
\newcommand{\Tmax}{\overline T}
\newcommand{\dd}{\mathrm{d}}
\newcommand{\polylog}{\operatorname{polylog}}
\newcommand{\norm}[1]{\left\lVert #1\right\rVert}

\DeclareMathOperator{\TV}{TV}
\DeclareMathOperator{\KL}{KL}
\DeclareMathOperator{\Var}{Var}
\DeclareMathOperator{\Cov}{Cov}

\title{Two-Stage Score Estimation for Group-Specific Diffusion Models}
\author{%
  Huitao Yang\textsuperscript{1}\quad
  Oscar Hernan Madrid Padilla\textsuperscript{1}\\[3pt]
  {\small\textsuperscript{1}University of California, Los Angeles\quad}
}
\date{}
\begin{document}
\maketitle
\raggedbottom

\begin{abstract}
Generative modeling across related distributions raises a question: how can shared structure support accurate generation for a target distribution with limited observations?
We develop a statistical framework for diffusion transfer that analyzes a shared pooled score together with a target-specific offset.
Under Besov regularity of the pooled density and a stable hierarchical composition model for the offset, we derive convergence rates that separate the costs of learning these components and identify when a simpler offset yields a faster risk-bound rate than separate fitting.
Under suitable boundary and regularity assumptions, we prove that the offset remains bounded and jointly regular as diffusion time approaches zero.
These results allow the lower time cutoff to vanish polynomially and provide total-variation guarantees for diffusion models as density estimators.
Synthetic simulations and image benchmarks provide a proof of concept for the framework.
\end{abstract}

\section{Introduction}
\label{sec:introduction}

Diffusion models learn a data distribution through forward noising and a
learned reverse process~\citep{intro_sohldickstein2015,ho2020}. In
score-based formulations, generation relies on log-density gradients
across noise levels~\citep{intro_songermon2019,song2021}, estimated using
score matching and its denoising formulation
\citep{hyvarinen2005,vincent2011}. Recent theory establishes near-minimax
distribution-estimation rates under Besov regularity and additional
support conditions~\citep{oko2023}, and adaptation to low-dimensional
linear structure~\citep{chenhuang2023}.
In this setting, we study diffusion models for related groups in
$\mathbb R^d$, each with its own data distribution.
Separate training uses only the target group's limited data, whereas
pooled training uses all groups but learns their combined distribution.
This motivates sharing data while preserving group differences, a central
goal of multitask and transfer learning
\citep{intro_caruana1997,du2017,caipu2024}, including diffusion models
with limited target data~\citep{ouyang2024}.

Existing diffusion transfer theory exploits shared latent structure
\citep{yang2024}, shared conditioning representations~\citep{cheng2025},
and density-ratio guidance~\citep{ouyang2024}. Related work learns score
offsets through residual denoising~\citep{intro_verine2025} or
Sobolev-regularized density-ratio estimation
\citep{xie2026scoredifference}. We analyze a two-stage estimator in which
the shared score and group-specific offset may have different
structural complexity.
We denote the pooled and group-$\ell$ diffused
densities by $p_t$ and $p_{t,\ell}$, with scores $s_t=\nabla\log p_t$ and
$s_{t,\ell}=\nabla\log p_{t,\ell}$. The density-ratio
identity~\citep{ouyang2024}
\(s_{t,\ell}=s_t+\Delta_{t,\ell}, \quad
\Delta_{t,\ell}=\nabla\log(p_{t,\ell}/p_t)\)
suggests first estimating $s_t$ from $n$ pooled observations, then
freezing this fit and learning an offset from an independent sample
of $n_\ell$ target observations, paralleling the pooled-plus-offset
construction of \citet{ren2026}. We model the offset using a hierarchical
composition class in $(x,t)$~\citep{schmidthieber2020,kohlerlanger2020},
whose complexity depends on constituent smoothness and input arity.

We first separate pooled-score error from offset approximation and
estimation costs in Theorem~\ref{thm:oracle}, then combine Besov and
hierarchical regularity to derive structural rates and positive-transfer
conditions in Theorem~\ref{thm:mixed-main}. We establish density-level
conditions for small-time offset stability in
Propositions~\ref{prop:ratio-cancellation} and~\ref{prop:offset}, and
translate score estimation into total-variation accuracy in
Corollary~\ref{cor:tv-rate}.
Our main contribution is a theoretical framework for two-stage transfer
learning in score-based diffusion models, linking statistical transfer
guarantees to the boundedness and regularity of target-specific score
offsets in the vanishing-noise regime.
Specifically:
\begin{itemize}[leftmargin=1.5em]
\item \textbf{A framework for two-stage diffusion transfer.}
We develop a statistical framework for learning a target score through a
pooled fit and a target-specific offset. The conditional oracle inequality
in Theorem~\ref{thm:oracle} separates inherited pooled-score error from
offset approximation and estimation costs, so that target-stage complexity
depends only on the trainable offset class.

\item \textbf{Structural rates and transfer gains.}
Under Besov regularity of the pooled density and a stable hierarchical
composition model for the offset, Theorem~\ref{thm:mixed-main} bounds the
integrated squared score risk by
\(\widetilde O\!\left(
n^{-2s_0/(2s_0+d)}+n_\ell^{-\alpha_\Delta}
\right),\)
where $s_0$ is pooled-density smoothness and $\alpha_\Delta$ is determined
by offset-constituent smoothness and arity. This separation quantifies
when pooled data and a structurally simpler offset yield a faster
upper-bound rate than separate fitting.

\item \textbf{Offset stability and density estimation at vanishing noise.}
We establish density and boundary conditions under which the offset
remains bounded and jointly regular in space and time as $t\to0$, even
when the individual scores become singular. This analysis supports
polynomially vanishing lower time cutoffs and, through
Corollary~\ref{cor:tv-rate}, total-variation guarantees for diffusion
models as estimators of the target data distribution.
\end{itemize}

\section{Related work}
\paragraph{Nonparametric transfer learning.}
Nonparametric transfer learning exploits structural relationships between
source and target tasks. Earlier analyses study smooth model shifts and
offset estimation~\citep{revision_wangschneider2015,revision_wangetal2016},
transformation-based transfer~\citep{du2017}, and adaptation to differing,
unknown regularity~\citep{caipu2024,revision_linreimherr2024}. Minimax and
adaptive analyses also cover nonparametric classification
\citep{related_caiwei2021,related_reeve2021} and covariate shift in
regression~\citep{related_ma2023}. \citet{related_jiao2024} analyze deep
transfer through shared and domain-specific features, whereas
\citet{ren2026} provide the closest grouped-regression analogue: a pooled
estimator plus group-specific offsets with hierarchical ReLU rates.
We study the corresponding separation for diffusion scores, accounting
for vanishing noise and distributional recovery.

\paragraph{Transfer and guidance in diffusion models.}
Diffusion transfer uses density-ratio guidance~\citep{ouyang2024}, shared
latent linear structure~\citep{yang2024}, or shared conditioning
representations~\citep{cheng2025}. We include the classifier-based method
of \citet{ouyang2024} as a baseline. Related conditional-density theory
exploits covariate smoothness and manifold structure~\citep{tang2025}.
Our structural assumption concerns the group-to-pooled score offset,
allowing a complex pooled distribution to be shared across groups whose
offsets have lower constituent arity or greater smoothness.
Classifier guidance adds classifier gradients~\citep{dhariwal2021}, while
classifier-free guidance combines conditional and unconditional
scores~\citep{hosalimans2022}. Discriminator guidance refines pretrained
scores through density-ratio gradients~\citep{related_kim2023}, and
\citet{intro_verine2025} train gradient offsets using residual
denoising objectives. \citet{xie2026scoredifference} establish convergence
guarantees for Sobolev-regularized score-difference estimation, including
time-dependent diffusion. We share the density-ratio identity and
additive adaptation with these approaches, while tracking
pooled-fit error and the offset's compositional complexity in a joint
bound in $n$ and $n_\ell$, quantifying when a simpler offset improves
the bound over separate fitting.

\paragraph{Statistical theory for diffusion models.}
Score matching~\citep{hyvarinen2005}, its denoising
identity~\citep{vincent2011}, and reverse-SDE generation~\citep{song2021}
underlie our procedure. \citet{oko2023} establish nearly minimax
distribution-estimation rates; their joint space-time Besov approximation
supplies our pooled-stage rate. \citet{chenhuang2023} instead exploit
low-dimensional linear data geometry. Hierarchical ReLU approximation
supplies constituent-based offset rates
\citep{schmidthieber2020,kohlerlanger2020}. Small-time cancellation
supports regularity, but the compositional representation remains an
assumption. Score-to-distribution analyses control generation error from
approximate scores and numerical discretization
\citep{related_chen2023sampling,chenlee2023}.

\section{Methodology}
\label{sec:methodology}
In this section, we present the two-stage transfer learning method analyzed in our theoretical framework.
The unconditional pooled score function describes the shared structure of separate group distributions (Section~\ref{sec: setting}). We first
estimate this score from pooled observations without group labels (Section~\ref{sec:offset-estimation}), then freeze the shared backbone and 
learn a target-group offset from group-specific samples.

\subsection{Grouped data and diffusion scores}
\label{sec: setting}
We denote the density of group $\ell\in[L]$ by $p_{0,\ell}$ and its
pooled prevalence by $\pi_\ell>0$, with $\sum_{\ell=1}^L\pi_\ell=1$. The pooled density is
$p_0=\sum_{\ell=1}^L\pi_\ell p_{0,\ell}$.
In Stage~1, we use an i.i.d. pooled training sample
$(X_i,Z_i)_{i=1}^n$ from different groups, where $X_i\in\R^d$, $\Pp(Z_i=\ell)=\pi_\ell$, and
$X_i\mid Z_i=\ell$ has density $p_{0,\ell}$.
Stage~2 uses an independent i.i.d. target-group sample
$(X_{\ell,i})_{i=1}^{n_\ell}$ from $p_{0,\ell}$, with $n_\ell\leq n$.
These sample sizes describe the two training stages directly. A single
grouped dataset can supply the independent samples by sample splitting;
constant-fraction splitting changes the rates only by constants.

We consider the Ornstein--Uhlenbeck (OU) process
\[
    \dd X_t=-\beta_tX_t\dd t+\sqrt{2\beta_t}\dd B_t,
    \qquad
    0<\underline\beta\leq\beta_t\leq\overline\beta,
\]
whose transition density $q_t(\cdot\mid x_0)$ is
$N(m_tx_0,\sigma_t^2I_d)$, where
\[
    m_t=\exp\!\left(-\int_0^t\beta_u\dd u\right),
    \qquad
    \sigma_t^2=1-m_t^2.
\]
For a learning interval $0<\Tmin<\Tmax$, write
$\sigma_-:=\sigma_{\Tmin}$ and $\sigma_+:=\sigma_{\Tmax}$.
Our setting is a variance-preserving diffusion in the score-SDE
framework of \citet{song2021}, with the stated normalization
of $\beta_t$. We denote the resulting pooled and group-$\ell$ densities
by $p_t$ and $p_{t,\ell}$, and write
\(s_t=\nabla\log p_t,
    s_{t,\ell}=\nabla\log p_{t,\ell}.\)

\subsection{Two-stage offset estimation}
\label{sec:offset-estimation}
For a vector field $f:\R^d\times[\Tmin,\Tmax]\to\R^d$, we use the integrated
group score risk
\begin{equation}
\calR^{\rm int}_\ell(f)
:=
\int_{\Tmin}^{\Tmax}\int_{\R^d}
\norm{f(x,t)-s_{t,\ell}(x)}_2^2p_{t,\ell}(x)\dd x\dd t,
\label{eq:risk}
\end{equation}
where $\|\cdot\|_2$ is the Euclidean norm and the superscript ${\rm int}$ denotes time integration.
The denoising-score-matching loss replaces $s_{t,\ell}(X_t)$ with the known
transition score
\begin{equation}
\label{eq:transition-score}
\xi_t(x,x_0):=\nabla_x\log q_t(x\mid x_0)
    =-\frac{x-m_tx_0}{\sigma_t^2}.
\end{equation}
For the target-group sample, its empirical loss is
\[
\begin{aligned}
\calD^{\rm int}_{\ell,n_\ell}(f)
&:=\frac1{n_\ell}\sum_{i=1}^{n_\ell}\calD^{\rm int}_{\ell,i}(f),\\
\calD^{\rm int}_{\ell,i}(f)
&:=\int_{\Tmin}^{\Tmax}\E_{X_t\mid X_{\ell,i}}
\bigl[\norm{f(X_t,t)-\xi_t(X_t,X_{\ell,i})}_2^2\bigr]\dd t.
\end{aligned}
\]
Here, $\E_{X_t\mid X_{\ell,i}}$ averages over the forward-corruption
law $X_t=m_tX_{\ell,i}+\sigma_t\varepsilon$, with
$\varepsilon\sim N(0,I_d)$, conditional on the observed clean sample
$X_{\ell,i}$.
Likewise, the pooled score estimation uses the same loss expression as above but with unconditional $X_i$ and averages over
$i=1,\ldots,n$, with corresponding notations $\calD_n^{\rm int}, \calD_i^{\rm int}$. Vincent's identity implies that the population denoising
loss and \eqref{eq:risk} differ by a constant independent of $f$
\citep{vincent2011}. Minimizing the denoising loss therefore targets the
integrated score risk.

The population offset for group $\ell$ is denoted $\Delta_{t,\ell}=s_{t,\ell}-s_t$. We denote $s(x,t)=s_t(x)$, $s_\ell(x,t)=s_{t,\ell}(x)$, and
$\Delta_\ell(x,t)=\Delta_{t,\ell}(x)$ for the joint space--time fields.
For $n\geq2$ and a fixed Gaussian-tail exponent $a>0$, we set
\(C_n=\sqrt{2(a+2)\log n},\quad
K_n(t)=\frac{\sqrt d\,C_n}{\sigma_t}.\)
We write $\Pi_t:\R^d\to\R^d$ for Euclidean projection onto the closed ball
\(B_2(0,K_n(t))=\{v\in\R^d:\|v\|_2\leq K_n(t)\}.\)
Explicitly, $\Pi_t(v)=v\min\{1,K_n(t)/\|v\|_2\}$ for $v\ne0$,
and $\Pi_t(0)=0$.
For a vector-valued function
$f:\R^d\times[\Tmin,\Tmax]\to\R^d$, we write
$(\Pi f)(x,t)=\Pi_t(f(x,t))$.

In Stage~1, we estimate the pooled score by
$\widehat s\in\arg\min_{f\in\mathcal F_n}\mathcal D_n^{\rm int}(f)$.
Then in Stage~2, we keep this fit fixed and estimate the offset by
\begin{equation}
    \widehat g_\ell\in\arg\min_{g\in\calG_\ell}
    \calD^{\rm int}_{\ell,n_\ell}\!\left(\Pi(\widehat s+g)\right),
    \qquad
    \widehat s_\ell=\Pi(\widehat s+\widehat g_\ell).
\label{eq:estimator}
\end{equation}
For the convergence results, $\mathcal F_n$ and $\mathcal G_\ell$ use
sparse and dense ReLU networks, respectively, with bounded coefficients, borrowing techniques from~\cite{kohlerlanger2020}.
Appendices~\ref{app:stable-construction}--\ref{app:mixed-main} specify
their sizes and clipping rules. The theory evaluates corruption
expectations and time integrals exactly and analyzes Stage~2 conditional
on the pooled fit.

\section{Theory}
\label{sec:theory}
We first separate pooled and offset errors in Theorem~\ref{thm:oracle},
then apply Besov and compositional approximation to derive the rates in
Theorem~\ref{thm:mixed-main}. We then study small-time regularity to
support offset stability and use Corollary~\ref{cor:tv-rate} to obtain
distributional accuracy from score accuracy.

For a vector-valued
function on the learning domain, write
\[
\|f\|_{2,\infty}
:=\sup_{(x,t)\in\R^d\times[\Tmin,\Tmax]}\|f(x,t)\|_2,
\]
and denote radius-$\eta$ covering numbers by $\calN(\eta,\calF,\|\cdot\|)$.
We write $a_n\lesssim b_n$ for $a_n\leq Cb_n$ with sample-independent
$C$, and $a_n\asymp b_n$ when both directions hold.
We use $\polylog(u)$ for a fixed polynomial in $\log u$ and
$\widetilde O$ to suppress these factors. Additional notation appears in Appendix~\ref{app:notation}.

\subsection{Error decomposition for two-stage estimation}
\label{sec:general-bound}
We first consider a general offset class satisfying the following entropy condition (Assumption~\ref{as:oracle})
and choose a learning interval obeying
\begin{equation}
    n^{-\kappa_0}\leq\Tmin\leq t_0,
    \qquad
    \Tmax\asymp\log n,
    \qquad
    \Tmax-\Tmin\geq c_0>0,
\label{eq:window}
\end{equation}
where $\kappa_0,c_0,t_0$ are fixed positive constants.
Condition~\eqref{eq:window} allows the lower cutoff to decrease
polynomially with $n$, while the upper cutoff grows logarithmically.

\begin{assumption}[Offset entropy and available Stage~1 bound]
\label{as:oracle}
Conditional on $\widehat s$, for $0<\eta\leq1$,
\[
    \log\calN\bigl(\eta,\calG_\ell,\norm{\cdot}_{2,\infty}\bigr)
    \leq V_{n,\ell}\log(A_{n,\ell}/\eta),
    \qquad A_{n,\ell}\geq e.
\]

For the oracle analysis, suppose that Stage~1 has already supplied a
deterministic error bound $\bar r_{n,\ell}(\delta_0)$ satisfying
\(\Pp\!\left(r^{\rm int}_{n,\ell}
\leq\bar r_{n,\ell}(\delta_0)\right)\geq1-\delta_0,\)
where
\[
r^{\rm int}_{n,\ell}:=
\int_{\Tmin}^{\Tmax}\int
\norm{\widehat s(x,t)-s_t(x)}_2^2p_{t,\ell}(x)\dd x\dd t.
\]
\end{assumption}
Here, the entropy bound quantifies the effective complexity of the offset class through its covering number. 
\(r^{\rm int}_{n,\ell}\) measures how accurately the Stage 1 fit estimates the pooled score on the noised target-group distribution, integrated over the learning interval. Under the common diffusion,
the overlap condition $p_{0,\ell}\leq\bar\kappa p_0$ implies
$p_{t,\ell}\leq\bar\kappa p_t$, transferring pooled risk to this weighting.
We carry the available Stage~1 bound $\bar r_{n,\ell}$ into the oracle
and derive it from Besov regularity and overlap in
Theorem~\ref{thm:mixed-main}.
\begin{theorem}[Error bound for two-stage score estimation]
\label{thm:oracle}
Suppose Assumption~\ref{as:oracle} and \eqref{eq:window} hold, and let
$0<\delta\leq1/2$. Conditional on $\widehat s$ and its first-stage risk
event, with probability at least $1-\delta$ over the independent
target-group sample,
\begin{equation}
\begin{aligned}
\calR^{\rm int}_\ell(\widehat s_\ell)
\leq C\Bigg[&\bar r_{n,\ell}+\mathsf A^{\rm int}_{n,\ell}
+\tau^{\rm int}_{n,\ell}+\frac{d\log^2n}{n_\ell}
\left\{V_{n,\ell}\log\!\bigl(A_{n,\ell}n_\ell K_n^{\max}\bigr)
+\log(1/\delta)\right\}\Bigg].
\end{aligned}
\label{eq:oracle}
\end{equation}
where $\bar r_{n,\ell}=\bar r_{n,\ell}(\delta_0)$,
$K_n^{\max}=\sqrt dC_n/\sigma_-$,
\[
\mathsf A^{\rm int}_{n,\ell}:=
\inf_{g\in\calG_\ell}\int_{\Tmin}^{\Tmax}\int
\norm{g(x,t)-\Delta_{t,\ell}(x)}_2^2p_{t,\ell}(x)\dd x\dd t,
\]
and $\tau^{\rm int}_{n,\ell}\leq\operatorname{poly}(d,\log n)n^{-(a+2)}$
is a Gaussian-tail remainder.
\end{theorem}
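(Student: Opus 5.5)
Conditional on $\widehat s$, Stage~2 is an ordinary empirical risk minimization over the shifted class $\mathcal H_\ell:=\{\Pi(\widehat s+g):g\in\calG_\ell\}$. The target-group sample is independent, so $\widehat s$ can be treated as a fixed function throughout. By Vincent's identity, for every $h$ the excess denoising loss $\E\,\calD^{\rm int}_{\ell,i}(h)-\E\,\calD^{\rm int}_{\ell,i}(s_\ell)$ equals $\calR^{\rm int}_\ell(h)$. The plan is therefore in three parts: a standard oracle inequality of the form
\[
\calR^{\rm int}_\ell(\widehat s_\ell)\lesssim \inf_{h\in\mathcal H_\ell}\calR^{\rm int}_\ell(h)+\text{(estimation term)}+\text{(tail remainder)},
\]
a bound on the approximation term by $\bar r_{n,\ell}+\mathsf A^{\rm int}_{n,\ell}$, and a bound on the estimation term through the entropy in Assumption~\ref{as:oracle}.

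For the approximation step, fix $g\in\calG_\ell$. I would first show that the projection $\Pi_t$ is harmless. Since $\Pi_t$ is the projection onto a convex set, it is $1$-Lipschitz, so
\[
\norm{\Pi_t(v)-s_{t,\ell}}\le\norm{v-s_{t,\ell}}
\]
whenever $\norm{s_{t,\ell}(x)}\le K_n(t)$. On the complementary region I would bound the loss directly. Writing $s_{t,\ell}(x)=\E[\xi_t\mid X_t=x]$, Jensen's inequality and the Gaussian tail of $\norm{\varepsilon}$ at level $\sqrt d\,C_n$ show that the $p_{t,\ell}$-mass of $\{\norm{s_{t,\ell}}>K_n(t)\}$, weighted by $K_n(t)^2+\norm{s_{t,\ell}}^2$, is at most $\operatorname{poly}(d,\log n)\,n^{-(a+2)}\sigma_t^{-2}$. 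Integrating over $t\in[\Tmin,\Tmax]$, with $\sigma_-^{-2}\lesssim n^{\kappa_0}$ and $a$ large enough to absorb this, gives $\tau^{\rm int}_{n,\ell}$. On the good region, $(a+b)^2\le2a^2+2b^2$ together with $s_{t,\ell}=s_t+\Delta_{t,\ell}$ gives
\[
\norm{\widehat s+g-s_{t,\ell}}^2\le2\norm{\widehat s-s_t}^2+2\norm{g-\Delta_{t,\ell}}^2.
\]
Integrating against $p_{t,\ell}$, using $r^{\rm int}_{n,\ell}\le\bar r_{n,\ell}$ on the Stage~1 event, and taking the infimum over $g$ yields $2\bar r_{n,\ell}+2\mathsf A^{\rm int}_{n,\ell}+\tau^{\rm int}_{n,\ell}$.

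For the estimation step, I would use the usual Bernstein-plus-union-bound argument over a covering net, as in \citet{oko2023} and \citet{kohlerlanger2020}. The per-sample excess loss $\calD^{\rm int}_{\ell,i}(h)-\calD^{\rm int}_{\ell,i}(s_\ell)$ is unbounded because $\xi_t$ is Gaussian. I would truncate on the event $\norm{X_{\ell,i}}\lesssim C_n$ and $\norm{\varepsilon}\le\sqrt d\,C_n$, whose complement has probability $O(n^{-(a+2)})$ and contributes again to $\tau^{\rm int}_{n,\ell}$. On the truncated event, $h$ and $\xi_t$ are both bounded by $K_n(t)$, so the time-integrated loss is bounded by $\int\sigma_t^{-2}\,\dd t\cdot dC_n^2\lesssim d\log^2 n$. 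This uses $\sigma_t^2\asymp t$ near $0$, so that $\int_{\Tmin}^{\Tmax}\sigma_t^{-2}\,\dd t\lesssim\log(1/\Tmin)+\Tmax\lesssim\log n$ by \eqref{eq:window}, giving the factor $d\log^2 n$.

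The key point, and the main obstacle, is a variance bound of the form $\Var(\text{excess loss})\lesssim d\log^2 n\cdot\calR^{\rm int}_\ell(h)$. I would prove it by writing the excess loss as $\int(h-s_\ell)\cdot(h+s_\ell-2\xi_t)\,\dd t$ and applying Cauchy--Schwarz in $t$ with weights $\sigma_t^{2}$ and $\sigma_t^{-2}$. This step is delicate because the time weights must be balanced so that only logarithmic factors appear, rather than $\sigma_-^{-2}$.

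With the variance bound in hand, Bernstein's inequality over an $\eta$-net of $\mathcal H_\ell$ in $\norm{\cdot}_{2,\infty}$ completes the argument. The net has log-cardinality at most $V_{n,\ell}\log(A_{n,\ell}/\eta)$, since $\Pi$ is $1$-Lipschitz and translation by $\widehat s$ preserves covering numbers. I would take $\eta=1/(n_\ell K_n^{\max})$, so that the discretization error $\eta K_n^{\max}\log n$ is $O(1/n_\ell)$. Bernstein then gives the term
\[
\frac{d\log^2 n}{n_\ell}\Bigl\{V_{n,\ell}\log\bigl(A_{n,\ell}n_\ell K_n^{\max}\bigr)+\log(1/\delta)\Bigr\}.
\]
Finally, the basic inequality for $\widehat g_\ell$, applied with the approximating $g$ and a one-sided Bernstein bound for that fixed $g$, closes the argument, and constants are absorbed into $C$.
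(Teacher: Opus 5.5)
Your architecture is the paper's: Vincent's identity, clipping, the split $\widehat s+g-s_\ell=(\widehat s-s)+(g-\Delta_\ell)$, a net at scale $(n_\ell K_n^{\max})^{-1}$, Bernstein with variance of order $d\log^2n$ times risk, and $\int_{\Tmin}^{\Tmax}\sigma_t^{-2}\dd t\lesssim\log n$. The gap is in the concentration step, where you center the excess loss at the unclipped true score $s_\ell$.

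Your range bound, and your variance bound via $\langle h-s_\ell,h+s_\ell-2\xi_t\rangle$, both need $\norm{s_{t,\ell}(X_t)}_2\lesssim K_n(t)$ on the truncation event. That does not follow, because $s_{t,\ell}(X_t)=\E[\xi_t\mid X_t]$ is not controlled by the realized $\varepsilon$.
\begin{itemize}
\item Theorem~\ref{thm:oracle} imposes no support or tail condition on $p_{0,\ell}$. So your extra truncation $\norm{X_{\ell,i}}\lesssim C_n$, with its claimed $O(n^{-(a+2)})$ failure probability, is not available.
\item Even for $X_0\in[-1,1]^d$, the generic pointwise bound is only $\norm{s_{t,\ell}(x)}_2\le(\norm{x}_2+\sqrt d)/\sigma_t^2$.
\item With that bound, the integrated range is of order $\int_{\Tmin}\sigma_t^{-4}\dd t\asymp\Tmin^{-1}$. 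Under \eqref{eq:window} this can reach $n^{\kappa_0}$, instead of $d\log^2n$.
\end{itemize}

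The repair is to center at a clipped function. The paper uses a near-minimizer $f^\dagger=\Pi(\widehat s+g^\dagger)$ of the risk over the class; centering at $\Pi s_\ell$ also works.
\begin{itemize}
\item Both candidates are then bounded by $K_n(t)$, so on the truncation event $|\ell_f-\ell_{f'}|\le4K_n(t)\norm{f-f'}_2$.
\item Apply conditional and then temporal Cauchy--Schwarz, keeping all of $K_n(t)$ on one side. A $\sigma_t^{\pm2}$ reweighting would leave a $\sigma_t^{-2}$-weighted risk rather than $\calR^{\rm int}_\ell$.
\item This gives a variance of at most $16\,dC_n^2\int\sigma_t^{-2}\dd t$ times $\int\E_{p_{t,\ell}}\norm{f-f^\dagger}_2^2\dd t\le2\{\calR^{\rm int}_\ell(f)+\calR^{\rm int}_\ell(f^\dagger)\}$.
\item Vincent's identity is then needed only for differences of clipped candidates.
\item The cost of clipping the target enters only through $\int\E_{p_{t,\ell}}\norm{s_{t,\ell}-\Pi_ts_{t,\ell}}_2^2\dd t\le\tau^{\rm int}_{n,\ell}$, which you already prove.
\end{itemize}

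No truncation of $X_{\ell,i}$ is needed at all. Since $\xi_t=-\varepsilon/\sigma_t$, truncating $\varepsilon$ inside the exactly computed corruption expectation bounds every clipped loss deterministically and uniformly in $X_0$, at no cost in probability.

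Finally, in the tail remainder, do not replace $\sigma_t^{-2}$ by $\sigma_-^{-2}$ and then enlarge $a$; the statement fixes $a$. Integrating $\sigma_t^{-2}$ directly gives exactly $\operatorname{poly}(d,\log n)\,n^{-(a+2)}$.
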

\noindent\emph{Proof idea.} The pooled-fit error comes from Stage~1; the other principal
terms measure the offset's approximation and estimation costs.
We separate these costs using the comparator identity
$\widehat s+g-s_\ell=(\widehat s-s)+(g-\Delta_\ell)$ and control
time-dependent clipping through its integrated scale:
\begin{equation}
\int_{\Tmin}^{\Tmax}\sigma_t^{-2}\dd t
\lesssim\log(1/\Tmin)+\Tmax\lesssim\log n.
\label{eq:main-time-integral}
\end{equation}
Thus even polynomial early stopping contributes only logarithmically
to the empirical-loss scale; the maximum clipping radius appears only
inside the entropy logarithm in \eqref{eq:oracle}.
Appendix~\ref{app:oracle} gives the truncation, concentration, and
projection arguments.

\subsection{Convergence rates and gains from pooling}
\label{sec:combined}
\label{sec:stable-offset}
\label{sec:pooled-besov}

To obtain explicit rates, we impose Besov regularity on the pooled density and a stable hierarchical composition structure on the offset. The former allows general density smoothness without prescribing a particular structure, while the latter captures the potentially simpler structure of the offset, whose small-time regularity is examined in Section~\ref{sec:score-properties}. 

We first introduce the definition of hierarchical composition, where functions are determined by constituents in a layer-by-layer style.
Following \citet{kohlerlanger2020}, each constituent is characterized by its smoothness \(q\) and arity \(K\), the number of its inputs (including time when used). We use the H\"older convention specified in
Appendix~\ref{app:notation}.

\begin{definition}[Coordinatewise hierarchical composition]
\label{def:general-hierarchy}
Let $\mathcal P\subset(0,\infty)\times\mathbb N$ be finite and nonempty.
The class $\mathcal H_D^{\rm scal}(0,\mathcal P,C)$ consists of
coordinate projections on $\R^D$. Recursively,
$h\in\mathcal H_D^{\rm scal}(M,\mathcal P,C)$ if
$h=g(h_1,\ldots,h_K)$ for some $(q,K)\in\mathcal P$, where
$g$ is globally bounded, $C$-Lipschitz, and has
$C^q$-norm at most $C$, and
$h_i\in\mathcal H_D^{\rm scal}(M_i,\mathcal P,C)$
with $M_i<M$.
For $\mathbf M=(M_j)_{j=1}^d$ and
$\mathbf P=(\mathcal P_j)_{j=1}^d$,
$\mathcal H_{D,d}^{\rm coord}(\mathbf M,\mathbf P;C)$
consists of vector functions whose $j$th coordinate is zero
or belongs to $\mathcal H_D^{\rm scal}(M_j,\mathcal P_j,C)$.
\end{definition}

We now assume that the true score offset admits a hierarchical composition structure with controlled complexity.

\begin{assumption}[Stable offset composition]
\label{as:stable-offset}
For each $n$ and learning interval satisfying \eqref{eq:window},
the offset admits, on
$Q_n=[-R_n,R_n]^d\times[\Tmin,\Tmax]$,
$R_n=1+C_R\sqrt{\log n}$, a representation
\[
\Delta_\ell=a^\Delta_{n,\ell}\widetilde\Delta_{n,\ell},
\qquad
\widetilde\Delta_{n,\ell}\in
\mathcal H_{d+1,d}^{\rm coord}
(\mathbf M_\ell,\mathbf P_\ell;C),
\]
where $1\leq a^\Delta_{n,\ell}\leq C(\log n)^c$.
The structural parameters $\mathbf M_\ell,\mathbf P_\ell,C$
are fixed independently of $n,n_\ell$ and the learning interval.
The constant $C_R$ is chosen as specified in
Appendix~\ref{app:stable-construction}.
\end{assumption}

Assumption~\ref{as:stable-offset} captures the structural simplicity of the offset, with controlled growth as the lower time cutoff decreases. We next impose a more general smoothness condition on the pooled density, without requiring a specific compositional structure.

\begin{assumption}[Pooled Besov model]
\label{as:pooled-besov}
The pooled density is supported on $[-1,1]^d$, is bounded above by
$C_0$ and below by $c_0'>0$ on that cube, and its restriction to the cube belongs to a fixed
$B^{s_0}_{p_0^\ast,\theta_0^\ast}([-1,1]^d)$ ball, where
$0<p_0^\ast,\theta_0^\ast\leq\infty$ and
$s_0>d(1/p_0^\ast-1/2)_+$.
Further, we assume the same boundary and
diffusion-schedule conditions as in \citet[Theorem~3.1]{oko2023};
Lemma~\ref{lem:proof-oko-input} states them explicitly.
The group density obeys
$p_{0,\ell}\leq\bar\kappa p_0$ for a fixed $\bar\kappa$ independent
of group prevalence.
\end{assumption}
Here $s_0$ measures the Besov smoothness of the pooled density,
while $(q,K)$ specifies the smoothness and arity of an offset
constituent. Under these assumptions,
Theorem~\ref{thm:mixed-main} separates the pooled and offset
contributions to the score-risk bound. The offset exponent
and corresponding rate are
\begin{equation}
\begin{aligned}
\alpha_\Delta&:=\min_{\substack{j\in[d]\\(q,K)\in\mathcal P_j^\ell}}
\frac{2q}{2q+K},\\
\rho_{n,\ell}&:=\rho_{n_\ell}(\mathbf P_\ell)=n_\ell^{-\alpha_\Delta},
\end{aligned}
\label{eq:offset-exponent}
\end{equation}
omitting identically zero coordinates. If the offset is identically
zero and we choose $\calG_\ell=\{0\}$, its rate term is omitted.

\begin{theorem}[Convergence rate for two-stage score estimation]
\label{thm:mixed-main}
Assume \eqref{eq:window}, Assumptions~\ref{as:stable-offset}
and \ref{as:pooled-besov}, independent first- and second-stage samples,
and $C\log n\leq n_\ell\leq n$ for a sufficiently large fixed
constant $C>0$. For $0<\delta\leq1/2$, the network classes in
Propositions~\ref{prop:stable-offset} and \ref{prop:pooled-general}
give, with probability at least $1-2\delta$,
\begin{equation}
\begin{aligned}
\calR_\ell^{\rm int}(\widehat s_\ell)
\leq C\polylog(n/\delta)\bigl[&
\underbrace{\bar\kappa n^{-2s_0/(2s_0+d)}}_{\text{pooled score}}
+\underbrace{n_\ell^{-\alpha_\Delta}}_{\text{group offset}}\bigr].
\end{aligned}
\label{eq:mixed-main}
\end{equation}
\end{theorem}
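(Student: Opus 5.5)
The plan is to instantiate the conditional oracle inequality of Theorem~\ref{thm:oracle} with the two network classes and then bound its four terms separately. Let $E_1$ be the Stage~1 event from Proposition~\ref{prop:pooled-general}, which holds with probability at least $1-\delta$ and bounds the pooled risk. Let $E_2$ be the Stage~2 event from Theorem~\ref{thm:oracle}, applied with the same $\delta$ conditionally on $\widehat s$. By independence of the two samples, a union bound gives probability at least $1-2\delta$ for $E_1\cap E_2$. It therefore suffices to show the following three facts: $\bar r_{n,\ell}\lesssim \bar\kappa\,\polylog(n/\delta)\,n^{-2s_0/(2s_0+d)}$; $\mathsf A^{\rm int}_{n,\ell}\lesssim\polylog(n)\,n_\ell^{-\alpha_\Delta}$; and $V_{n,\ell}\log(A_{n,\ell}n_\ell K_n^{\max})/n_\ell\lesssim\polylog(n)\,n_\ell^{-\alpha_\Delta}$. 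The remainder satisfies $\tau^{\rm int}_{n,\ell}\leq\operatorname{poly}(d,\log n)n^{-(a+2)}$, which is dominated by either term, and $\log(1/\delta)/n_\ell$ is absorbed into $\polylog(n/\delta)n_\ell^{-\alpha_\Delta}$ because $\alpha_\Delta<1$.

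\textbf{Stage~1 term.} Under Assumption~\ref{as:pooled-besov}, Lemma~\ref{lem:proof-oko-input} and Proposition~\ref{prop:pooled-general}, which adapt \citet[Theorem~3.1]{oko2023} to the window \eqref{eq:window} with $\Tmin\geq n^{-\kappa_0}$, give the bound $\int_{\Tmin}^{\Tmax}\int\|\widehat s-s_t\|_2^2p_t\,\dd x\,\dd t\lesssim\polylog(n/\delta)n^{-2s_0/(2s_0+d)}$ with probability $1-\delta$. The overlap $p_{0,\ell}\leq\bar\kappa p_0$ propagates through the common Gaussian kernel to give $p_{t,\ell}\leq\bar\kappa p_t$. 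Hence $r^{\rm int}_{n,\ell}$ is at most $\bar\kappa$ times the pooled risk, and we can take $\bar r_{n,\ell}$ to be this bound.

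\textbf{Offset approximation and entropy.} Proposition~\ref{prop:stable-offset} provides a dense ReLU class $\calG_\ell$ with bounded weights. Its width and depth are tuned to $N\asymp n_\ell^{K/(2q+K)}$ per constituent, as in \citet{kohlerlanger2020}, and its outputs are clipped. On the cube $Q_n$ it approximates $\widetilde\Delta_{n,\ell}$ in sup norm to accuracy $\polylog(n)\,n_\ell^{-q/(2q+K)}$ for each constituent. After multiplying by $a^\Delta_{n,\ell}\leq C(\log n)^c$ and squaring, the error is $\polylog(n)n_\ell^{-\alpha_\Delta}$, where the worst constituent determines $\alpha_\Delta$. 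Outside $Q_n$, the integrand $\|g-\Delta_{t,\ell}\|^2p_{t,\ell}$ is controlled as follows. Since $p_{0,\ell}$ is supported in $[-1,1]^d$, $p_{t,\ell}(x)$ decays like a Gaussian in $\|x\|$ beyond radius $1$. Also, $g$ is clipped and $\Delta_{t,\ell}$ grows at most like $\|x\|/\sigma_t^2$. Choosing $C_R$ large then makes this tail contribution $O(n^{-(a+2)}\polylog n)$ after integrating $\sigma_t^{-4}$ over the window. The network has $V_{n,\ell}\asymp\polylog(n)\max_{(q,K)}n_\ell^{K/(2q+K)}$ parameters, with weights bounded by $\operatorname{poly}(n)$. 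The standard parameter-count covering bound then gives $\log\calN\leq V_{n,\ell}\log(A_{n,\ell}/\eta)$ with $\log A_{n,\ell}\lesssim\log n$. Since $K_n^{\max}\leq\operatorname{poly}(n)$ by \eqref{eq:window}, the estimation term is $\polylog(n)n_\ell^{K/(2q+K)-1}=\polylog(n)n_\ell^{-2q/(2q+K)}$. The condition $n_\ell\geq C\log n$ ensures that the network sizes are at least one and that the logarithms are comparable to $\log n$.

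\textbf{Main obstacle.} I expect the hardest step to be the approximation on $Q_n$ uniformly over the time window. Time enters as an input coordinate, and $\Tmin$ may shrink polynomially, so the constituents' $C^q$ norms must not blow up. This is exactly what the fixed $C$ and the $n$-independent structural parameters in Assumption~\ref{as:stable-offset} guarantee. The remaining work is to rescale $Q_n$, whose side is $R_n\asymp\sqrt{\log n}$ and whose time length is $\Tmax\asymp\log n$, to the unit cube. The hierarchical approximation theorem is applied there, and the rescaling costs only polylogarithmic factors through the Lipschitz and Hölder constants of the innermost constituents. I would verify this scaling first, since every other step is a direct substitution into Theorem~\ref{thm:oracle}.
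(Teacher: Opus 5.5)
Your proposal is correct and follows the paper's proof. Like the paper, you condition on the pooled-risk event of Proposition~\ref{prop:pooled-general}, transfer it to the target law by overlap, substitute the approximation and entropy bounds of Proposition~\ref{prop:stable-offset} into Theorem~\ref{thm:oracle}, and use independence of the samples to reach probability $1-2\delta$.

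One minor correction does not affect the rate. The offset sieve has depth of order $\log n$ and coefficients bounded only by $\exp\{C(\log(en))^2\}$, not $\mathrm{poly}(n)$. So $\log A_{n,\ell}$ is polylogarithmic rather than $O(\log n)$, and this factor is absorbed into $\polylog(n/\delta)$.
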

\noindent\emph{Proof idea.} We apply Theorem~\ref{thm:oracle} by separately controlling the pooled-score error and the approximation and estimation costs of the offset. For the pooled score, the Besov-based construction of \citet[Theorem~3.1]{oko2023} yields an approximation error proportional to $\sigma_t^{-2}$, whose integral contributes only a logarithmic factor by \eqref{eq:main-time-integral}. Balancing this approximation error with the network estimation cost gives the pooled rate $n^{-2s_0/(2s_0+d)}$, which transfers to the target distribution through overlap. For the offset, hierarchical ReLU approximation yields a complexity determined by the smoothness and arity of its constituents. Balancing its approximation and estimation costs gives $n_\ell^{-\alpha_\Delta}$. Substituting these two bounds into the oracle inequality proves the result.

\noindent\emph{Positive transfer.} Here Separate uses the independent adaptation sample. Suppose additionally that $p_{0,\ell}$ is supported on $[-1,1]^d$, is
bounded above and away from zero on that cube, and belongs to a fixed
restriction Besov ball $B^{s_\ell}_{p_\ell^\ast,\theta_\ell^\ast}$, where
$s_\ell>d(1/p_\ell^\ast-1/2)_+$. Assume that its boundary regularity and
the diffusion schedule satisfy the hypotheses of
\citet[Theorem~3.1]{oko2023}, uniformly over the resolutions used.
We take $n_\ell\asymp n^\rho$ for fixed $0<\rho\leq1$ and a learning
interval admissible for both the pooled and target-only approximation
resolutions; see Lemma~\ref{lem:proof-oko-input}.
We define
\(\alpha_0=\frac{2s_0}{2s_0+d},
\alpha_{\rm sep}=\frac{2s_\ell}{2s_\ell+d}.\)
We apply the argument of Proposition~\ref{prop:pooled-general} to the
target density with sample size $n_\ell$ to obtain a separate-fitting
estimator with score-risk bound
\[
\calR_\ell^{\rm int}(\widehat s_\ell^{\rm sep})
\lesssim\polylog(n/\delta)\,n_\ell^{-\alpha_{\rm sep}}
\]
with probability at least $1-\delta$, using the same learning interval
for both estimators.
The polynomial exponents in the two-stage and separate-fitting upper
bounds are therefore $\min\{\alpha_0,\rho\alpha_\Delta\}$ and
$\rho\alpha_{\rm sep}$, respectively. Thus the two-stage upper bound has a
strictly faster polynomial rate when
\begin{equation}
\alpha_0>\rho\alpha_{\rm sep},\qquad
\alpha_\Delta>\alpha_{\rm sep}.
\label{eq:positive-transfer-condition}
\end{equation}
The second inequality requires $q/K>s_\ell/d$ for every constituent
of each nonzero offset coordinate: its smoothness-to-arity ratio must
exceed the separate-fitting benchmark. The first inequality requires the
pooled score error to decay faster than the target-only bound. When $n_\ell\asymp n$ and
$\alpha_0=\alpha_{\rm sep}$, the polynomial exponents coincide; the
comparison then depends on sample-size factors. 
The following example illustrates how shared density components can cancel in the score offset, yielding a simpler compositional structure.

\begin{example}[Heterogeneity confined to one coordinate block]\label{ex:block-offset}
We consider a partition $\mathcal B$ of $[d]$ and groups sharing the same
factors except on one block $B_\star$:
\[
p_{0,g}(x)=r_{g,\star}(x_{B_\star})
\prod_{B\in\mathcal B\setminus\{B_\star\}}r_B(x_B),
\quad g\in[L].
\]
Writing $\bar r_\star=\sum_{g=1}^L\pi_g r_{g,\star}$, the pooled density is
\[
p_0(x)=\bar r_\star(x_{B_\star})
\prod_{B\in\mathcal B\setminus\{B_\star\}}r_B(x_B).
\]
The forward diffusion acts independently on each coordinate and thus preserves this block factorization:
\[
\begin{aligned}
p_{t,\ell}(x)
=
r_{\ell,\star,t}(x_{B_\star})
\prod_{B\ne B_\star} r_{B,t}(x_B),
\\
p_t(x)
=
\bar r_{\star,t}(x_{B_\star})
\prod_{B\ne B_\star} r_{B,t}(x_B).
\end{aligned}
\]
Hence the shared factors cancel exactly in the score difference, giving
\[
\begin{aligned}
\Delta_{t,\ell,B} &= 0 \quad (B\ne B_\star),\\
\Delta_{t,\ell,B_\star}
&= \nabla_{x_{B_\star}}\log
\frac{r_{\ell,\star,t}(x_{B_\star})}
{\bar r_{\star,t}(x_{B_\star})}.
\end{aligned}
\]
Thus every nonzero coordinate of the offset depends only on
$(x_{B_\star},t)$, i.e., on $|B_\star|+1$ inputs.

Suppose $\bar r_\star$ and $h_\ell=r_{\ell,\star}/\bar r_\star$ satisfy the conditions of Proposition~\ref{prop:offset}. By that proposition, each nonzero offset coordinate has joint smoothness $q=(r_{\mathrm{off},\ell}-1)/2$ after polylogarithmic normalization. Since it depends only on $(x_{B_\star},t)$, it can be represented by a single constituent of arity $|B_\star|+1$. Thus Assumption~\ref{as:stable-offset} holds with composition depth one. The pooled density is separately required to satisfy Assumption~\ref{as:pooled-besov}.

\end{example}
% The network constructions and rate proofs appear in
% Appendices~\ref{app:stable-construction}--\ref{app:mixed-main};
% Appendix~\ref{app:alternatives} treats the alternative regularity settings.

\subsection{Small-time regularity of the score offset}
\label{sec:score-properties}
\label{sec:cancellation}
A key challenge in Assumption~\ref{as:stable-offset} is its uniformity as the lower diffusion-time cutoff approaches zero. Individual diffusion scores may develop singularities in this regime, raising the question of whether their difference can remain sufficiently regular. We next establish density-level conditions under which such cancellation occurs.

Subtracting the pooled and group versions of
\eqref{eq:score-posterior} cancels the common term
$-x/\sigma_t^2$, giving
\begin{equation}
\begin{aligned}
\Delta_{t,\ell}(x)=\frac{m_t}{\sigma_t^2}\bigl\{&
\E_\ell[X_0\mid X_t=x]\\[-1mm]
&-\E[X_0\mid X_t=x]\bigr\},
\end{aligned}
\label{eq:correction-posterior}
\end{equation}
where $\E_\ell$ and $\E$ use the group and pooled priors.
Figure~\ref{fig:boundary-cancellation} illustrates this cancellation.

\begin{figure}[!htbp]
\centering
\includegraphics[width=\linewidth]{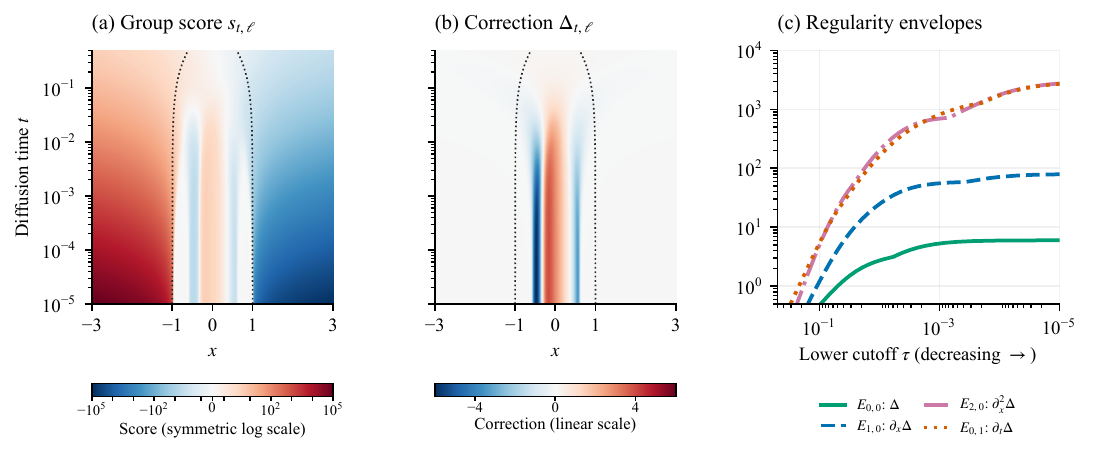}
\caption{Score cancellation and joint regularity in a smooth density-ratio example.
(a) The group score develops large boundary and exterior values as $t$ decreases.
(b) The offset remains bounded, including outside the initial support.
(c) Sampled derivative envelopes level off as the cutoff decreases.
Appendix~\ref{app:figure-construction} specifies the model, scales, and numerical checks.}
\label{fig:boundary-cancellation}
\end{figure}

An individual score can grow as $\sigma_t^{-1}$ at a smooth boundary
and as $\sigma_t^{-2}$ farther outside the support
(Appendix~\ref{app:boundary-details}). Let $r$ be a density on
$K=[-1,1]^D$, with $0<c_r\leq r\leq C_r$, and let $r_\ell=hr$
be another density, where $0<h_{\min}\leq h\leq h_{\max}$ and $h$
is Lipschitz. In rescaled coordinates $z=x/m_t$, $v=\sigma_t^2/m_t^2$, we
define, using zero extensions outside $K$,
\[
D(z,v)=(r*\varphi_{vI_D})(z),\qquad
G(z,v)=\frac{(hr)*\varphi_{vI_D}(z)}{D(z,v)}.
\]
This specializes the guidance identity of
\citet[Theorem~3.1]{ouyang2024} to Gaussian kernels.

\begin{proposition}[Uniform boundedness of the score offset]
\label{prop:ratio-cancellation}
Under these density and ratio conditions,
\begin{equation}
\Delta_{t,\ell}(x)=m_t^{-1}\nabla_z\log G(z,v).
\label{eq:general-ratio}
\end{equation}
For the posterior density
\(\bar\pi_{z,v}(u)\propto r(u)e^{-\|u-z\|_2^2/(2v)}\one_K(u),\)
we have
\[
G=\E_{\bar\pi_{z,v}}h(U),\qquad
\nabla_zG=v^{-1}\Cov_{\bar\pi_{z,v}}(h(U),U).
\]
Consequently, $h_{\min}\leq G\leq h_{\max}$ and
\(\sup_{t>0,x\in\R^D}\|\Delta_{t,\ell}(x)\|_2\leq C,\)
where $C$ depends on $D,c_r,C_r,h_{\min},h_{\max}$,
$\operatorname{Lip}(h)$, and the diffusion-schedule bounds.
\end{proposition}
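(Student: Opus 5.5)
The plan is to first reduce everything to the rescaled variables $(z,v)$ and then bound the covariance uniformly. Since $X_t=m_tX_0+\sigma_t\varepsilon$, we have $X_t/m_t=X_0+\sqrt v\,\varepsilon$. Hence
\[
p_t(x)=m_t^{-D}D(x/m_t,v),\qquad p_{t,\ell}(x)=m_t^{-D}\bigl((hr)*\varphi_{vI_D}\bigr)(x/m_t).
\]
The Jacobian factors cancel in the ratio, so $p_{t,\ell}/p_t=G(x/m_t,v)$. The chain rule $\nabla_x=m_t^{-1}\nabla_z$ then gives \eqref{eq:general-ratio}.

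Next I would write both convolutions as integrals against $e^{-\|u-z\|_2^2/(2v)}$ over $K$. Dividing them identifies $G$ with $\E_{\bar\pi_{z,v}}h(U)$, and this yields $h_{\min}\leq G\leq h_{\max}$ at once. Differentiating under the integral (justified by compact support and the Gaussian kernel) uses
\[
\nabla_z e^{-\|u-z\|_2^2/(2v)}=v^{-1}(u-z)\,e^{-\|u-z\|_2^2/(2v)}.
\]
Applying this to both numerator and denominator and using the quotient rule gives $\nabla_zG=v^{-1}\{\E[h(U)U]-\E[h(U)]\E[U]\}=v^{-1}\Cov(h(U),U)$.

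For the uniform bound, write
\[
\|\Delta_{t,\ell}\|_2\leq m_t^{-1}h_{\min}^{-1}v^{-1}\|\Cov(h(U),U)\|_2 .
\]
I would use two estimates on the covariance.

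\emph{Crude estimate.} Since $U\in K$ and $h$ is bounded, $\|\Cov\|_2\leq C(D,h_{\max})$. This gives $\|\Delta\|_2\lesssim m_t^{-1}v^{-1}=m_t/\sigma_t^2$, which is bounded once $t$ is bounded away from $0$.

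\emph{Small-$v$ estimate.} For any fixed point $c$,
\[
\Cov(h(U),U)=\E[(h(U)-h(c))(U-\E U)],
\]
so $\|\Cov\|_2\leq 2\operatorname{Lip}(h)\,\E_{\bar\pi_{z,v}}\|U-c\|_2^2$. Choose $c=\operatorname{clip}(z)$, the Euclidean projection of $z$ onto the cube. It then suffices to show $\E_{\bar\pi_{z,v}}\|U-c\|_2^2\leq C v$ uniformly in $z\in\R^D$ and $v>0$. Given this, $\|\Delta\|_2\lesssim m_t^{-1}$, which is bounded for bounded $t$.

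Combining the two estimates gives $\|\Delta_{t,\ell}\|_2\lesssim\min\{m_t^{-1},m_t/\sigma_t^2\}$. This is bounded uniformly in $t>0$ because $\underline\beta\leq\beta_t\leq\overline\beta$ controls $m_t$ and $\sigma_t$.

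The main obstacle is the uniform second-moment bound $\E_{\bar\pi_{z,v}}\|U-c\|_2^2\leq Cv$, especially for $z$ far outside $K$ and for $v\to0$. My approach exploits the product structure of the cube and the Gaussian. Because $c_r\leq r\leq C_r$, the posterior density is within a factor $C_r/c_r$ of the product measure $\bigotimes_j\nu_j$. Here each $\nu_j$ is the one-dimensional Gaussian $N(z_j,v)$ truncated to $[-1,1]$, and the normalizing constants are comparable by the same factor. Therefore, for nonnegative $f$,
\[
\E_{\bar\pi}f\leq (C_r/c_r)^2\,\E_{\otimes\nu_j}f.
\]
The problem then reduces to the one-dimensional claim $\E_{\nu_j}(U_j-c_j)^2\leq Cv$ with $c_j=\operatorname{clip}(z_j)$. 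For $z_j\in[-1,1]$ and small $v$, this follows from comparison with $N(z_j,v)$ restricted to the side of $[-1,1]$ containing at least half of its mass. For $z_j$ outside the interval, the truncated Gaussian near the endpoint is dominated by a half-Gaussian of scale $\sqrt v$, or by an exponential of even smaller scale $v/|z_j-c_j|$. For $v\geq1$, the bound is trivial since $(U_j-c_j)^2\leq4$.

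The constant obtained depends only on $D$, $c_r$, $C_r$, $h_{\min}$, $h_{\max}$, $\operatorname{Lip}(h)$ and the schedule bounds, as claimed.
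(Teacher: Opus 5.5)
Your proposal is correct, but your key covariance estimate takes a different route from the paper's. The paper views $N(z,vI_D)$ conditioned on the cube as strongly log-concave and applies the Brascamp--Lieb inequality, approximating the indicator by convex penalization, to get $\Var(g)\leq v\operatorname{Lip}(g)^2$. It transfers this to $\bar\pi_{z,v}$ with factor $C_r/c_r$ using bounded reweighting by $r$ and $\Var(g)=\inf_a\E(g-a)^2$. Cauchy--Schwarz then gives $|\Cov_{\bar\pi_{z,v}}(h(U),U_j)|\leq(C_r/c_r)v\operatorname{Lip}(h)$. You instead write $\Cov(h(U),U)=\E[(h(U)-h(c))(U-\E U)]$ with $c\in K$ the projection of $z$. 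This reduces everything to the concentration bound $\E_{\bar\pi_{z,v}}\|U-c\|_2^2\leq Cv$, which you prove by factorizing the truncated Gaussian over coordinates and treating one-dimensional truncated Gaussians. The paper's argument is shorter, does not use the product structure of $K$, and gives a dimension-free variance constant, but it imports a nontrivial functional inequality. Yours is fully elementary and shows explicitly that the posterior localizes at scale $\sqrt v$ around the nearest point of $K$, the mechanism behind Lemma~\ref{lem:boundary-cube-mass}. The cost is that it relies on $K$ being a box and its constant grows linearly in $D$. Your time split via $\min\{m_t^{-1},m_t/\sigma_t^2\}$ matches the paper's split at $t=1$. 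Two minor simplifications are available. First, the density comparison constant is $C_r/c_r$, not its square: the normalized ratio is $r(u)$ times a ratio of normalizers lying in $[C_r^{-1},c_r^{-1}]$. Second, your one-dimensional case analysis collapses to $\E_{\nu_j}(U_j-c_j)^2\leq v$ for all $z_j$ and all $v>0$. On each side of $c_j$, the law of $|U_j-c_j|$ has a nonincreasing density ratio relative to the half-normal of variance $v$, so its second moment is at most $v$.
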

\noindent\emph{Proof idea.}
We start from the variance bound $v$ for 1-Lipschitz functions under
a Gaussian conditioned on the cube and preserve this order when
reweighting by the bounded density $r$. We then apply Cauchy--Schwarz
to bound the covariance by $Cv$, cancelling $v^{-1}$; for times bounded
away from zero, we use bounded support to control posterior moments
directly.

\begin{proposition}[Joint regularity of the score offset]
\label{prop:offset}
In dimension $k$, let $\bar r$ and $h_\ell\bar r$ be densities on
$[-1,1]^k$. Suppose $\bar r$ and $h_\ell$ satisfy \(0<c_r\leq\bar r\leq C_r\) and \(0<h_{\min}\leq h_\ell\leq h_{\max}\) on \([-1,1]^k\), with fixed constants, and have fixed \(C^{r_{\mathrm{off},\ell}}([-1,1]^k)\) norms,
where $r_{\mathrm{off},\ell}=m_{\mathrm{off},\ell}+\alpha_{\mathrm{off},\ell}$,
$m_{\mathrm{off},\ell}\geq3$, and $\alpha_{\mathrm{off},\ell}\in(0,1]$.
Assume $h_\ell$ is constant on a fixed boundary collar and the schedule
has bounded derivatives through order $m_{\mathrm{off},\ell}+1$.
Uniformly over \eqref{eq:window}, on $Q_n$ (with spatial dimension $k$),
\begin{equation}
\begin{aligned}
\sup_{(x,t)\in Q_n}
\|\partial_t^b\partial_x^\gamma\Delta_{t,\ell}(x)\|_2
&\leq C\polylog(n),\\
|\gamma|_1+2b&\leq m_{\mathrm{off},\ell}-1.
\end{aligned}
\label{eq:main-mixed-correction-envelope}
\end{equation}
The same bound holds for the top-order spatial and temporal H\"older
seminorms of exponents $\alpha_{\mathrm{off},\ell}$ and $\alpha_{\mathrm{off},\ell}/2$.
After polylogarithmic normalization, the offset has a global Lipschitz,
depth-one representation of smoothness $(r_{\mathrm{off},\ell}-1)/2$.
\end{proposition}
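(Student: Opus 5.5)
We work in the rescaled coordinates of Proposition~\ref{prop:ratio-cancellation}, with $z=x/m_t$, $v=\sigma_t^2/m_t^2$ and $D=\bar r*\varphi_{vI_k}$. By \eqref{eq:general-ratio}, $\Delta_{t,\ell}(x)=m_t^{-1}\nabla_z\log G(z,v)$, where $G=N/D$ and $N=(h_\ell\bar r)*\varphi_{vI_k}$. Since $G\ge h_{\min}$, derivatives of $\log G$ are polynomials in $\partial^\beta G/G$. By the chain rule through $(x,t)\mapsto(z,v)$, whose derivatives are bounded because the schedule has bounded derivatives through order $m_{\mathrm{off},\ell}+1$, it suffices to bound $\partial_v^b\partial_z^\gamma G$ uniformly on the rescaled image of $Q_n$, with $|\gamma|_1+2b\le m_{\mathrm{off},\ell}$. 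Time derivatives convert to spatial ones by the heat equation: $\partial_vD=\tfrac12\Delta_zD$, and likewise for $N$. So it is enough to control $\partial_z^\gamma G$ for $|\gamma|_1\le m_{\mathrm{off},\ell}$, i.e.\ one extra spatial derivative beyond the gradient. The parabolic scaling $|\gamma|_1+2b$ in \eqref{eq:main-mixed-correction-envelope} is exactly what this substitution produces.

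\textbf{Key step: writing $G-$const as a smooth convolution.} Let $h_\ell\equiv c_\ell$ on the boundary collar. Then $N-c_\ell D=((h_\ell-c_\ell)\bar r)*\varphi_v$, and $(h_\ell-c_\ell)\bar r$ is compactly supported inside the open cube and lies in $C^{r_{\mathrm{off},\ell}}$ of the whole space. Hence
\[
G=c_\ell+\frac{\phi*\varphi_v}{\bar r*\varphi_v},\qquad \phi:=(h_\ell-c_\ell)\bar r\in C_c^{r_{\mathrm{off},\ell}}(\R^k).
\]
Derivatives of the numerator fall on $\phi$, so $\|\partial^\gamma(\phi*\varphi_v)\|_\infty\le\|\phi\|_{C^{|\gamma|}}$ uniformly in $v$, with H\"older seminorms also preserved. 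The difficulty is the denominator $D$, whose derivatives can blow up like $v^{-|\gamma|/2}$ near $\partial[-1,1]^k$. We split into two regions:
\begin{itemize}
\item \emph{Far region,} $\operatorname{dist}(z,\operatorname{supp}\phi)\ge c$ with $v$ small. The numerator is $O(e^{-c^2/(4v)})$ together with all its derivatives. We compare this with $D\gtrsim e^{-\operatorname{dist}(z,K)^2/(2v)}v^{k/2}$ and with the derivatives $\partial^\gamma D/D$, which are bounded by moments of the posterior $\bar\pi_{z,v}$: $v^{-|\gamma|}$ times posterior moments of $(U-z)$. On $Q_n$, $|z|\lesssim R_n\asymp\sqrt{\log n}$. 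Those moments are polylog, and the exponentially small numerator dominates any $v^{-m}$ loss, provided the collar distance exceeds the distance of $z$ to $K$ in the relevant regime. For $z$ far outside $K$, the posterior concentrates on the face, which is inside the collar where $h_\ell$ is constant, and there $G-c_\ell$ is exponentially small relative to $D$.
\item \emph{Near region,} $z$ within the collar neighborhood of $\operatorname{supp}\phi$. Here $z$ is at distance $\ge c'$ from $\partial K$, so $D$ is smooth with derivatives bounded by $\|\bar r\|_{C^{m}}$, up to exponentially small boundary contributions, and $D\ge c_r/2$. The quotient rule then gives the bounds.
\end{itemize}
For $v$ bounded below, everything is smooth with Gaussian-moment bounds of polynomial size in $R_n$, which yields the polylog factor. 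The factor $m_t^{-1}$ and the time chain rule contribute only constants.

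\textbf{Remaining steps and main obstacle.} Top-order H\"older seminorms follow in the same way: spatially from $\phi\in C^{r_{\mathrm{off},\ell}}$, and temporally with exponent $\alpha/2$ via the heat semigroup estimate $\|\varphi_v*f-\varphi_{v'}*f\|\lesssim|v-v'|^{\alpha/2}[f]_\alpha$. Counting one derivative lost to the gradient, $\Delta$ has parabolic regularity $r_{\mathrm{off},\ell}-1$ in $(x,t)$, hence isotropic joint H\"older smoothness $(r_{\mathrm{off},\ell}-1)/2$. Dividing by a $\polylog(n)$ envelope yields a bounded, $C$-Lipschitz, depth-one constituent on $Q_n$. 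A global Lipschitz extension by clipping and Whitney extension off $Q_n$ then gives the representation of Definition~\ref{def:general-hierarchy}. I expect the main obstacle to be the uniform control of $\partial^\gamma D/D$ and of the ratio in the transition zone where $z$ lies just outside $K$ and $v\to0$. There, one must show precisely that the collar makes $(\phi*\varphi_v)/D$ and its derivatives exponentially small, not merely bounded. I would first attempt this with a one-sided Laplace-type lower bound on $D$ and posterior tail bounds, reusing the Gaussian-on-cube estimates from the proof of Proposition~\ref{prop:ratio-cancellation}.
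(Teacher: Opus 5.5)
Your overall architecture is the paper's. It has the collar decomposition $G=c_\ell+(\phi*\varphi_{vI_k})/D$ with $\operatorname{supp}\phi\subset K^\circ=[-1+a_0,1-a_0]^k$, and an interior zone where $D$ is bounded below and derivatives fall on a smooth extension of $\bar r$ up to exponentially small errors. It also has an exterior zone handled by posterior localization, the heat identity for $v$-derivatives, semigroup increments for the H\"older parts, and the parabolic count giving smoothness $(r_{\mathrm{off},\ell}-1)/2$.

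\textbf{The exterior estimate.} The step you defer is the one the proof rests on, and your sketch of it does not close. In the far region you bound the numerator by $e^{-c^2/(4v)}$, using only a fixed distance from $z$ to $\operatorname{supp}\phi$. Meanwhile $D$ can be as small as roughly $e^{-\operatorname{dist}(z,K)^2/(2v)}$. On $Q_n$ at small $t$, $\operatorname{dist}(z,K)$ ranges up to order $R_n\asymp\sqrt{\log n}$. So your comparison fails once $z$ is a fixed distance outside $K$. Just outside $K$, where you locate the difficulty, it actually works.

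The missing idea is to compare squared distances. Since $\operatorname{supp}\phi\subset K^\circ$, coordinatewise projection gives $\operatorname{dist}(z,K^\circ)^2-\operatorname{dist}(z,K)^2\ge c\{a_0^2+a_0\operatorname{dist}(z,K)\}$ whenever $\operatorname{dist}(z,K^\circ)\ge a_0/4$. Combine this with an inward-box lower bound for $D$ at the nearest point of $K$. This gives $\bar\pi_{z,v}(K^\circ)\le C\exp\{-c(a_0^2+a_0\operatorname{dist}(z,K))/v\}$, which is Lemma~\ref{lem:boundary-cube-mass}. The exponent improves with distance, so it absorbs the weights $v^{-M}(1+\operatorname{dist}(z,K))^M$ that bound $\partial^\lambda D/D$ and $\partial^\lambda N/D$. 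The Leibniz recursion for $N/D$ then makes every positive-order derivative of $\log G$ exponentially small, as in Proposition~\ref{prop:collar-cancellation}. The qualitative statement that ``the posterior concentrates on the face'' is not enough, because the derivatives carry these growing weights.

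\textbf{The large-time regime.} This part also needs repair. Since $\Tmax\asymp\log n$, both $m_t^{-1}$ and $\|z\|_2=\|x\|_2/m_t$ are polynomial in $n$ near the top of the window. So the claim that $m_t^{-1}$ and the chain rule contribute only constants is false. Likewise, posterior moments of $U-z$ are not polylogarithmic in the $(z,v)$ coordinates. Polylogarithmic bounds survive only through cancellation against the $v^{-1}$ factors of the kernel.

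The paper makes this cancellation explicit. It writes $\Delta^{(j)}_{t,\ell}(x)=\lambda_t\partial_{\theta_j}B(\lambda_tx,\kappa_t)$, with $\lambda_t=m_t/\sigma_t^2$ and $\kappa_t=m_t^2/(2\sigma_t^2)$. It then uses the weighted bound of Lemma~\ref{lem:regularity-weighted-ratio}, which for $t\ge\tau$ is uniform in $x$ and polynomial in $\sigma_\tau^{-1}$.

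\textbf{Two smaller slips.} First, $G$ does not itself solve the heat equation. The reduction should therefore be to derivatives of $N$ and $D$ separately, which is what you in fact use later. Second, clipping destroys $C^q$ regularity for $q>1$. The global constituent should instead come from a reflection or Whitney extension with constants uniform over rectangles whose sides are bounded below, followed by a fixed-width cutoff.
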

\noindent\emph{Proof idea.}
We split space into two regions to control the derivatives. In the
interior, we use Gaussian smoothing and integration by parts to transfer
derivatives to the smooth densities and ratios. Through the heat identity
\begin{equation}
\partial_v(f*\varphi_{vI_k})=\tfrac12\Delta_z(f*\varphi_{vI_k}),
\label{eq:main-heat-identity}
\end{equation}
we express one variance derivative through two spatial derivatives.
Near and outside the boundary, we write $h_\ell=c+\psi$: the collar
condition keeps the support of $\psi$ a fixed distance inside the cube.
Its Gaussian contribution is exponentially small relative to the pooled
convolution, absorbing inverse powers of the noise scale.
Combining these two estimates, we control offset derivatives uniformly
as $t$ decreases.
Appendix~\ref{app:offset} supplies the bounds and extension argument.

\subsection{Accuracy of the generated distribution}
\label{sec:distribution}

We further translate the score estimation error rate into a distributional error bound, viewing the diffusion model as a density estimator.
For group $\ell$, we write $\gamma_d=N(0,I_d)$ and denote by
$\widehat P_\ell:=\widehat P_{\Tmin,\ell}$ the law at reverse time
$\Tmax-\Tmin$ of the continuous-time process
\begin{equation}
\begin{aligned}
\dd\widehat Y_{u,\ell}
&=\beta_{\Tmax-u}\left\{\widehat Y_{u,\ell}
+2\widehat s_\ell(\widehat Y_{u,\ell},\Tmax-u)\right\}\dd u+\sqrt{2\beta_{\Tmax-u}}\dd\overline B_u,\\
\widehat Y_{0,\ell}&\sim\gamma_d.
\end{aligned}
\label{eq:reverse-sde}
\end{equation}

We write $P_{t,\ell}$ for the law with density $p_{t,\ell}$. Under the
reverse-SDE localization conditions in Proposition~\ref{prop:tv},
we apply the decomposition of \citet[Appendix~D.1, Eq.~(78)]{oko2023}
to obtain
\begin{equation}
\begin{aligned}
\TV(P_{0,\ell},\widehat P_\ell)
\leq{}&\underbrace{\TV(P_{0,\ell},P_{\Tmin,\ell})}_{\text{early stopping}}+\underbrace{\TV(P_{\Tmax,\ell},\gamma_d)}_{\text{Gaussian initialization}}+\underbrace{\left\{\frac{\overline\beta}{2}
\calR^{\rm int}_\ell(\widehat s_\ell)\right\}^{1/2}}_{\text{score estimation}}.
\end{aligned}
\label{eq:tv-three-regimes}
\end{equation}

\begin{corollary}[Distributional rate for the two-stage estimator]
\label{cor:tv-rate}
Under Theorem~\ref{thm:mixed-main}, suppose the reverse-SDE localization
conditions hold and the zero extension of $p_{0,\ell}$ has uniformly
bounded variation. We choose admissible cutoffs so that
\[
\sqrt{\Tmin}+e^{-\underline\beta\Tmax}
\lesssim\left(n^{-2s_0/(2s_0+d)}+n_\ell^{-\alpha_\Delta}\right)^{1/2}.
\]
For any fixed $b>0$, take $\delta=n^{-b}/2$ in
Theorem~\ref{thm:mixed-main}. Then, with probability at least $1-n^{-b}$,
\begin{equation}
\TV(P_{0,\ell},\widehat P_\ell)
\leq C\polylog(n)\bigl[n^{-s_0/(2s_0+d)}
+n_\ell^{-\alpha_\Delta/2}\bigr].
\label{eq:tv-rate}
\end{equation}
\end{corollary}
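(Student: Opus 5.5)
The plan is to start from the three-term decomposition \eqref{eq:tv-three-regimes}, which holds under the reverse-SDE localization conditions of Proposition~\ref{prop:tv}, and bound each term in turn. The score-estimation term is handled directly by Theorem~\ref{thm:mixed-main} with $\delta=n^{-b}/2$. On that event, whose probability is at least $1-2\delta=1-n^{-b}$, we have $\calR^{\rm int}_\ell(\widehat s_\ell)\leq C\polylog(n/\delta)[\bar\kappa n^{-2s_0/(2s_0+d)}+n_\ell^{-\alpha_\Delta}]$. Since $\log(1/\delta)=b\log n+\log 2$, the factor $\polylog(n/\delta)$ is a $\polylog(n)$. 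Taking square roots and using $\sqrt{a+b}\leq\sqrt a+\sqrt b$ gives the right-hand side of \eqref{eq:tv-rate} for this term, with $\overline\beta/2$ and $\bar\kappa$ absorbed into $C$.

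Next comes the Gaussian-initialization term. For the OU process $P_{\Tmax,\ell}$ is the law of $m_{\Tmax}X_0+\sigma_{\Tmax}\varepsilon$ with $X_0$ supported in $[-1,1]^d$ (because $p_{0,\ell}\leq\bar\kappa p_0$ and $p_0$ is supported on the cube). Convexity of KL gives $\KL(N(m_{\Tmax}x_0,\sigma_{\Tmax}^2I)\,\|\,\gamma_d)\lesssim d\,m_{\Tmax}^2$ uniformly over $\|x_0\|_\infty\leq1$. Pinsker's inequality then yields $\TV(P_{\Tmax,\ell},\gamma_d)\lesssim\sqrt d\,m_{\Tmax}\leq\sqrt d\,e^{-\underline\beta\Tmax}$.

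The early-stopping term $\TV(P_{0,\ell},P_{\Tmin,\ell})$ is where I expect the main obstacle, because $p_{0,\ell}$ need not be smooth and may jump at the boundary of the cube. I would write $p_{\Tmin,\ell}(x)=m^{-d}(p_{0,\ell}*\varphi_{v})(x/m)$ with $m=m_{\Tmin}$ and $v=\sigma_{\Tmin}^2/m^2$, and split the error into two parts. The first is the Gaussian smoothing error $\|p_{0,\ell}*\varphi_v-p_{0,\ell}\|_{L^1}$. The standard bound $\|f*\varphi_v-f\|_{L^1}\leq\sqrt v\,\E\|\varepsilon\|_2\,|Df|(\R^d)$ for BV functions controls it by $\sqrt{d}\,\sigma_{\Tmin}\,\mathrm{TV}$-seminorm; this is exactly where the uniform bounded-variation assumption on the zero extension is used. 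The second part is the dilation error $\|m^{-d}f(\cdot/m)-f\|_{L^1}$ for $f=p_{0,\ell}$. Its BV analogue, obtained by writing the dilation as a flow along $x\mapsto e^{s}x$ and integrating $|Df|$ against $|x|$ on the bounded support, gives $\lesssim|1-m|(\sqrt d\,|Df|(\R^d)+d)$. Since $\sigma_{\Tmin}^2\asymp\Tmin$ and $1-m_{\Tmin}\lesssim\Tmin$ for $\Tmin\leq t_0$, the early-stopping term is $\lesssim\sqrt{\Tmin}$. Halving TV to half the $L^1$ distance only changes constants.

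Finally, the cutoff condition in the corollary says that $\sqrt{\Tmin}+e^{-\underline\beta\Tmax}$ is at most the square root of the score rate. Such a choice is admissible within \eqref{eq:window}: one can take $\Tmin$ polynomial in $n$ and $\Tmax\asymp\log n$, and I would check compatibility with the learning interval required in Theorem~\ref{thm:mixed-main} and Lemma~\ref{lem:proof-oko-input}. Under this choice the first two terms are dominated by $n^{-s_0/(2s_0+d)}+n_\ell^{-\alpha_\Delta/2}$. Summing the three bounds on the probability-$(1-n^{-b})$ event then gives \eqref{eq:tv-rate}.
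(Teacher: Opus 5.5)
Your proposal is correct, and its architecture matches the paper's proof. It uses the three-term decomposition \eqref{eq:tv-three-regimes}, Theorem~\ref{thm:mixed-main} at $\delta=n^{-b}/2$ for the score term, and the smoothing-plus-dilation BV argument for early stopping. That last argument is exactly Lemma~\ref{lem:app-bv}. One small slip: your smoothing bound should carry $\sqrt v=\sigma_{\Tmin}/m_{\Tmin}$ rather than $\sigma_{\Tmin}$, which is harmless because $m_{\Tmin}\geq e^{-\overline\beta t_0}$.

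The one genuine difference is the Gaussian-initialization term.

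\emph{The paper's route.} It first bounds $\KL(P_{0,\ell}\|\gamma_d)\leq\log(\bar\kappa C_0)+\frac d2\log(2\pi)+\frac d2$, using the density upper bound and compact support. It then applies the Gaussian log-Sobolev inequality to get entropy decay $\KL(P_{\Tmax,\ell}\|\gamma_d)\leq m_{\Tmax}^2\KL(P_{0,\ell}\|\gamma_d)$ (Lemma~\ref{lem:app-terminal}), followed by Pinsker.

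\emph{Your route.} You view $P_{\Tmax,\ell}$ as a mixture of $N(m_{\Tmax}x_0,\sigma_{\Tmax}^2I_d)$, use convexity of KL in its first argument, and compute the explicit Gaussian divergence. This gives $\KL\leq\frac12 m_{\Tmax}^2\{\|x_0\|_2^2+d\,m_{\Tmax}^2/(2(1-m_{\Tmax}^2))\}\lesssim d\,m_{\Tmax}^2$, provided $m_{\Tmax}$ is bounded away from one. That is guaranteed here by $\Tmax\geq c_0$.

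\emph{What each buys.} Your argument is more elementary and needs only a second-moment bound on $X_0$. It requires no density, no finite relative entropy, and no use of $p_{0,\ell}\leq\bar\kappa C_0$ for this term. Since finite $\KL(\cdot\|\gamma_d)$ already forces a finite second moment, your hypothesis is in fact the weaker one. The paper's semigroup argument reuses the density bound it already has and expresses the constant through $\sqrt{\KL(P_{0,\ell}\|\gamma_d)/2}$ rather than through $d$ and the support radius. Both routes give the same $Ce^{-\underline\beta\Tmax}$, so the final rate is unaffected.
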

\noindent\emph{Proof idea.}
We apply Girsanov's formula to control reverse-path relative entropy
by the integrated score error, then use Pinsker's inequality to obtain
TV error with half the rate exponents. We use bounded variation to
bound early-stopping error by $O(\sqrt{\Tmin})$ and OU convergence to
bound initialization error by $O(e^{-\underline\beta\Tmax})$.
By choosing cutoffs that keep both endpoint errors below the
score-estimation term, we obtain the pooled and offset rates for
generation. Appendices~\ref{app:tv}--\ref{app:tv-rate}
give the endpoint bounds and feasible cutoff choices.

\section{Experiments}
\label{sec:experiments}

We compare Offset with Pooled, Separate, output-layer Top FT~\citep{ren2026},
and an adaptation of classifier-guided TGDP~\citep{ouyang2024}, using the
baselines shown in each table. Within each experiment, methods share
data splits, diffusion schedules, and evaluation draws. Pooled, Separate,
and the narrower Offset use the same backbone family. Separate, Top FT,
and Offset have matching target-stage update counts; TGDP trains two
classifiers and a scalar potential. Appendix~\ref{app:experimental-protocols}
gives the data allocations, model sizes, and training budgets.

\subsection{Shared modes with shifted weights}
We vary the mixture weights of eight shared two-dimensional Gaussian components
to test whether the pooled model captures their common underlying geometry while the offset adjusts
their relative probability mass. With the pooled model fixed, increasing the target dataset size
improves offset estimation and broadens its performance advantage over Pooled (Table~\ref{tab:toy-results}).
Top FT has the lowest error among the methods at $n_\ell=20$.
In the no-shift control at $n_\ell=40$, Pooled outperforms Offset
($0.066$ versus $0.087$), illustrating the cost of adaptation when the
distributions coincide.

\begin{table}[!htbp]
\centering
\setlength{\belowcaptionskip}{6pt}
\caption{Eight-mode simulation: exact-score MSE averaged over ten seeds;
bold indicates the smallest unrounded mean.}
\label{tab:toy-results}
\setlength{\tabcolsep}{3pt}
\begin{tabular*}{\linewidth}{@{\extracolsep{\fill}}lrrrrr@{}}
\toprule
$n_\ell$ & Pooled & Top FT & TGDP & Separate & Offset \\
\midrule
20 & 0.333 & \textbf{0.136} & 0.150 & 0.227 & 0.172 \\
40 & 0.333 & 0.118 & 0.101 & 0.152 & \textbf{0.101} \\
80 & 0.333 & 0.105 & 0.095 & 0.120 & \textbf{0.077} \\
160 & 0.333 & 0.094 & 0.081 & 0.102 & \textbf{0.071} \\
320 & 0.333 & 0.090 & 0.078 & 0.087 & \textbf{0.056} \\
\bottomrule
\end{tabular*}
\end{table}

\subsection{A rough density with a stable offset}
\label{sec:rough-density-simulation}
To align the simulation with our theoretical analysis, we define on $[-1,1]^2$
\[
\begin{aligned}
p_0(x)&=\tfrac14[1+\tfrac12 r(x_1)b(x_2)],\\
p_{0,\ell}(x)&=p_0(x)[1+\tfrac12 A(x_1)b(x_2)].
\end{aligned}
\]
Here $r$ is a centered half-H\"older cusp supported near $-0.5$,
$A$ is a smooth odd tilt supported near $0.45$, and $b$ is a smooth
interior bump. The remaining group has density
$(p_0-\pi_\ell p_{0,\ell})/(1-\pi_\ell)$.
Separating the shared cusp from the tilt yields a rough pooled density
while preserving a stable depth-one $q=2$ offset in $(x_1,x_2,t)$; both densities
remain constant across a full boundary collar.

Under independent training stages and the learning window specified by the theory,
Offset outperforms Separate in all five seeds at each of the three target sample sizes. As $n_\ell$ increases
from 64 to 1,024, the mean effective-offset error decreases from $0.0257$ to $0.0055$, whereas
the pooled-fit error remains fixed. Performance differences relative to Pooled remain statistically inconclusive;
the empirical fits obtained via finite Adam runs serve to illustrate the construction, whereas the theorem
governs empirical minimizers over specified network function classes. Appendix~\ref{app:rough-density-protocol} reports
complete numerical results, and Proposition~\ref{prop:rough-density-witness} details the explicit construction.

\subsection{Image benchmarks}
The MNIST and Fashion-MNIST benchmarks incorporate contrast-based and class-specific target distributions
to evaluate controlled and natural distribution shifts under constrained training budgets.
Offset has the lowest mean DSM in all eight configurations
(Table~\ref{tab:image-results}) and the lowest loss in every paired seed
evaluation. Its mean loss decreases as the target sample size grows,
while the pooled fit remains fixed.

For contrast-based and truck-class targets on native CIFAR-10 RGB32
images~\citep{cifar2009,ho2020,edm2022}, Offset has the lowest mean DSM at
$n_\ell\in\{1024,2048,4096\}$ (Table~\ref{tab:cifar10-results}).
This range was selected after inspecting the initial results. Five
additional runs on a fixed full grid compare Offset with Pooled,
Separate, and TGDP, but not Top FT. They support the larger-sample
pattern, while Offset has a higher mean DSM than all three comparators
at $n_\ell=64$ and than Pooled and TGDP at $n_\ell=256$.
Appendix~\ref{app:cifar10-protocol} reports the full grid and uncertainty
for these two target settings.

\begin{table}[!htbp]
\centering
\small
\setlength{\tabcolsep}{2.3pt}
\setlength{\belowcaptionskip}{6pt}
\caption{MNIST/Fashion-MNIST ($16\times16$): mean DSM per pixel over five seeds.
The target classes are digit 8/sneaker; bold indicates the lowest mean.}
\label{tab:image-results}
\begin{tabular*}{\linewidth}{@{\extracolsep{\fill}}lrrrr@{}}
\toprule
& \multicolumn{2}{c}{Contrast} & \multicolumn{2}{c}{Class} \\
\cmidrule(lr){2-3}\cmidrule(lr){4-5}
Method/$n_\ell$ & 64 & 256 & 64 & 256 \\
\midrule
\multicolumn{5}{l}{MNIST} \\
Pooled & 0.1174 & 0.1174 & 0.1579 & 0.1579 \\
Top FT & 0.1117 & 0.1115 & 0.1548 & 0.1546 \\
TGDP & 0.1133 & 0.1133 & 0.1548 & 0.1552 \\
Separate & 0.1437 & 0.1388 & 0.1866 & 0.1787 \\
Offset & \textbf{0.1074} & \textbf{0.1070} & \textbf{0.1452} & \textbf{0.1426} \\
\midrule
\multicolumn{5}{l}{Fashion-MNIST} \\
Pooled & 0.1087 & 0.1087 & 0.1398 & 0.1398 \\
Top FT & 0.1050 & 0.1045 & 0.1324 & 0.1317 \\
TGDP & 0.1071 & 0.1071 & 0.1349 & 0.1346 \\
Separate & 0.1415 & 0.1374 & 0.1362 & 0.1298 \\
Offset & \textbf{0.0998} & \textbf{0.0984} & \textbf{0.1121} & \textbf{0.1099} \\
\bottomrule
\end{tabular*}
\end{table}

\begin{table}[!htbp]
\centering
\small
\setlength{\tabcolsep}{2.1pt}
\setlength{\belowcaptionskip}{6pt}
\caption{CIFAR-10 RGB32: mean DSM per RGB coordinate over five seeds;
bold indicates the smallest unrounded mean. These six conditions were
selected after inspecting the initial results.}
\label{tab:cifar10-results}
\begin{tabular*}{\linewidth}{@{\extracolsep{\fill}}lrrrrr@{}}
\toprule
Target/$n_\ell$ & Pooled & Top FT & TGDP & Separate & Offset \\
\midrule
Contrast/1024 & 0.0330 & 0.0323 & 0.0328 & 0.0372 & \textbf{0.0312} \\
Contrast/2048 & 0.0330 & 0.0321 & 0.0328 & 0.0371 & \textbf{0.0312} \\
Contrast/4096 & 0.0330 & 0.0324 & 0.0328 & 0.0375 & \textbf{0.0312} \\
Truck/1024 & 0.0619 & 0.0612 & 0.0618 & 0.0676 & \textbf{0.0601} \\
Truck/2048 & 0.0619 & 0.0611 & 0.0617 & 0.0670 & \textbf{0.0599} \\
Truck/4096 & 0.0619 & 0.0613 & 0.0617 & 0.0673 & \textbf{0.0598} \\
\bottomrule
\end{tabular*}
\end{table}

\section{Conclusion}
\label{sec:discussion}

Our conditional oracle bounds and small-time regularity analysis
yield convergence-rate bounds that separate pooled-score error
from offset-learning costs, identifying conditions for faster
upper-bound rates than separate fitting.

\paragraph{Limitations.}
Our theoretical analysis assumes fixed structural classes and decoupled training stages.
The derived rate comparisons rely on upper bounds; establishing corresponding lower bounds remains open
and would characterize the minimax limits of transfer learning in this context.
While we prove continuous-time generation guarantees, practical implementations introduce
discretization error during sampling. Our finite-budget empirical evaluations focus on score losses;
systematic sample-based evaluation and optimization analyses remain for future work.

Overall, our results provide a framework for understanding how
shared distributional complexity and simpler target-specific
structure can lead to improved convergence-rate bounds in
diffusion transfer.

\bibliographystyle{plainnat}
\bibliography{ref}

\clearpage
\appendix

\section{Notation and function classes}
\phantomsection
\label{app:notation}
Appendix~\ref{app:experimental-protocols} gives the experimental protocols.
Appendix~\ref{app:oracle} proves the oracle inequality, and
Appendices~\ref{app:stable-construction}--\ref{app:mixed-main}
derive the rates. Appendices~\ref{app:boundary-details} and
\ref{app:offset} establish offset regularity;
Appendix~\ref{app:rough-density-certificate} gives a compatible rough-density example.
Appendices~\ref{app:tv}--\ref{app:tv-rate} prove the distributional
guarantees, and Appendix~\ref{app:alternatives} treats alternative settings.

\paragraph{Indices, norms, and asymptotic notation.}
We write $[m]=\{1,\ldots,m\}$, $\mathbb N=\{1,2,\ldots\}$, and
$\mathbb N_0=\{0,1,\ldots\}$. For $v\in\R^D$,
$\|v\|_2=(\sum_jv_j^2)^{1/2}$ and $\|v\|_\infty=\max_j|v_j|$.
For a multi-index $\gamma\in\mathbb N_0^D$, write
$|\gamma|_1=\sum_j\gamma_j$ and
$\partial_x^\gamma=\prod_j\partial_{x_j}^{\gamma_j}$.
For $f:\R^d\times[\Tmin,\Tmax]\to\R^d$,
\[
\|f\|_{2,\infty}
=\sup_{(x,t)\in\R^d\times[\Tmin,\Tmax]}\|f(x,t)\|_2,
\qquad
\|f\|_{2,\infty;Q_n}=\sup_{(x,t)\in Q_n}\|f(x,t)\|_2.
\]
The entropy assumption uses the full-domain norm. Write
$\calN(\eta,\calF,\|\cdot\|)$ for the minimum size of an $\eta$-net.
Constants $C,c>0$ may change and depend only on fixed model parameters
unless stated otherwise. We write
$a_n\lesssim b_n$ for $a_n\leq Cb_n$ and $a_n\asymp b_n$ when both
directions hold. The notation $\polylog(u)$ denotes a fixed polynomial
in $\log u$, while $\widetilde O$ suppresses polylogarithmic factors.
For a scalar $a$, $a_+=\max\{a,0\}$, and
$\log_+(u)=\max\{\log u,0\}$ for $u>0$.

\begin{table}[htbp]
\centering
\setlength{\tabcolsep}{2.1pt}
\setlength{\belowcaptionskip}{6pt}
\caption{Principal notation.}
\label{tab:notation-guide}
\small
\begin{tabular}{p{0.26\linewidth}>{\raggedright\arraybackslash}p{0.65\linewidth}}
\toprule
Notation & Meaning \\
\midrule
$n,n_\ell$ & Pooled and independent target-group training sample sizes. \\
$\pi_\ell,p_0,p_{0,\ell}$ & Group prevalence, pooled density, and group density. \\
$m_t,\sigma_t^2$ & OU mean multiplier and noise variance. \\
$s_t,s_{t,\ell},\Delta_{t,\ell}$ & Pooled score, group score, and offset
$\Delta_{t,\ell}=s_{t,\ell}-s_t$. \\
$\widehat s,\widehat g_\ell,\widehat s_\ell$ & Pooled fit, offset fit, and
group estimator $\widehat s_\ell=\Pi(\widehat s+\widehat g_\ell)$. \\
$\xi_t(x,x_0)$ & Gaussian transition score defined in
\eqref{eq:transition-score}. \\
$Q_n,R_n,T_n$ & Effective box $[-R_n,R_n]^d\times[\Tmin,\Tmax]$,
spatial radius $R_n=1+C_R\sqrt{\log n}$, and interval length
$T_n=\Tmax-\Tmin$. \\
$K_n(t),\Pi_t$ & Clipping radius and projection onto the corresponding ball. \\
$\calR_\ell^{\rm int},\calD_{\ell,n_\ell}^{\rm int}$ & Integrated group score
risk and empirical denoising loss. \\
$r_{n,\ell}^{\rm int},\bar r_{n,\ell}$ & Pooled-score error weighted by
$p_{t,\ell}$, and its deterministic high-probability bound. \\
$\mathsf A_{n,\ell}^{\rm int},V_{n,\ell},A_{n,\ell}$ & Offset approximation
error, entropy coefficient, and entropy logarithm scale. \\
$(q,K),\mathbf M_\ell,\mathbf P_\ell$ & Constituent smoothness/arity pair,
coordinate depths, and coordinatewise sets of constituent pairs. \\
$a^\Delta_{n,\ell},\rho_m(\mathbf P_\ell),\alpha_\Delta$ & Offset
normalization envelope, structural rate, and exponent in
\eqref{eq:offset-exponent}. \\
\bottomrule
\end{tabular}
\end{table}

\paragraph{Smoothness and hierarchical composition.}
For $q=u+\alpha$ with $u\in\mathbb N_0$ and $\alpha\in(0,1]$,
$C^q=C^{u,\alpha}$ denotes the H\"older class with bounded derivatives
through order $u$ and $\alpha$-H\"older derivatives at order $u$.
Integer $q$ uses the convention $u=q-1$, $\alpha=1$.
Definition~\ref{def:general-hierarchy} also imposes global Lipschitz
regularity, including when $q<1$.

For the class in Definition~\ref{def:general-hierarchy}, define
\[
 \rho_m(\mathcal P):=\max_{(q,K)\in\mathcal P}m^{-2q/(2q+K)},
 \qquad
 \rho_m(\mathbf P):=\max_{j\in[d]}\rho_m(\mathcal P_j),
\]
ignoring identically zero coordinates and setting the rate to zero
for a class fixed to the zero function. For the target structure, the
coordinate rates are
$\rho_{n,j,\ell}:=\rho_{n_\ell}(\mathcal P_j^\ell)$, so that
$\rho_{n,\ell}=\max_j\rho_{n,j,\ell}$.

\paragraph{Restriction Besov spaces.}
For a domain $\Omega$, $B^s_{p,\theta}(\Omega)$ consists of restrictions
to $\Omega$ of functions in $B^s_{p,\theta}(\R^D)$; its restriction
norm is the infimum of the whole-space norms over extensions.
For an integer $M>s$, a whole-space norm is
\[
\|f\|_{L^p(\R^D)}+
\left(\int_0^1[u^{-s}\omega_M(f,u)_p]^\theta\,
\frac{\dd u}{u}\right)^{1/\theta},
\]
with the usual supremum for $\theta=\infty$. The modulus of smoothness is
\[
\omega_M(f,u)_p=\sup_{\|h\|_2\leq u}\|\mathsf D_h^Mf\|_{L^p},
\qquad
\mathsf D_h^Mf(x)=\sum_{j=0}^M(-1)^{M-j}\binom Mj f(x+jh).
\]
Restriction smoothness permits a jump in the zero extension.
Lemma~\ref{lem:proof-oko-input} states the additional boundary and
schedule conditions.

\paragraph{Probability distances and bounded variation.}
We use $\TV(P,Q)=\sup_A|P(A)-Q(A)|$, equal to
$\frac12\|p-q\|_1$ when densities exist, and
$\KL(P\|Q)=\int\log(\dd P/\dd Q)\dd P$ for $P\ll Q$, with value
$+\infty$ otherwise. A function $f\in L^1(\R^d)$ belongs to
$BV(\R^d)$ when its distributional gradient $Df$ is a finite
vector-valued measure; $\|f\|_{BV}=\|f\|_1+|Df|(\R^d)$.
The law $\gamma_d=N(0,I_d)$ is the standard Gaussian distribution.
\section{Experimental protocols}
\label{app:experimental-protocols}
\subsection{Eight-mode Gaussian mixture}
\label{app:simulation-protocol}
For $j=0,\ldots,7$, set $\theta_j=2\pi j/8$,
$\mu_j=2(\cos\theta_j,\sin\theta_j)$, and $\tau=0.22$.
The common and target densities are
\[
\begin{aligned}
p_{0,g}(x)&=\sum_{j=0}^7w_{g,j}\varphi_{\tau^2I_2}(x-\mu_j),\\
w_{\mathrm{common},j}&=\frac18,\qquad
w_{\mathrm{target},j}=
\frac{e^{2\cos\theta_j}}{\sum_{a=0}^7e^{2\cos\theta_a}},
\end{aligned}
\]
with centered Gaussian density $\varphi_\Sigma$. The pooled density is
$p_0=0.98p_{0,\mathrm{common}}+0.02p_{0,\mathrm{target}}$.
For OU corruption with $\beta_t=1$, let
$\upsilon_t^2=m_t^2\tau^2+\sigma_t^2$. The exact score is
\begin{equation}
\begin{aligned}
s_{t,g}(x)&=\sum_{j=0}^7\omega_{g,j}(x,t)
\frac{m_t\mu_j-x}{\upsilon_t^2},\\
\omega_{g,j}(x,t)&=
\frac{w_{g,j}\varphi_{\upsilon_t^2I_2}(x-m_t\mu_j)}
{\sum_{a=0}^7w_{g,a}\varphi_{\upsilon_t^2I_2}(x-m_t\mu_a)}.
\end{aligned}
\label{eq:toy-analytic-score}
\end{equation}
The no-shift control sets all weights to $1/8$; at $n_\ell=40$,
mean exact-score MSE is $0.066$ for Pooled and $0.087$ for Offset.

Ten runs use 6,000 pooled observations with 2\% targets.
Independent target sample sizes are $n_\ell\in\{20,40,80,160,320\}$.
Evaluation uses 4,096 independent points, uniform times in $[0.08,2]$,
and Gaussian noise. MSE sums coordinate errors.
Networks have two ReLU hidden layers of width 64 (Pooled/Separate)
or 32 (Offset), inputs $(x,t,\sin(kt),\cos(kt))$ for
$k\in\{1,2,4,8\}$, and zero-initialized outputs.
AdamW uses weight decay $0.01$ and batches $\min(256,n_\ell)$
with replacement; Table~\ref{tab:implementation-budgets} gives update counts.

\subsection{MNIST and Fashion-MNIST}
\label{app:small-image-protocol}
We use the official 60,000/10,000 training/test splits,\footnote{\raggedright Dataset sources:
\url{https://yann.lecun.org/exdb/mnist/index.html} and
\url{https://github.com/zalandoresearch/fashion-mnist}.\par}
scale images to $[-1,1]$, and resize to $16\times16$ by area interpolation.
Targets are contrast-transformed images, $x\mapsto-1+0.6(x+1)$,
or class-specific images (digit 8/sneaker).
Pools contain 6,000 images with 2\% targets; nested target sets of
64 and 256 have base-image identities disjoint from the pool.
Evaluation uses 512 official-test targets with one fixed time/noise draw
per image, shared across methods.
U-Net base widths are 8 (Pooled/Separate) and 4 (Offset);
training follows Appendix~\ref{app:implementation-summary}.

\subsection{TGDP adaptation}
\label{app:tgdp-protocol}
Our TGDP adaptation \citep{ouyang2024} uses balanced clean and
time-dependent logistic classifiers to estimate target-to-pooled density
ratios $\widehat r_0$ and $\widehat r_t$. For a scalar network $u_\psi$,
set $h_\psi(x,t)=\exp\{5\tanh(u_\psi(x,t)/5)\}$. With the pooled score
frozen, the guidance potential minimizes
\[
\begin{aligned}
\mathcal L_{\mathrm{TGDP}}(\psi)
={}&\E_{\mathrm{pooled}}\bigl[(h_\psi(X_t,t)-\widehat r_0(X_0))^2\bigr]\\
&+0.1\,\E_{\mathrm{target}}\bigl[(h_\psi(X_t,t)-\widehat r_t(X_t))^2\bigr]\\
&+\E_{\mathrm{target}}\Bigl[\bigl\|\widehat s(X_t,t)
  +\nabla_x\log h_\psi(X_t,t)
  +\varepsilon/\sigma_t\bigr\|_2^2\Bigr].
\end{aligned}
\]
In each expectation, $X_0$ is drawn uniformly with replacement from
the indicated training set, $t\sim\mathrm{Unif}[\Tmin,\Tmax]$,
$\varepsilon\sim N(0,I_d)$, and
$X_t=e^{-t}X_0+\sigma_t\varepsilon$.
Both classifier odds are clipped on the log scale to $[-5,5]$;
only the clean odds $\widehat r_0$ are normalized by their mean over
the pooled training set. The resulting score is
$\widehat s+\nabla_x\log h_\psi$.
Each stage uses AdamW with weight decay $0.01$ and the benchmark's
learning rate, batch size, and OU window
(Table~\ref{tab:implementation-budgets}).

\subsection{CIFAR-10 data, training, and evaluation}
\label{app:cifar10-protocol}
CIFAR-10 uses the official 50,000/10,000 training/test split at native
$32\times32$ RGB resolution, scaled to $[-1,1]$.\footnote{Dataset source:
\url{https://www.cs.toronto.edu/~kriz/cifar.html}.}
The contrast pool transforms 400 random images out of 20,000 by
$x\mapsto-1+0.6(x+1)$; the truck pool contains 400 trucks (class 9)
and 19,600 other images.
Nested adaptation sets of 64, 256, 1,024, 2,048, and 4,096 are disjoint
from the pool. Each target uses 1,000 official-test images:
transformed images or all test trucks.
U-Net base widths are 32 (Pooled/Separate) and 16 (Offset).

Table~\ref{tab:cifar10-results} reports the six larger-target conditions
selected after observing Offset's lowest mean.
For Table~\ref{tab:cifar10-confirmation}, both targets and all five sizes
were fixed before five additional runs comparing
Pooled, Separate, Offset, and TGDP.
Within each run, all sizes use the same pooled fit;
methods and sizes share evaluation images, times, and noise.

\begin{table}[!htbp]
\centering
\small
\setlength{\belowcaptionskip}{6pt}
\caption{Independent CIFAR-10 evaluation on the fixed target-size grid:
held-out DSM per RGB coordinate, mean $\pm$ SD over five seeds.}
\label{tab:cifar10-confirmation}
\setlength{\tabcolsep}{1.8pt}
\begin{tabular}{lrrrrr}
\toprule
Target & $n_\ell$ & Pooled & Offset & Separate & TGDP \\
\midrule
Contrast & 64 & $0.032883 \pm 0.005411$ & $0.044778 \pm 0.006910$ & $0.041606 \pm 0.007287$ & $0.032735 \pm 0.005450$ \\
Contrast & 256 & $0.032883 \pm 0.005411$ & $0.033341 \pm 0.005307$ & $0.037720 \pm 0.007181$ & $0.032709 \pm 0.005442$ \\
Contrast & 1024 & $0.032883 \pm 0.005411$ & $0.031150 \pm 0.005287$ & $0.037181 \pm 0.006231$ & $0.032711 \pm 0.005421$ \\
Contrast & 2048 & $0.032883 \pm 0.005411$ & $0.031136 \pm 0.005321$ & $0.037271 \pm 0.005668$ & $0.032716 \pm 0.005430$ \\
Contrast & 4096 & $0.032883 \pm 0.005411$ & $0.031094 \pm 0.005243$ & $0.037374 \pm 0.006194$ & $0.032704 \pm 0.005421$ \\
Truck & 64 & $0.061794 \pm 0.008361$ & $0.101629 \pm 0.013630$ & $0.089409 \pm 0.014032$ & $0.061708 \pm 0.008348$ \\
Truck & 256 & $0.061794 \pm 0.008361$ & $0.069769 \pm 0.009852$ & $0.068162 \pm 0.009917$ & $0.061702 \pm 0.008387$ \\
Truck & 1024 & $0.061794 \pm 0.008361$ & $0.059973 \pm 0.008182$ & $0.067855 \pm 0.008484$ & $0.061693 \pm 0.008372$ \\
Truck & 2048 & $0.061794 \pm 0.008361$ & $0.059715 \pm 0.008174$ & $0.067445 \pm 0.008993$ & $0.061695 \pm 0.008377$ \\
Truck & 4096 & $0.061794 \pm 0.008361$ & $0.059637 \pm 0.008186$ & $0.067224 \pm 0.009000$ & $0.061682 \pm 0.008364$ \\
\bottomrule
\end{tabular}
\end{table}

\subsection{Architectures, optimization, and loss normalization}
\label{app:implementation-summary}
Separate fits target data; Top FT updates Pooled's output layer.
Offset and TGDP freeze Pooled and fit a vector offset or two classifiers
and a scalar potential, respectively.

Image score and offset models are ReLU U-Nets with encoder widths
$(B,2B,4B)$, concatenated encoder--decoder skips, and group normalization
with at most eight groups. Each block receives sinusoidal time features.
TGDP uses three convolutional layers with ReLU and spatial averaging
for images, and two width-64 ReLU hidden layers for the Gaussian mixture.
Its time-dependent networks also receive sinusoidal time features.
Final U-Net and guidance-potential layers are zero-initialized.
Benchmark-specific base widths are given above; Tables~\ref{tab:implementation-architectures}
and~\ref{tab:implementation-budgets} give parameter counts and optimization settings.

The score and offset image models use AdamW with weight decay $10^{-4}$
and batches of at most 128 drawn with replacement; target-stage batches
have size $\min(128,n_\ell)$. Each update resamples uniform times in
$[0.05,2]$ and Gaussian noise.
We evaluate final iterates, with constant learning rates and no EMA.

\begin{table}[htbp]
\centering
\setlength{\belowcaptionskip}{6pt}
\caption{Trainable parameter counts. Offset and TGDP additionally use
a frozen pooled score; TGDP counts are per scalar network.}
\label{tab:implementation-architectures}
\small
\setlength{\tabcolsep}{4.5pt}
\begin{tabular}{lllr}
\toprule
Benchmark & Model/module & Architecture & Trainable parameters \\
\midrule
$16\times16$ images & Pooled / Separate & ReLU U-Net & 56,369 \\
$16\times16$ images & Top FT & Output convolution only & 73 \\
$16\times16$ images & Offset & ReLU U-Net & 18,073 \\
$16\times16$ images & TGDP (each scalar network) & Scalar network & 21,145 \\
CIFAR-10 RGB32 & Pooled / Separate & ReLU U-Net, 32 base channels & 800,035 \\
CIFAR-10 RGB32 & Top FT & Output convolution only & 867 \\
CIFAR-10 RGB32 & Offset & ReLU U-Net, 16 base channels & 206,787 \\
CIFAR-10 RGB32 & TGDP (each scalar network) & ReLU CNN, 24 base channels & 21,577 \\
\bottomrule
\end{tabular}
\end{table}

\begin{table}[htbp]
\centering
\setlength{\belowcaptionskip}{6pt}
\caption{Optimization settings. Target updates apply to each of Separate,
Top FT, and Offset. TGDP updates apply to each classifier and the potential.}
\label{tab:implementation-budgets}
\small
\setlength{\tabcolsep}{4pt}
\begin{tabular}{llrrrrr}
\toprule
Benchmark & Optimizer & Learning rate & Batch size & Pooled & Target & TGDP \\
\midrule
Eight-mode simulation & AdamW & $10^{-3}$ & $\leq256$ & 1,800 & 900 & 900 \\
$16\times16$ images & AdamW & $5\times10^{-4}$ & $\leq128$ & 400 & 250 & 1,000 \\
CIFAR-10 RGB32 & AdamW & $3\times10^{-4}$ & $\leq128$ & 12,000 & 3,000 & 3,000 \\
\bottomrule
\end{tabular}
\end{table}

\paragraph{Loss normalization.}
For $X_t=e^{-t}X_0+\sigma_t\varepsilon$, $\varepsilon\sim N(0,I_d)$,
DSM uses target $-\varepsilon/\sigma_t$.
By \eqref{eq:app-vincent}, the population image loss with uniform
times on $[a,b]$ is
\[
\operatorname{DSM}_{\rm avg}(f)
=\frac{\calR_\ell^{\rm int}(f)}{d(b-a)}+C_\ell,
\]
where $C_\ell$ is independent of $f$.
Thus paired DSM differences compare score accuracy.
These benchmarks use direct score outputs; the rough-density simulation
below applies radial projection.

\subsection{Rough-density simulation}
\label{app:rough-density-protocol}
We sample \eqref{eq:rough-density-witness} by rejection from uniform
$[-1,1]^2$ with density envelope $3/8$.
Each of five runs uses $n=20,000$ pooled points and nested target samples
$n_\ell\in\{64,256,1024\}$ independent of the pool.
Prevalence is $\pi=n_\ell/n$; pooled and target laws remain fixed.
Evaluation uses 256 independent target points and shared Gaussian noise.

The OU window is $[\Tmin,\Tmax]=[n^{-1/2},\log n]$.
Two-hidden-layer ReLU networks take $(x_1,x_2,t/\Tmax)$, with
widths 128 (Pooled/Separate; 17,282 parameters) and 32 (Offset; 1,250).
Direct scores include a fixed $-x$ skip; residual outputs start at zero.
Parameters are capped at $\pm n$ after each Adam update.
Scores are radially projected to radius $\sqrt{2}\sqrt{8\log n}/\sigma_t$;
Offset adds its residual to projected Pooled and projects again.
Adam uses learning rate $10^{-3}$, batch 256, and 2,000 pooled/1,000
target updates. Fresh log-uniform times and Gaussian noise receive
weight $t\log(\Tmax/\Tmin)$ for unbiased integrated DSM.
Offset and Separate share target draws.

Integrated risk is the ordinary-time integral of squared Euclidean
score error, without normalization by interval length. For the 129
geometrically spaced times
$t_j=\Tmin(\Tmax/\Tmin)^{j/128}$, $j=0,\ldots,128$,
let $\widehat e(t_j)$ denote the mean squared Euclidean score error;
exact scores use 192-point spatial quadrature. We apply the composite
trapezoidal rule in ordinary time:
\[
\widehat R=\sum_{j=0}^{127}\frac{t_{j+1}-t_j}{2}
\bigl[\widehat e(t_j)+\widehat e(t_{j+1})\bigr].
\]
Unlike the log-uniform training objective, evaluation uses no importance
weights or factor of $(\Tmax-\Tmin)^{-1}$.
Increasing quadrature order from 96 to 192 changes score coordinates
by at most $3.26\times10^{-7}$; increasing the time grid from 65 to 129
changes risks by at most $4.71\times10^{-4}$, preserving paired conclusions.

\begin{table}[!htbp]
\centering
\small
\setlength{\belowcaptionskip}{6pt}
\caption{Rough-density integrated target-score risk, mean $\pm$ SD over
five seeds. Each pooled fit is reused across sizes.}
\label{tab:rough-density-results}
\setlength{\tabcolsep}{4pt}
\begin{tabular}{lrrr}
\toprule
$n_\ell$ & Pooled & Offset & Separate \\
\midrule
64 & $0.097051 \pm 0.021734$ & $0.111041 \pm 0.017779$ & $0.179198 \pm 0.019641$ \\
256 & $0.097051 \pm 0.021734$ & $0.096461 \pm 0.016125$ & $0.155605 \pm 0.018744$ \\
1024 & $0.097051 \pm 0.021734$ & $0.093185 \pm 0.017008$ & $0.151830 \pm 0.024822$ \\
\bottomrule
\end{tabular}
\end{table}

\begin{table}[!htbp]
\centering
\small
\setlength{\belowcaptionskip}{6pt}
\caption{Offset minus control: mean paired differences and pointwise
95\% $t_4$ intervals. Negative values favor Offset.}
\label{tab:rough-density-paired}
\setlength{\tabcolsep}{3pt}
\begin{tabular}{llrr}
\toprule
$n_\ell$ & Control & Mean difference & 95\% interval \\
\midrule
64 & Pooled & $+0.013990$ & $[-0.007100,+0.035081]$ \\
64 & Separate & $-0.068156$ & $[-0.077845,-0.058468]$ \\
256 & Pooled & $-0.000589$ & $[-0.008517,+0.007339]$ \\
256 & Separate & $-0.059144$ & $[-0.069224,-0.049063]$ \\
1024 & Pooled & $-0.003866$ & $[-0.009816,+0.002085]$ \\
1024 & Separate & $-0.058645$ & $[-0.072907,-0.044383]$ \\
\bottomrule
\end{tabular}
\end{table}

Offset beats Separate in all 15 paired fits; the methods differ in capacity.
All intervals against Pooled include zero, although Offset wins each seed
at $n_\ell=1024$.
For the effective offset $\widehat g_\ell^{\rm eff}=\widehat s_\ell-\widehat s$,
including projection, integrated errors against $\Delta_\ell$ average
$0.025729$, $0.008807$, and $0.005504$ at $n_\ell=64,256,1024$.
The mean inherited error $\int\|\widehat s-s_t\|^2p_{t,\ell}\,dx\,dt$
is $0.096464$; target risk additionally includes offset error and their
cross term. The true offset's squared integrated signal is $0.0001514$.

\section{Proof of the two-stage oracle inequality}
\label{app:oracle}

We first control the clipped loss and its tails, then prove an
empirical-risk oracle inequality and insert the pooled-plus-offset
comparator. Throughout, we condition on $\widehat s$ and $n_\ell$;
the target observations remain i.i.d. from $p_{0,\ell}$ and the
candidate class is fixed. Write
\[
\ell_f(x,x_0,t)=\norm{f(x,t)-\xi_t(x,x_0)}_2^2,
\qquad \xi_t(x,x_0)=-\frac{x-m_tx_0}{\sigma_t^2},
\]
and let $D(f)=\int_{\Tmin}^{\Tmax}\E\ell_f(X_t,X_0,t)\dd t$,
$D_n(f)=\calD^{\rm int}_{\ell,n_\ell}(f)$ and
$R(f)=\calR^{\rm int}_\ell(f)$. Vincent's identity
$\E[\xi_t\mid X_t]=s_{t,\ell}(X_t)$~\citep{vincent2011} gives
\begin{equation}
D(f)-D(f')=R(f)-R(f')\qquad\text{for all }f,f'.
\label{eq:app-vincent}
\end{equation}

\subsection{Integrated envelopes, tails, and clipping}

\begin{lemma}[Integrated OU envelopes]
\label{lem:app-envelope}
There are constants $c_\beta,C_\beta>0$, depending only on
$(\underline\beta,\overline\beta)$, such that
\[
c_\beta(t\wedge1)\leq\sigma_t^2\leq C_\beta(t\wedge1).
\]
Consequently, under \eqref{eq:window},
\begin{align}
\Lambda_2&:=\int_{\Tmin}^{\Tmax}\sigma_t^{-2}\dd t
\leq C\{\log(1/\Tmin)+\Tmax\}\leq C\log n,
\label{eq:app-lambda2}\\
\Lambda_1&:=\int_{\Tmin}^{\Tmax}\sigma_t^{-1}\dd t
\leq C(1+\Tmax)\leq C\log n.
\label{eq:app-lambda1}
\end{align}
In particular,
\begin{equation}
\mathcal C_n:=4dC_n^2\Lambda_2\leq C d\log^2n.
\label{eq:app-Cn}
\end{equation}
\end{lemma}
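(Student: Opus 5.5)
The plan is to work directly from the closed form $\sigma_t^2=1-m_t^2$ with $m_t=\exp(-B_t)$ and $B_t:=\int_0^t\beta_u\dd u$. The two-sided schedule bound gives $\underline\beta t\leq B_t\leq\overline\beta t$, so
\[
1-e^{-2\underline\beta t}\leq\sigma_t^2\leq1-e^{-2\overline\beta t}.
\]
The envelope then follows from the elementary inequalities
\[
(1-e^{-c})\min\{u,1\}\leq1-e^{-cu}\leq c\min\{u,1\}\qquad(u\geq0,\ c>0),
\]
where the upper bound uses $1-e^{-cu}\leq\min\{cu,1\}\leq\max\{c,1\}\min\{u,1\}$ and the lower bound uses concavity of $u\mapsto1-e^{-cu}$ on $[0,1]$ together with monotonicity for $u\geq1$. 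We apply these with $c=2\underline\beta$ and $c=2\overline\beta$. This yields $c_\beta=1-e^{-2\underline\beta}$ and $C_\beta=\max\{2\overline\beta,1\}$, which depend only on $(\underline\beta,\overline\beta)$.

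For $\Lambda_2$, we split $[\Tmin,\Tmax]$ at $t=1$. On $[\Tmin,1]$ (empty if $\Tmin>1$), the lower envelope gives $\sigma_t^{-2}\leq c_\beta^{-1}t^{-1}$, which integrates to $c_\beta^{-1}\log(1/\Tmin)$. On $[1,\Tmax]$, $\sigma_t^{-2}\leq c_\beta^{-1}$, which integrates to at most $c_\beta^{-1}\Tmax$. This proves the first inequality in \eqref{eq:app-lambda2}. Under \eqref{eq:window}, $\Tmin\geq n^{-\kappa_0}$ gives $\log(1/\Tmin)\leq\kappa_0\log n$, and $\Tmax\asymp\log n$. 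Since $n\geq2$, $\log n$ is bounded below, so the two terms combine into $C\log n$. If $\Tmin\leq t_0$ exceeds $1$, only the second piece appears and the bound still holds.

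For $\Lambda_1$, the same split gives $\int_{\Tmin}^1c_\beta^{-1/2}t^{-1/2}\dd t\leq2c_\beta^{-1/2}$. This piece is bounded independently of $\Tmin$, which is why no logarithm of $\Tmin$ appears. The piece on $[1,\Tmax]$ is at most $c_\beta^{-1/2}\Tmax$, so $\Lambda_1\leq C(1+\Tmax)\leq C\log n$.

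Finally, $C_n^2=2(a+2)\log n$, so
\[
\mathcal C_n=4dC_n^2\Lambda_2\leq8(a+2)d\log n\cdot C\log n\lesssim d\log^2n,
\]
with $a$ a fixed constant. No step is a real obstacle. The only care needed is making the lower envelope uniform for large $t$, which $\min\{u,1\}$ handles, and noting that constants may depend on $\kappa_0$, $a$, and the implicit constant in $\Tmax\asymp\log n$.
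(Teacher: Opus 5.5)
Your proof is correct and is the paper's argument written out in full: the schedule bounds in $\sigma_t^2=1-e^{-2\int_0^t\beta_u\dd u}$ give $\sigma_t^2\asymp t\wedge1$, and splitting both integrals at $t=1$ yields the $\Lambda_1,\Lambda_2$ bounds and hence the bound on $\mathcal C_n$. One slip: your displayed upper bound $1-e^{-cu}\leq c\min\{u,1\}$ fails for $c<1$ and large $u$, so the constant there should be $\max\{c,1\}$, which is what your next sentence and your choice $C_\beta=\max\{2\overline\beta,1\}$ already use.
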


\begin{proof}
The schedule bounds and
$\sigma_t^2=1-\exp\{-2\int_0^t\beta_u\dd u\}$ give
$\sigma_t^2\asymp t$ for $t\leq1$ and $\sigma_t^2\asymp1$ for
$t\geq1$. Splitting each integral at one proves the bounds on
$\Lambda_1,\Lambda_2$; \eqref{eq:window} and
$C_n^2=2(a+2)\log n$ then give \eqref{eq:app-Cn}.
\end{proof}

We define the effective event at time $t$ by
\[
E_t:=\left\{\norm{(X_t-m_tX_0)/\sigma_t}_\infty\leq C_n\right\},
\qquad \ell_f^E:=\ell_f\one_{E_t}.
\]

\begin{lemma}[Truncated loss, Gaussian tails, and target clipping]
\label{lem:app-tail}
Let $\mathcal F_\ell(\widehat s)=
\{\Pi(\widehat s+g):g\in\calG_\ell\}$.  Uniformly over
$f\in\mathcal F_\ell(\widehat s)$,
\begin{equation}
0\leq\ell_f^E(x,x_0,t)\leq\frac{4dC_n^2}{\sigma_t^2},
\qquad
\int_{\Tmin}^{\Tmax}\E[\ell_f^E\mid X_0=x_0]\dd t\leq\mathcal C_n.
\label{eq:app-loss-envelope}
\end{equation}
Moreover, for a numerical constant $C$,
\begin{equation}
\tau^{\rm int}_n:=C d^2(1+C_n^2)\Lambda_2e^{-C_n^2/2}
\leq\operatorname{poly}(d,\log n)n^{-(a+2)}
\label{eq:app-tail-def}
\end{equation}
satisfies, both for the population loss and deterministically for the empirical loss,
\begin{equation}
\sup_{f\in\mathcal F_\ell(\widehat s)}
\int_{\Tmin}^{\Tmax}\E[\ell_f\one_{E_t^c}\mid X_0]\dd t\leq\tau^{\rm int}_n.
\label{eq:app-tail-loss}
\end{equation}
Finally,
\begin{equation}
\int_{\Tmin}^{\Tmax}\E_{p_{t,\ell}}
\norm{s_{t,\ell}(X_t)-\Pi_ts_{t,\ell}(X_t)}_2^2\dd t\leq\tau^{\rm int}_n.
\label{eq:app-target-clip}
\end{equation}
\end{lemma}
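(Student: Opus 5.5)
The argument has three parts: the bounds on the truncated loss, the Gaussian tail estimates, and the target clipping bound \eqref{eq:app-target-clip}.

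\emph{Truncated loss.} On $E_t$ we have $X_t-m_tX_0=\sigma_t\varepsilon$ with $\|\varepsilon\|_\infty\leq C_n$. Hence
\[
\|\xi_t(X_t,X_0)\|_2=\|\varepsilon\|_2/\sigma_t\leq\sqrt d\,C_n/\sigma_t=K_n(t).
\]
Every $f\in\mathcal F_\ell(\widehat s)$ is a projection onto $B_2(0,K_n(t))$, so $\|f\|_2\leq K_n(t)$. Therefore
\[
\ell_f^E\leq(2K_n(t))^2=4dC_n^2/\sigma_t^2 .
\]
Taking the conditional expectation, integrating over $[\Tmin,\Tmax]$, and applying Lemma~\ref{lem:app-envelope} gives the bound $4dC_n^2\Lambda_2=\mathcal C_n$. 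This holds pointwise in $x_0$.

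\emph{Tail term.} On $E_t^c$ I bound
\[
\ell_f\leq 2K_n(t)^2+2\|\varepsilon\|_2^2/\sigma_t^2\leq 2\sigma_t^{-2}\bigl(dC_n^2+\|\varepsilon\|_2^2\bigr).
\]
Conditional on $X_0$, $\varepsilon$ is exactly standard Gaussian, for any $x_0$. So it suffices to bound $\E[(dC_n^2+\|\varepsilon\|_2^2)\one\{\|\varepsilon\|_\infty>C_n\}]$. A union bound over coordinates handles this:
\begin{itemize}[leftmargin=1.5em]
\item $\Pp(|\varepsilon_j|>C_n)\leq 2e^{-C_n^2/2}$;
\item $\E[\varepsilon_j^2\one\{|\varepsilon_j|>C_n\}]\leq C(1+C_n)e^{-C_n^2/2}$, by Gaussian integration by parts;
\item for $i\neq j$, independence gives $\E[\varepsilon_i^2\one\{|\varepsilon_j|>C_n\}]\leq 2e^{-C_n^2/2}$.
\end{itemize}
Summing over $i,j$ yields $Cd^2(1+C_n^2)e^{-C_n^2/2}$. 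Integrating $\sigma_t^{-2}$ then gives $\tau_n^{\rm int}$. Since $e^{-C_n^2/2}=n^{-(a+2)}$, Lemma~\ref{lem:app-envelope} gives the polynomial form. The bound holds for each fixed $X_0$, so it applies to each empirical summand $\calD^{\rm int}_{\ell,i}$ deterministically, as well as to the population loss.

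\emph{Target clipping.} This is the main obstacle, because $s_{t,\ell}$ is not bounded pointwise by the transition score. I use the posterior representation $s_{t,\ell}(x)=\E[\xi_t(X_t,X_0)\mid X_t=x]$, from Vincent's identity. Since projection onto a convex set containing $0$ is 1-Lipschitz and fixes points of the ball, for any $v$
\[
\|v-\Pi_tv\|_2=(\|v\|_2-K_n(t))_+ .
\]
The map $v\mapsto(\|v\|_2-K_n(t))_+$ is convex, so Jensen's inequality gives
\[
(\|s_{t,\ell}\|_2-K_n(t))_+^2\leq\E\bigl[(\|\xi_t\|_2-K_n(t))_+^2\mid X_t\bigr].
\]
The right-hand side vanishes on $E_t$ and is at most $\|\xi_t\|_2^2=\|\varepsilon\|_2^2/\sigma_t^2$ off $E_t$. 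Taking expectations under $p_{t,\ell}$ (tower property) and integrating in $t$ reduces this to the same Gaussian tail computation as above, bounded by $\tau_n^{\rm int}$ after enlarging the numerical constant.

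The points needing care are: checking that $\Pi_t$ is applied with the same $K_n(t)$ that bounds $\xi_t$ on $E_t$; the convexity and Jensen step; and tracking the $d^2$ and $C_n^2$ factors.
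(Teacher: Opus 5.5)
Your proof is correct. The loss envelope and the Gaussian-tail bound \eqref{eq:app-tail-loss} follow the paper's argument: the pointwise bound $\ell_f\leq 2\sigma_t^{-2}(dC_n^2+\norm{\varepsilon}_2^2)$, then truncated Gaussian moments uniformly in $x_0$. Your coordinatewise union bound makes explicit what the paper only cites.

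You take a different route for the target-clipping bound \eqref{eq:app-target-clip}. The paper first bounds $\norm{s_{t,\ell}(X_t)}_2\leq\sqrt d\,V/\sigma_t$, where $V=\E[\norm{Z}_\infty\mid X_t]$ is a scalar. It then applies conditional Jensen to the exponential to get $\E e^{\lambda V}\leq 2de^{\lambda^2/2}$, and integrates the resulting sub-Gaussian tail to control $\frac{d}{\sigma_t^2}\E[V^2\one\{V>C_n\}]$. You instead apply conditional Jensen directly to the convex projection residual $\phi(v)=(\norm{v}_2-K_n(t))_+^2$, together with the pointwise bound $\phi(\xi_t)\leq\norm{\xi_t}_2^2\one_{E_t^c}$. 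This reduces the clipping error to the same truncated Gaussian moment you already computed for the loss tail. You avoid the MGF and Chernoff step and obtain the same $Cd^2(1+C_n^2)\sigma_t^{-2}e^{-C_n^2/2}$ order. In exchange, the paper's route yields a tail bound on the true score magnitude itself, which is a slightly more reusable statement.

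Three small precision points:
\begin{itemize}
\item Jensen needs convexity of $\phi$, not just of $(\norm{v}_2-K_n(t))_+$. It holds because $u\mapsto u^2$ is convex and nondecreasing on $[0,\infty)$.
\item It is the integrand $\phi(\xi_t)$, not the conditional expectation, that vanishes on $E_t$.
\item The identity $\norm{v-\Pi_tv}_2=(\norm{v}_2-K_n(t))_+$ follows from the explicit radial form of $\Pi_t$, not from nonexpansiveness.
\end{itemize}
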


\begin{proof}
Conditional on $X_0=x_0$, write $X_t=m_tx_0+\sigma_tZ$ with
$Z\sim N(0,I_d)$. On $E_t$, both $\norm f_2$ and
$\norm{\xi_t}_2$ are at most $K_n(t)$, proving
\eqref{eq:app-loss-envelope}. Everywhere,
$\ell_f\leq2\sigma_t^{-2}(dC_n^2+\norm Z_2^2)$, so Gaussian
tails and truncated second moments~\citep[Chapter~2]{boucheron2013} give
\[
\E[\ell_f\one_{E_t^c}\mid X_0=x_0]
\leq Cd^2(1+C_n^2)\sigma_t^{-2}e^{-C_n^2/2}.
\]
This is uniform in $x_0,f$. Integrating and averaging over either
the population or the observed clean samples proves
\eqref{eq:app-tail-loss}. Since $e^{-C_n^2/2}=n^{-(a+2)}$,
Lemma~\ref{lem:app-envelope} gives \eqref{eq:app-tail-def}.

For target clipping, put $V(X_t)=\E[\norm Z_\infty\mid X_t]$.
The denoising identity gives
$\norm{s_{t,\ell}(X_t)}_2\leq\sqrt dV(X_t)/\sigma_t$.
Conditional Jensen yields
$\E e^{\lambda V}\leq\E e^{\lambda\norm Z_\infty}
\leq2de^{\lambda^2/2}$ for $\lambda\geq0$.
The projection residual formula and integration of this tail bound imply
\[
\E\norm{s_{t,\ell}-\Pi_ts_{t,\ell}}_2^2
\leq\frac d{\sigma_t^2}\E[V^2\one\{V>C_n\}]
\leq Cd^2(1+C_n^2)\sigma_t^{-2}e^{-C_n^2/2}.
\]
Its time integral proves \eqref{eq:app-target-clip}.
\end{proof}

Let $D^E,D_n^E$ denote the losses formed from $\ell_f^E$.
Lemma~\ref{lem:app-tail} and \eqref{eq:app-vincent} give
\begin{align}
\sup_f\bigl(|D(f)-D^E(f)|\vee|D_n(f)-D_n^E(f)|\bigr)
&\leq\tau^{\rm int}_n,
\label{eq:app-tail-comparison}\\
\left|\{D^E(f)-D^E(f')\}-\{R(f)-R(f')\}\right|
&\leq2\tau^{\rm int}_n.
\label{eq:app-risk-comparison}
\end{align}

\subsection{A finite-net oracle argument}

\begin{lemma}[Clipped empirical-risk oracle bound]
\label{lem:app-generic-oracle}
For $0<\delta\leq1/2$, conditional on $\widehat s$ and $n_\ell$,
with probability at least $1-\delta$,
\begin{equation}
\calR^{\rm int}_\ell(\widehat s_\ell)
\leq C\left[
\inf_{f\in\mathcal F_\ell(\widehat s)}\calR^{\rm int}_\ell(f)
+\mathcal C_n
\frac{V_{n,\ell}\log(A_{n,\ell}n_\ell K_n^{\max})+\log(1/\delta)}{n_\ell}
+\tau^{\rm int}_n\right].
\label{eq:app-generic-oracle}
\end{equation}
\end{lemma}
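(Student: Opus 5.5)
The plan is the standard bounded-loss least-squares oracle argument, run on the truncated losses $D^E,D_n^E$ and then transferred back to $R$ through \eqref{eq:app-tail-comparison}--\eqref{eq:app-risk-comparison}. Fix a comparator $f^\ast\in\mathcal F_\ell(\widehat s)$, nearly attaining $\inf R$. Write the excess-loss functions
\[
h_f(x_0):=\int_{\Tmin}^{\Tmax}\E\bigl[\ell_f^E-\ell_{f^\ast}^E\mid X_0=x_0\bigr]\dd t .
\]
By \eqref{eq:app-loss-envelope}, $|h_f|\leq\mathcal C_n$ uniformly. Since $\widehat s_\ell$ minimizes $D_n$ over $\mathcal F_\ell(\widehat s)$, \eqref{eq:app-tail-comparison} gives
\[
D_n^E(\widehat s_\ell)-D_n^E(f^\ast)\leq2\tau^{\rm int}_n .
\]
It therefore suffices to show that, with probability $1-\delta$, uniformly over $f$,
\[
\E h_f\leq 2\bigl(\E_nh_f\bigr)+C\bigl(R(f^\ast)-R^\ast\bigr)+\text{(complexity)}.
\]
Here $R^\ast=0$, the risk of the true score. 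We then finish by \eqref{eq:app-risk-comparison} together with \eqref{eq:app-target-clip}, which says the clipped truth is within $\tau^{\rm int}_n$ of optimal.

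\textbf{Variance–mean (Bernstein) condition.} This is the key step. We need
\[
\Var h_f\lesssim\mathcal C_n\bigl(R(f)+R(f^\ast)+\tau^{\rm int}_n\bigr).
\]
Expanding, $\ell_f-\ell_{f^\ast}=\langle f-f^\ast,\,f+f^\ast-2\xi_t\rangle$. On $E_t$ the second factor has norm at most $4K_n(t)$. By Cauchy--Schwarz in $(t,\text{noise})$,
\[
h_f(x_0)^2\leq\Bigl(\int 16K_n(t)^2\dd t\Bigr)\int\E\bigl[\|f-f^\ast\|_2^2\mid x_0\bigr]\dd t ,
\]
and the first factor is exactly $\mathcal C_n$ up to constants. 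Taking expectations gives
\[
\E h_f^2\leq C\mathcal C_n\int\E\|f-f^\ast\|^2\leq 2C\mathcal C_n\bigl(R(f)+R(f^\ast)\bigr),
\]
using $\|f-f^\ast\|^2\leq2\|f-s_\ell\|^2+2\|f^\ast-s_\ell\|^2$. The inner expectation over $X_t$ given $x_0$ integrates to the $p_{t,\ell}$-weighted risk. Note the truncation indicator is harmless because it only decreases the integrand. This gives the Bernstein condition relative to $R$. We then rewrite $R(f)$ as $\E h_f+R(f^\ast)+O(\tau^{\rm int}_n)$ via \eqref{eq:app-risk-comparison}.

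\textbf{Finite net and union bound.} Conditional on $\widehat s$, take an $\eta$-net $\{g_k\}$ of $\calG_\ell$ in $\|\cdot\|_{2,\infty}$ with
\[
\log N\leq V_{n,\ell}\log(A_{n,\ell}/\eta).
\]
The map $g\mapsto\Pi(\widehat s+g)$ is 1-Lipschitz pointwise, because projection onto a ball is nonexpansive, so the images form an $\eta$-net of $\mathcal F_\ell(\widehat s)$. On $E_t$ the difference $|\ell_f^E-\ell_{f'}^E|$ is at most $4K_n(t)\|f-f'\|_{2,\infty}$. Hence both $D^E$ and $D^E_n$ change by at most
\[
4\eta\int K_n(t)\dd t\lesssim\eta\sqrt d\,C_n\Lambda_1\leq\eta\, n_\ell^{-1}\mathcal C_n
\]
once we choose $\eta=1/(n_\ell K_n^{\max})$. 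This choice produces the factor $\log(A_{n,\ell}n_\ell K_n^{\max})$. We apply Bernstein's inequality to each net element with $\delta/N$, using range $\mathcal C_n$ and the variance bound above. The resulting deviation term, of size
\[
\sqrt{\Var h_f\,\log(N/\delta)/n_\ell}+\mathcal C_n\log(N/\delta)/n_\ell,
\]
is then absorbed with $ab\leq\epsilon a^2+b^2/\epsilon$. This yields
\[
\E h_{\widehat f}\leq C\Bigl[R(f^\ast)+\mathcal C_n\frac{V_{n,\ell}\log(A_{n,\ell}n_\ell K_n^{\max})+\log(1/\delta)}{n_\ell}+\tau^{\rm int}_n\Bigr].
\]
Combining this with $R(\widehat s_\ell)=\E h_{\widehat f}+R(f^\ast)+O(\tau^{\rm int}_n)$ and taking the infimum over $f^\ast$ gives \eqref{eq:app-generic-oracle}.

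\textbf{Main obstacle.} The delicate point is the Bernstein step with the time-dependent clipping radius. The naive envelope $\sup_t K_n(t)^2\sim\sigma_-^{-2}$ would cost a polynomial factor. Instead, the Cauchy--Schwarz must be done inside the time integral so that only $\int K_n(t)^2\dd t=dC_n^2\Lambda_2$ appears; that quantity is $\mathcal C_n$, logarithmic by Lemma~\ref{lem:app-envelope}. A secondary point is measurability and conditioning: everything is conditional on $\widehat s$, which is independent of the target sample, so the net is fixed.
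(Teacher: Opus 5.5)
Your proposal is correct and follows essentially the paper's proof: the same truncation to $E_t$, the same net at scale $\eta=(n_\ell K_n^{\max})^{-1}$ transferred through nonexpansiveness of $\Pi_t$, the same Cauchy--Schwarz inside the time integral giving $\Var\lesssim\mathcal C_n\{R(f)+R(f^\ast)\}$, and Bernstein plus a union bound. The paper closes by excluding net elements with $R(f)\geq 8R(f^\dagger)+8\psi+C_1\tau^{\rm int}_n$ instead of your $ab\leq\epsilon a^2+b^2/\epsilon$ absorption, and your net-transfer bound has a stray factor of $\eta$; it should read $4\eta\sqrt d\,C_n\Lambda_1=4\sigma_-\Lambda_1/n_\ell\leq4\Lambda_2/n_\ell\leq\mathcal C_n/n_\ell$, using $\sigma_-\Lambda_1\leq\Lambda_2$.
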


\begin{proof}
Write $\mathcal F=\mathcal F_\ell(\widehat s)$ and set
\[
\psi=c_0\mathcal C_n
\frac{V_{n,\ell}\log(A_{n,\ell}n_\ell K_n^{\max})+\log(1/\delta)}{n_\ell},
\qquad \eta=(n_\ell K_n^{\max})^{-1},
\]
where $c_0$ is sufficiently large. Choose $f^\dagger\in\mathcal F$
with $R(f^\dagger)\leq\inf_{f\in\mathcal F}R(f)+\psi$.

\emph{Finite cover.}
Projection nonexpansiveness transfers an $\eta$-net of
$\calG_\ell$ to an $\eta$-net $\mathcal F_\eta$ of $\mathcal F$ with
\begin{equation}
\log|\mathcal F_\eta|\leq
V_{n,\ell}\log(A_{n,\ell}n_\ell K_n^{\max}).
\label{eq:app-net-size}
\end{equation}
For each $f$, choose a representative $\pi(f)$ in this net.
Expanding the squared losses on $E_t$ gives
\begin{equation}
|\ell_f^E-\ell_{f'}^E|
=|\langle f-f',f+f'-2\xi_t\rangle|\one_{E_t}
\leq\frac{4\sqrt dC_n}{\sigma_t}\norm{f-f'}_2.
\label{eq:app-loss-lip}
\end{equation}
After conditional integration and averaging,
\begin{equation}
|D^E(f)-D^E(\pi(f))|\vee|D_n^E(f)-D_n^E(\pi(f))|
\leq\zeta:=4\eta\sqrt dC_n\Lambda_1
\leq\frac{2\mathcal C_n}{n_\ell}.
\label{eq:app-net-transfer}
\end{equation}
The last bound uses $\sigma_-\Lambda_1\leq\Lambda_2$ and
$dC_n^2\geq1$. Since $\delta\leq1/2$, increasing $c_0$ ensures
$\zeta\leq\psi/2$.

\emph{Concentration.}
For $f\in\mathcal F_\eta$, define the i.i.d. variables
\[
W_{f,i}=\int_{\Tmin}^{\Tmax}
\E[\ell_f^E-\ell_{f^\dagger}^E\mid X_{\ell,i}]\dd t,
\qquad \mu_f=\E W_{f,i}=D^E(f)-D^E(f^\dagger).
\]
The losses here are evaluated at $(X_t,X_{\ell,i},t)$.
Lemma~\ref{lem:app-tail} gives $|W_{f,i}|\leq\mathcal C_n$.
To control their variance, put
$A_f(x_0,t)=\E[\norm{f(X_t,t)-f^\dagger(X_t,t)}_2^2\mid X_0=x_0]$.
Applying conditional and then temporal Cauchy--Schwarz to
\eqref{eq:app-loss-lip} yields
\[
W_{f,i}^2
\leq\left(\int_{\Tmin}^{\Tmax}4K_n(t)A_f(X_{\ell,i},t)^{1/2}\dd t\right)^2
\leq4\mathcal C_n\int_{\Tmin}^{\Tmax}A_f(X_{\ell,i},t)\dd t.
\]
Averaging and inserting $s_{t,\ell}$ between the candidates gives
\begin{align}
\Var(W_{f,i})
&\leq4\mathcal C_n\int_{\Tmin}^{\Tmax}
\E_{p_{t,\ell}}\norm{f-f^\dagger}_2^2\dd t\notag\\
&\leq8\mathcal C_n\{R(f)+R(f^\dagger)\}.
\label{eq:app-variance}
\end{align}
Consider a net element satisfying
\begin{equation}
R(f)\geq8R(f^\dagger)+8\psi+C_1\tau^{\rm int}_n,
\qquad C_1\geq16.
\label{eq:app-large-risk}
\end{equation}
Then \eqref{eq:app-risk-comparison} and \eqref{eq:app-variance}
give $\mu_f\geq3R(f)/4$ and
$\Var(W_{f,i})\leq9\mathcal C_nR(f)$.
Moreover, $2\tau_n^{\rm int}+\zeta\leq\mu_f/4$ and the centered
summands are bounded by $2\mathcal C_n$. Bernstein's
inequality~\citep[Chapter~2]{boucheron2013} therefore gives
\begin{equation}
\Pp\!\left(D_n^E(f)\leq D_n^E(f^\dagger)+2\tau^{\rm int}_n+\zeta\right)
\leq\exp\left\{-c\frac{n_\ell R(f)}{\mathcal C_n}\right\}.
\label{eq:app-bernstein}
\end{equation}
Since $R(f)\geq8\psi$, choosing $c_0$ sufficiently large and using
\eqref{eq:app-net-size} excludes every such comparison simultaneously
with probability at least $1-\delta$.

\emph{Transfer to the minimizer.}
On that event, empirical minimization,
\eqref{eq:app-tail-comparison}, and \eqref{eq:app-net-transfer} give
\[
D_n^E(\pi(\widehat s_\ell))
\leq D_n^E(f^\dagger)+2\tau_n^{\rm int}+\zeta.
\]
Thus $\pi(\widehat s_\ell)$ fails \eqref{eq:app-large-risk}.
The population versions of the same bounds give
$R(\widehat s_\ell)\leq R(\pi(\widehat s_\ell))+\zeta+2\tau_n^{\rm int}$,
and hence
\[
R(\widehat s_\ell)
\leq8R(f^\dagger)+8\psi+C\tau_n^{\rm int}+\zeta
\leq8\inf_{f\in\mathcal F}R(f)+C\psi+C\tau_n^{\rm int}.
\]
Substituting $\psi$ proves \eqref{eq:app-generic-oracle}.
\end{proof}

The proof uses only clipping, the entropy bound, and i.i.d. clean
observations. Lemmas~\ref{lem:app-tail} and
\ref{lem:app-generic-oracle} therefore also apply to any fixed clipped
score class, including the pooled class with sample size $n$.

\paragraph{Approximate minimization and sampled corruptions.}
An empirical tolerance $\varepsilon$ adds $C\varepsilon$ to
\eqref{eq:app-generic-oracle}: replace $\psi$ by $\psi+\varepsilon$
in the proof. In particular, on an event where
\[
\sup_{f\in\mathcal F}|D_n^{\rm MC}(f)-D_n(f)|\leq\varepsilon_{\rm MC},
\qquad D_n^{\rm MC}(\widetilde f)
\leq\inf_{f\in\mathcal F}D_n^{\rm MC}(f)+\varepsilon_{\rm opt},
\]
one has $D_n(\widetilde f)\leq\inf_{f\in\mathcal F}D_n(f)
+2\varepsilon_{\rm MC}+\varepsilon_{\rm opt}$.
Each stage may therefore carry this additional risk term; the pooled
stage's term transfers to the target law through overlap.

\begin{lemma}[Clipped two-stage comparator]
\label{lem:app-clipped-comparator}
On the first-stage event in Assumption~\ref{as:oracle},
\begin{equation}
\inf_{f\in\mathcal F_\ell(\widehat s)}\calR^{\rm int}_\ell(f)
\leq4\bar r_{n,\ell}+4\mathsf A^{\rm int}_{n,\ell}+2\tau^{\rm int}_n.
\label{eq:app-clipped-comparator}
\end{equation}
\end{lemma}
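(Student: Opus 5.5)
The plan is to bound the infimum by a single comparator. For any $g\in\calG_\ell$, I take $f_g=\Pi(\widehat s+g)\in\mathcal F_\ell(\widehat s)$ and bound $\calR^{\rm int}_\ell(f_g)$ by $4r^{\rm int}_{n,\ell}+4\int\!\!\int\norm{g-\Delta_{t,\ell}}_2^2p_{t,\ell}\,\dd x\,\dd t+2\tau^{\rm int}_n$. Taking the infimum over $g$, and using $r^{\rm int}_{n,\ell}\le\bar r_{n,\ell}$ on the first-stage event, then gives \eqref{eq:app-clipped-comparator}.

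First I handle the clipping. At each $(x,t)$, split
\[
f_g-s_{t,\ell}=\bigl(\Pi_t(\widehat s+g)-\Pi_t s_{t,\ell}\bigr)+\bigl(\Pi_t s_{t,\ell}-s_{t,\ell}\bigr).
\]
Apply $\norm{a+b}_2^2\le2\norm a_2^2+2\norm b_2^2$. Euclidean projection onto the closed ball $B_2(0,K_n(t))$ is nonexpansive, so the first piece is at most $\norm{\widehat s+g-s_{t,\ell}}_2^2$. After integrating against $p_{t,\ell}$ over $[\Tmin,\Tmax]$, the second piece contributes $2\int\E_{p_{t,\ell}}\norm{s_{t,\ell}-\Pi_ts_{t,\ell}}_2^2\,\dd t\le2\tau^{\rm int}_n$, by the target-clipping bound \eqref{eq:app-target-clip} of Lemma~\ref{lem:app-tail}.

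Next comes the comparator identity $\widehat s+g-s_{t,\ell}=(\widehat s-s_t)+(g-\Delta_{t,\ell})$, which holds because $s_{t,\ell}=s_t+\Delta_{t,\ell}$. A second use of $\norm{a+b}_2^2\le2\norm a_2^2+2\norm b_2^2$ bounds the unclipped error by $2\norm{\widehat s-s_t}_2^2+2\norm{g-\Delta_{t,\ell}}_2^2$. Integrating against $p_{t,\ell}$, and carrying along the earlier factor $2$, gives $4r^{\rm int}_{n,\ell}+4\int\!\!\int\norm{g-\Delta_{t,\ell}}_2^2p_{t,\ell}$. This is exactly the claimed bound for $f_g$.

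I do not expect a real obstacle. The one point to check is that the clipping step needs no separate Gaussian-tail estimate, since \eqref{eq:app-target-clip} already controls it. If $\mathsf A^{\rm int}_{n,\ell}$ is not attained, I would take an $\epsilon$-minimizing $g$ and let $\epsilon\to0$.
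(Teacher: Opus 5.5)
Your proposal is correct and follows the paper's proof essentially step for step. Nonexpansiveness of $\Pi_t$ together with the target-clipping bound \eqref{eq:app-target-clip} gives $\calR^{\rm int}_\ell(\Pi f)\le 2\calR^{\rm int}_\ell(f)+2\tau^{\rm int}_n$, and the comparator identity $\widehat s+g-s_{t,\ell}=(\widehat s-s_t)+(g-\Delta_{t,\ell})$ with a second factor of two yields the constants $4,4,2$ after the infimum over $g$ and the first-stage event $r^{\rm int}_{n,\ell}\le\bar r_{n,\ell}$.
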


\begin{proof}
For any field $f$, projection nonexpansiveness gives
\[
\norm{\Pi_tf-s_{t,\ell}}_2
\leq\norm{f-s_{t,\ell}}_2+\norm{\Pi_ts_{t,\ell}-s_{t,\ell}}_2.
\]
Squaring, integrating, and applying \eqref{eq:app-target-clip} yields
\begin{equation}
\calR_\ell^{\rm int}(\Pi f)
\leq2\calR_\ell^{\rm int}(f)+2\tau_n^{\rm int}.
\label{eq:app-projection-risk}
\end{equation}
For $f=\widehat s+g$, the identity
$f-s_\ell=(\widehat s-s)+(g-\Delta_\ell)$ similarly gives
\[
\calR_\ell^{\rm int}(f)
\leq2\bar r_{n,\ell}
+2\int_{\Tmin}^{\Tmax}\!\int\norm{g-\Delta_{t,\ell}}_2^2
p_{t,\ell}\dd x\dd t.
\]
Taking the infimum over $g\in\calG_\ell$ proves the result.
\end{proof}

\begin{proof}[Proof of Theorem~\ref{thm:oracle}]
Insert Lemma~\ref{lem:app-clipped-comparator} into
Lemma~\ref{lem:app-generic-oracle} and use
$\mathcal C_n\leq Cd\log^2n$ from Lemma~\ref{lem:app-envelope}.
Absorbing numerical constants and the Gaussian remainders into
$\tau_{n,\ell}^{\rm int}$ proves \eqref{eq:oracle} with the stated
conditional probability. Finally,
$\sigma_-^2\geq cn^{-\kappa_0}$ implies
$K_n^{\max}\leq C\sqrt{d\log n}\,n^{\kappa_0/2}$,
so the maximum clipping radius appears only logarithmically.
\end{proof}
\section{Details of boundary cancellation}
\label{app:boundary-details}

\begin{proposition}[Posterior identities and boundary growth]
\label{prop:score-boundary}
For either the pooled or a group distribution with finite second moment,
\begin{align}
s_t(x)&=-\frac{x}{\sigma_t^2}
+\frac{m_t}{\sigma_t^2}\E[X_0\mid X_t=x],\label{eq:score-posterior}\\
\nabla s_t(x)&=-\frac{I_d}{\sigma_t^2}
+\frac{m_t^2}{\sigma_t^4}\Cov(X_0\mid X_t=x),\label{eq:score-jacobian}
\end{align}
and $\E\|s_t(X_t)\|_2^2\leq d/\sigma_t^2$.
Suppose also that $p_0$ is supported on the closure of a bounded domain
$\Omega$, whose boundary is $C^1$ near $u_0$, with outward unit normal
$\nu(u_0)$. If $p_0(u)\to r_*>0$ as $\Omega\ni u\to u_0$, then
\begin{equation}
\sigma_t s_t(m_tu_0)\longrightarrow
-\sqrt{2/\pi}\,\nu(u_0)\qquad(t\downarrow0).
\label{eq:boundary-asymptotic}
\end{equation}
\end{proposition}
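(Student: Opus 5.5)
The first two identities follow from Tweedie-type differentiation of the Gaussian convolution. Write
\[
p_t(x)=\int q_t(x\mid x_0)\,p_0(\dd x_0),\qquad q_t(x\mid x_0)\propto\exp\{-\|x-m_tx_0\|_2^2/(2\sigma_t^2)\}.
\]
Differentiation under the integral is justified by the finite second moment and the Gaussian factor. It gives $\nabla p_t(x)=\int\xi_t(x,x_0)q_t(x\mid x_0)p_0(\dd x_0)$ with $\xi_t$ as in \eqref{eq:transition-score}. Dividing by $p_t$ yields \eqref{eq:score-posterior}, since the posterior of $X_0$ given $X_t=x$ has density proportional to $q_t(x\mid x_0)p_0(x_0)$.

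Differentiating \eqref{eq:score-posterior} once more uses
\[
\nabla_x\E[X_0\mid X_t=x]=\frac{m_t}{\sigma_t^2}\Cov(X_0\mid X_t=x).
\]
This holds because the posterior is an exponential family in $x$ with natural parameter $m_tx/\sigma_t^2$, and it gives \eqref{eq:score-jacobian}. For the moment bound, Vincent's identity gives $s_t(X_t)=\E[\xi_t(X_t,X_0)\mid X_t]$. Conditional Jensen then yields
\[
\E\|s_t(X_t)\|_2^2\leq\E\|\xi_t\|_2^2=\E\|Z\|_2^2/\sigma_t^2=d/\sigma_t^2 .
\]

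For the boundary asymptotic \eqref{eq:boundary-asymptotic}, evaluate at $x=m_tu_0$ and change variables $u=u_0+\sigma_tm_t^{-1}w$, i.e.\ blow up at the natural scale $\varepsilon_t:=\sigma_t/m_t\to0$. By \eqref{eq:score-posterior},
\[
\sigma_ts_t(m_tu_0)=\frac{m_t}{\sigma_t}\bigl(\E[X_0\mid X_t=m_tu_0]-u_0\bigr)=\E_t[W],
\]
where $W=(X_0-u_0)/\varepsilon_t$ under the posterior. The posterior density of $W$ is proportional to
\[
p_0(u_0+\varepsilon_tw)\,e^{-\|w\|_2^2/2}\,\one_{\Omega}(u_0+\varepsilon_tw).
\]
As $t\downarrow0$, the $C^1$ boundary implies that the rescaled domain $(\Omega-u_0)/\varepsilon_t$ converges locally, in the sense that indicators converge a.e., to the half-space $H=\{w:\langle w,\nu(u_0)\rangle<0\}$. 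The boundary limit of $p_0$ gives $p_0(u_0+\varepsilon_tw)\to r_*$ for $w\in H$.

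Dominated convergence then applies to both numerator and denominator. The dominating function is $C e^{-\|w\|_2^2/2}(1+\|w\|_2)$, restricted to $\|w\|_2\leq R/\varepsilon_t$, provided $p_0$ is bounded near $u_0$. Far-away contributions, from $\|u-u_0\|_2\geq\rho$, are $O(e^{-\rho^2/(2\varepsilon_t^2)})$ times $\varepsilon_t^{-d}\sup$ terms, using bounded support. They are negligible against a denominator of order $r_*/2$ times the Gaussian mass. Hence $\E_t[W]\to\E[G\mid G\in H]$ for $G\sim N(0,I_d)$. Decompose along $\nu$: the tangential components average to zero by symmetry. The normal component is $-\E[|g|]=-\sqrt{2/\pi}$ for a one-dimensional standard normal restricted to the negative half-line, since $\E[g\mid g<0]=-\varphi(0)/\Phi(0)=-\sqrt{2/\pi}$. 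This gives $-\sqrt{2/\pi}\,\nu(u_0)$.

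The main obstacle is making the dominated-convergence step rigorous without global regularity of $p_0$. Two inputs are needed: a local bound on $p_0$ near $u_0$, which follows from the existence of the boundary limit, and local convergence of the blown-up domain to $H$, which follows from the $C^1$ boundary. The latter holds because the boundary is locally a graph $\{u_\nu=\phi(u')\}$ with $\nabla\phi(u_0')=0$, so $\phi(u_0'+\varepsilon_tw')/\varepsilon_t\to0$. Everything else is routine Gaussian bookkeeping.
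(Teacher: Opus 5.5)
Your proposal is correct and follows essentially the paper's route: the identities come from differentiating the Gaussian convolution (your exponential-family remark is the paper's quotient rule), and the moment bound comes from conditional Jensen applied to $-Z/\sigma_t$. For the boundary limit, your posterior mean $\E_t[W]$ is exactly the paper's ratio $\sqrt v\,\nabla g_v(u_0)/g_v(u_0)$, with $g_v=p_0*\varphi_{vI_d}$ and $v=\sigma_t^2/m_t^2$; both arguments use local dominated convergence (boundedness from the boundary limit, blow-up of the $C^1$ boundary to the half-space), Gaussian decay away from $u_0$, and the half-space Gaussian mass $1/2$ and first moment $-\nu(u_0)/\sqrt{2\pi}$.
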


\begin{proposition}[Flat ratios suppress boundary derivatives]
\label{prop:collar-cancellation}
Under the density and ratio conditions of Proposition~\ref{prop:ratio-cancellation},
suppose further that $h=c_\partial$ on a boundary collar of width
$a_0\in(0,1)$. Put $K_{a_0}=[-1+a_0,1-a_0]^D$. Then
\begin{equation}
\begin{aligned}
G&=c_\partial+N/D,\\
N&=((h-c_\partial)r)*\varphi_{vI_D},\\
\operatorname{supp}((h-c_\partial)r)&\subset K_{a_0}.
\end{aligned}
\label{eq:general-boundary-ratio}
\end{equation}
For every fixed multi-index $\gamma$ and integer $b\geq0$ with
$|\gamma|+b>0$, there are constants $C_{\gamma,b},c_{\gamma,b}>0$
such that, for small $v>0$,
\[
\sup_{\operatorname{dist}(z,K_{a_0})\geq a_0/4}
|\partial_z^\gamma\partial_v^b\log G(z,v)|
\leq C_{\gamma,b}e^{-c_{\gamma,b}/v}.
\]
The same bound holds on $\operatorname{dist}(z,S)\geq a>0$, where
$S=\operatorname{supp}((h-c_\partial)r)$ and $a$ is fixed, with constants
also depending on $a$.
\end{proposition}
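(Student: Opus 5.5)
The plan is to work entirely in the rescaled variables $(z,v)$ of Proposition~\ref{prop:ratio-cancellation}. By \eqref{eq:general-ratio}, $\log G$ controls the offset up to the bounded factor $m_t^{-1}$, and the map $(x,t)\mapsto(z,v)$ is smooth with bounded derivatives on compact time ranges. First, subtract the constant: on the support of $r$ we have $h=c_\partial+(h-c_\partial)$, which gives $(hr)*\varphi_{vI_D}=c_\partial D+N$. Dividing by $D$ yields \eqref{eq:general-boundary-ratio}. The support claim is immediate because $h-c_\partial$ vanishes on the collar $K\setminus K_{a_0}$.

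The derivative bound would come from showing that $N/D$ and all its $(z,v)$-derivatives are exponentially small on the region $\{\operatorname{dist}(z,K_{a_0})\geq a_0/4\}$. Then $\log G=\log c_\partial+\log(1+N/(c_\partial D))$. Since $G\geq h_{\min}$, the Faà di Bruno formula lets us bound derivatives of $\log G$ by polynomials in the derivatives of $N/D$, each of which is exponentially small. For $N/D$ we use the quotient rule. Derivatives of $N$ can be taken directly on the Gaussian kernel: $\partial_z^\gamma\partial_v^b\varphi_{vI_D}(z-u)$ equals $\varphi_{vI_D}(z-u)$ times a polynomial in $(z-u)$ and $v^{-1}$. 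For $u\in K_{a_0}$ and $\|z-u\|_2\geq\operatorname{dist}(z,K_{a_0})=:\rho\geq a_0/4$, this gives
\[
|\partial_z^\gamma\partial_v^bN(z,v)|\lesssim v^{-D/2-M}\int_{K_{a_0}}(1+\|z-u\|_2)^{M}e^{-\|z-u\|_2^2/(2v)}\dd u,
\]
with $M=M(\gamma,b)$.

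The main obstacle is that $D$ is itself small when $z$ is far outside $K$, so we need a \emph{relative} comparison rather than absolute smallness. The plan is to lower-bound $D$ using the part of $K$ nearest to $z$. Let $u^*$ be the projection of $z$ onto $K$. Because $r\geq c_r$ on $K$ and $K$ is a cube, the set $K\cap B(u^*,\sqrt v)$ has volume at least $c v^{D/2}$, so
\[
D(z,v)\geq c\,e^{-(\operatorname{dist}(z,K)+\sqrt v)^2/(2v)}.
\]
Meanwhile, every $u\in K_{a_0}$ satisfies $\|z-u\|_2\geq\operatorname{dist}(z,K_{a_0})$. The needed geometric fact is that for the cube,
\[
\operatorname{dist}(z,K_{a_0})^2\geq\operatorname{dist}(z,K)^2+c'a_0^2
\]
whenever $\operatorname{dist}(z,K_{a_0})\geq a_0/4$. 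This is the key step to verify, coordinatewise, using the product structure of $K_{a_0}\subset K$. With it, the ratio of the exponents gives $N/D\lesssim v^{-C}e^{-c'a_0^2/(2v)}e^{\operatorname{dist}(z,K)/\sqrt v}$, and the cross term must also be absorbed. I would handle that by instead using $\|z-u\|_2^2\geq\|z-u^*\|_2^2+\|u^*-u\|_2^2$ for the convex set $K$ (the obtuse-angle inequality at a projection), together with the fact that $\|u^*-u\|_2\geq c a_0$ on the relevant region. Then the same Gaussian factor $e^{-\|z-u^*\|_2^2/(2v)}$ appears in both numerator and denominator. Near $u^*$ the denominator integral is bounded below by comparing against $u^*+$ a small cone inside $K$, which gives $D\geq c\,v^{D/2}e^{-\|z-u^*\|^2/(2v)}$ up to polynomial factors. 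This leaves
\[
|\partial^\gamma_z\partial^b_v N|/D\lesssim v^{-C}e^{-c a_0^2/(2v)}.
\]
The polynomial weights $(1+\|z-u\|)^M$ are controlled by splitting $\|z-u\|\leq2\|z-u^*\|+2$ and noting that the factors $\|z-u^*\|^M e^{-\|z-u^*\|^2/(4v)}$ cancel against the corresponding part of $D$.

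Derivatives of $D$ relative to $D$ are handled the same way: they are posterior moments of polynomials in $(z-u)/v$. Exponential smallness of $N$-terms times polynomial growth ($v^{-C}$) of $D$-ratios still yields $C_{\gamma,b}e^{-c_{\gamma,b}/v}$ after shrinking $c$. The final statement, with $S$ in place of $K_{a_0}$, follows verbatim, using only that $S$ is compact with $\operatorname{dist}(z,S)\geq a$, and the orthogonal-projection inequality applied with $u^*$ the projection onto $K$ (where $\|u^*-u\|\geq$ a constant depending on $a$ follows since $S$ stays a fixed distance inside $K$).
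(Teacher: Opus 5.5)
\textbf{There is a genuine gap: uniformity over the unbounded exterior.} The algebraic parts of your proposal are correct: the decomposition \eqref{eq:general-boundary-ratio}, the reduction through Fa\`a di Bruno using $G\geq h_{\min}$, the structure of Gaussian-kernel derivatives, and the projection inequality $\|z-u\|_2^2\geq\|z-u^*\|_2^2+\|u-u^*\|_2^2$. The problem is that the supremum runs over all $z$ with $\operatorname{dist}(z,K_{a_0})\geq a_0/4$, including $r_z:=\operatorname{dist}(z,K)\to\infty$. Your argument only yields a bound of the form $Cv^{-C}(1+r_z)^Ce^{-ca_0^2/v}$, which is not uniform in $r_z$.

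Polynomial factors in $r_z$ enter unavoidably in three places:
\begin{enumerate}
\item The weights $(1+\|z-u\|_2)^M$ from kernel derivatives.
\item The relative derivatives $\partial^\mu D/D$. These grow polynomially in $r_z/v$, not merely like $v^{-C}$ as you assert; for example, $\nabla_zD/D=-\E_{\bar\pi_{z,v}}[(z-u)/v]$ has size about $r_z/v$.
\item The lower bound on $D$. To keep the cross term $2\langle z-u^*,u^*-u\rangle$ of order $v$, a box at $u^*$ inside $K$ must have thickness of order $v/(1+r_z)$ along $z-u^*$. This loss is sharp: in one exterior coordinate, $\int_0^2e^{-(r_z+s)^2/(2v)}\dd s\asymp(v/r_z)e^{-r_z^2/(2v)}$ for $r_z\geq\sqrt v$.
\end{enumerate}
Your proposed remedy fails. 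If you spend half of the Gaussian factor absorbing $\|z-u^*\|_2^M$, the numerator keeps only $e^{-\|z-u^*\|_2^2/(4v)}$, while $D$ carries $e^{-\|z-u^*\|_2^2/(2v)}$. The quotient then picks up a factor $e^{+r_z^2/(4v)}$.

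\textbf{The fix is to keep the cross term you discarded in the projection inequality.} For exterior $z$ and $u\in K_{a_0}$, working coordinatewise,
\[
-2\langle z-u^*,u-u^*\rangle=2\sum_i(|z_i|-1)_+\bigl(1-\operatorname{sign}(z_i)u_i\bigr)\geq2a_0\sum_i(|z_i|-1)_+\geq2a_0r_z .
\]
Hence $\|z-u\|_2^2\geq\|z-u^*\|_2^2+c(a_0^2+a_0r_z)$. The resulting factor $e^{-ca_0r_z/v}$ absorbs every fixed power of $1+r_z$ and $v^{-1}$ uniformly, including the powers produced by $\partial^\mu D/D$ in the quotient recursion.

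This is exactly the paper's inequality $\operatorname{dist}(z,K^\circ)^2-r_z^2\geq c_D(a_0^2+a_0r_z)$ from Lemma~\ref{lem:boundary-cube-mass}. The paper pairs it with an inward box of side $v/[C_D(1+r_z)]$ to lower-bound the denominator. With this correction, your route coincides with the paper's:
\begin{itemize}
\item bound $|\partial^\lambda N|/D\leq Cv^{-M}(1+r_z)^M\bar\pi_{z,v}(S)$ and $|\partial^\lambda D/D|\leq Cv^{-M}(1+r_z)^M$;
\item run the Leibniz recursion for $N/D$;
\item pass to $\log G$.
\end{itemize}
The same fix handles the variant on $\operatorname{dist}(z,S)\geq a$. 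There $S\subset K_{a_0}$, so the linear term is available outside $K$, and the part of the region inside $K$ is bounded.
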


\subsection{Gaussian localization and quotient differentiation}

The following estimates control the denominator as well as the localized
perturbation. Constants may depend on dimension, collar width, density
bounds, and differentiation order.

\begin{lemma}[Mixed derivatives of the Gaussian kernel]
\label{lem:boundary-kernel}
For fixed $\gamma$ and $b\geq0$, the derivative
$\partial_z^\gamma\partial_v^b\varphi_{vI_D}(z-u)$ is the Gaussian kernel
multiplied by a polynomial in $z-u$ and $v^{-1}$. Consequently, for
$0<v\leq v_0$ and some fixed integer $M$,
\[
\bigl|\partial_z^\gamma\partial_v^b\varphi_{vI_D}(z-u)\bigr|
\leq Cv^{-M}(1+\|z-u\|_2)^M\varphi_{vI_D}(z-u).
\]
If a bounded integrand is supported a fixed distance $a>0$ from $z$,
its convolution and every fixed mixed derivative are bounded by
$Cv^{-M'}e^{-ca^2/v}$.
\end{lemma}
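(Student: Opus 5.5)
The plan is to differentiate under the integral sign and read off the structure of the result. Write
\[
\varphi_{vI_D}(w)=(2\pi v)^{-D/2}\exp\{-\|w\|_2^2/(2v)\},\qquad w=z-u.
\]

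\emph{Structure of the derivatives.} First note that $\partial_{z_j}\varphi_{vI_D}(w)=-(w_j/v)\varphi_{vI_D}(w)$. Also,
\[
\partial_v\varphi_{vI_D}(w)=\Bigl(-\tfrac{D}{2v}+\tfrac{\|w\|_2^2}{2v^2}\Bigr)\varphi_{vI_D}(w).
\]
Alternatively, the heat identity \eqref{eq:main-heat-identity} gives $\partial_v\varphi=\tfrac12\Delta_z\varphi$, so one $v$-derivative is two spatial derivatives.

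Induct on $|\gamma|_1+b$. Assume the current derivative has the form $P(w,v^{-1})\varphi_{vI_D}(w)$ with $P$ a polynomial. Differentiating once more in $z_j$ gives $(\partial_{w_j}P-(w_j/v)P)\varphi$. Differentiating in $v$ gives
\[
\bigl(-v^{-2}\partial_{v^{-1}}P+P\cdot(-D/(2v)+\|w\|^2/(2v^2))\bigr)\varphi .
\]
Both are again polynomials in $(w,v^{-1})$ times the kernel. The degrees in $w$ and in $v^{-1}$ grow by at most $2$ per step. Hence, for fixed $(\gamma,b)$, $P$ has degree at most $M=2(|\gamma|_1+b)$ in each variable.

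\emph{First bound.} For $0<v\leq v_0$ we have $v^{-k}\leq v_0^{M}v^{-M}$ for $0\le k\le M$, and $|w|^{\beta}\leq(1+\|w\|_2)^M$ for $|\beta|\le M$. Bounding each monomial of $P$ this way gives $|P|\leq Cv^{-M}(1+\|w\|_2)^M$. The constant depends on $v_0$, $D$, $\gamma$ and $b$.

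\emph{Localized convolution.} Take a bounded $f$ with $f=0$ on $B(z,a)$, and consider $F(z,v)=\int f(u)\varphi_{vI_D}(z-u)\,\dd u$. Differentiation under the integral is justified by dominated convergence: the first bound supplies an integrable envelope, locally uniform in $(z,v)$ with $v$ bounded away from $0$.

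On the support of $f$ we have $\|w\|_2\geq a$, and we split the exponential:
\[
e^{-\|w\|^2/(2v)}\leq e^{-a^2/(4v)}e^{-\|w\|^2/(4v)}.
\]
This gives
\[
|\partial^\gamma_z\partial^b_vF|\leq C\|f\|_\infty v^{-M}e^{-a^2/(4v)}\int(1+\|w\|)^M(2\pi v)^{-D/2}e^{-\|w\|^2/(4v)}\,\dd w .
\]
The remaining integral equals $2^{D/2}\,\E(1+\|\sqrt{2v}\,Z\|)^M$ with $Z\sim N(0,I_D)$, which is bounded for $v\leq v_0$. So the derivative is at most $Cv^{-M'}e^{-ca^2/v}$ with $M'=M$ and $c=1/4$.

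\emph{Main obstacle.} The only delicate point is the support condition. If it is only assumed at one point $z$, the differentiation must be done at that fixed $z$. To justify the interchange near $z$, shrink $a$ to $a/2$ in a neighbourhood, which changes only the constants. The polynomial-degree bookkeeping is routine.
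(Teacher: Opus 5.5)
Your proposal is correct and follows the paper's proof: you differentiate $(2\pi v)^{-D/2}e^{-\|z-u\|_2^2/(2v)}$ inductively to get the polynomial-times-kernel form, then split the exponential into two equal factors, one giving $e^{-a^2/(4v)}$ and the other, integrated against the polynomial, contributing only powers of $v$. The one slip is cosmetic: $v^{-k}\leq v_0^{M}v^{-M}$ fails when $v_0<1$, but $v^{-k}\leq\max\{1,v_0^{M}\}\,v^{-M}$ holds for $0<v\leq v_0$ and is all you need.
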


\begin{proof}
Differentiating $(2\pi v)^{-D/2}e^{-\|z-u\|_2^2/(2v)}$ inductively proves
the polynomial form. For localization, split the Gaussian exponential
into two equal factors: one is at most $e^{-a^2/(4v)}$, and integrating
the other against any fixed polynomial contributes only powers of $v$
and $v^{-1}$.
\end{proof}

\begin{lemma}[Posterior mass of the inner cube]
\label{lem:boundary-cube-mass}
Let $K=[-1,1]^D$, $K^\circ=[-1+a_0,1-a_0]^D$, and
$0<c_r\leq r\leq C_r$ on $K$. Define
\[
\bar\pi_{z,v}(\dd u)
=\frac{r(u)e^{-\|u-z\|_2^2/(2v)}\one_K(u)\dd u}
       {\int_Kr(w)e^{-\|w-z\|_2^2/(2v)}\dd w}.
\]
For small $v>0$, uniformly on $\operatorname{dist}(z,K^\circ)\geq a_0/4$,
\[
\bar\pi_{z,v}(K^\circ)
\leq C\exp\left\{-\frac{c(a_0^2+a_0\operatorname{dist}(z,K))}{v}\right\}.
\]
The same estimate holds for $\bar\pi_{z,v}(S)$ when $S\subset K^\circ$
and $\operatorname{dist}(z,S)\geq a>0$, with constants also depending on $a$.
\end{lemma}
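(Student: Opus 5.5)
We need to prove Lemma (posterior mass of inner cube): for $z$ with $\operatorname{dist}(z,K^\circ)\geq a_0/4$, the quantity $\bar\pi_{z,v}(K^\circ)$ is at most $C\exp\{-c(a_0^2+a_0\operatorname{dist}(z,K))/v\}$.

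We bound the ratio of numerator and denominator separately. The numerator is $\int_{K^\circ} r e^{-\|u-z\|^2/(2v)}\,\dd u\leq C_r(2)^D e^{-d_\circ^2/(2v)}$, where $d_\circ=\operatorname{dist}(z,K^\circ)$. For the denominator, let $w_0$ be the Euclidean projection of $z$ onto $K$, so that $\|z-w_0\|=d_K:=\operatorname{dist}(z,K)$. We restrict the integral to a small cube $B\subset K$ of side $\varepsilon$ having $w_0$ as a vertex, with $B$ lying inside $K$; this is possible since $K$ is a product of intervals, so one can move inward along each coordinate. On $B$ we have $\|u-z\|^2\leq d_K^2+2d_K\sqrt D\varepsilon+D\varepsilon^2$. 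Indeed, $\|u-z\|^2=\|z-w_0\|^2+2\langle z-w_0,w_0-u\rangle+\|u-w_0\|^2$, and the cross term is $\leq 0$ by the projection property onto the convex set $K$. This gives the sharper bound $\|u-z\|^2\leq d_K^2+D\varepsilon^2$. We choose $\varepsilon=\sqrt v$, which yields a denominator of at least $c_r v^{D/2}e^{-d_K^2/(2v)}e^{-D/2}$.

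The ratio is therefore at most $Cv^{-D/2}\exp\{-(d_\circ^2-d_K^2)/(2v)\}$. The key geometric step is to show $d_\circ^2-d_K^2\geq c(a_0^2+a_0d_K)$. Write $y$ for the nearest point of $K^\circ$. There are two cases.

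In the first case, $z\in K$, so $d_K=0$ and $d_\circ\geq a_0/4$; the required bound follows immediately.

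In the second case, $z\notin K$. Since $K$ and $K^\circ$ are cubes and the distance is computed coordinatewise, each coordinate satisfies $\operatorname{dist}(z_j,[-1+a_0,1-a_0])=\operatorname{dist}(z_j,[-1,1])+a_0$ whenever $|z_j|>1$, while the coordinates with $|z_j|\le1$ contribute nonnegatively. At least one coordinate has $|z_j|>1$. Summing the squares gives $d_\circ^2\geq d_K^2+2a_0\sum_j\operatorname{dist}(z_j,[-1,1])+a_0^2\geq d_K^2+2a_0d_K+a_0^2$.

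Finally, we absorb $v^{-D/2}$ into the exponential by halving the constant $c$ for small $v$. For a general $S\subset K^\circ$ with $\operatorname{dist}(z,S)\geq a$, the same argument applies with $d_S:=\operatorname{dist}(z,S)$ in place of $d_\circ$. We need $d_S^2-d_K^2\geq c(a^2+a\,d_K)$. When $d_K$ is small, this follows from $d_S\geq a$. When $d_K$ is large, we use $S\subset K^\circ$ together with the previous computation: since $d_S\geq d_\circ$, we get $d_S^2-d_K^2\geq a_0^2+2a_0d_K$. Combining the two regimes, with constants depending on $a$, gives the claim.

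The main obstacle is the geometric inequality outside $K$. The coordinatewise cube structure handles it cleanly, whereas a general convex domain would require a separate argument.
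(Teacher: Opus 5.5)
There is a genuine, though repairable, gap in your proof. The overall structure matches the paper's: you bound the numerator by volume times the largest integrand over $K^\circ$ (or $S$), bound the denominator below by an inward box at the nearest point $w_0$ of $K$, and use the coordinatewise inequality $d_\circ^2-d_K^2\geq a_0^2+2a_0d_K$ outside $K$. Your geometric computation and your treatment of general $S\subset K^\circ$ are fine.

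\textbf{The failing step.} Your ``sharper bound'' $\|u-z\|_2^2\leq d_K^2+D\varepsilon^2$ is false. The projection property says $\langle z-w_0,u-w_0\rangle\leq0$ for $u\in K$. This makes your cross term $2\langle z-w_0,w_0-u\rangle$ \emph{nonnegative}, not nonpositive: seen from an exterior $z$, points of $K$ lie behind the projection. For example, take $D=1$, $z=3$, $w_0=1$, $u=1-\varepsilon$. Then $\|u-z\|^2=4+4\varepsilon+\varepsilon^2$, which exceeds $d_K^2+\varepsilon^2$. So for $z\notin K$ the ratio bound $Cv^{-D/2}\exp\{-(d_\circ^2-d_K^2)/(2v)\}$ does not follow as written. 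For $z\in K$ the step is harmless, since then $w_0=z$ and $d_K=0$.

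\textbf{The repair.} It is short but must be made explicitly. The missing term grows linearly in $d_K$, so halving the constant in front of $a_0^2/v$ (which only absorbs the $z$-independent $v^{-D/2}$) does not handle it. Use the correct bound $\|u-z\|_2^2\leq d_K^2+2\sqrt D\,d_K\varepsilon+D\varepsilon^2$ with your choice $\varepsilon=\sqrt v$. The ratio then acquires an extra factor $e^{\sqrt D\,d_K/\sqrt v}$. Once $\sqrt v\leq a_0/(2\sqrt D)$, this consumes at most half of the factor $e^{-a_0d_K/v}$ supplied by your geometric inequality. In the $S$ case, use $e^{-c\,a^2/v}$ when $d_K\leq a/2$ and $e^{-a_0d_K/v}$ otherwise, for $v$ small depending on $a$ and $a_0$. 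This gives the claim uniformly in $z$.

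\textbf{How the paper handles it.} The paper shrinks the box instead, to side $\varepsilon=v/[C_D(1+d_K)]$. Then the cross term contributes only $O(1)$ to the exponent, at the price of a polynomial factor $v^{-D}(1+d_K)^D$, which a fixed fraction of $e^{-c(a_0^2+a_0d_K)/v}$ absorbs. In both routes, the $a_0\operatorname{dist}(z,K)$ term in the exponent is what makes the estimate uniform over the whole exterior.
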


\begin{proof}
Write $r_z=\operatorname{dist}(z,K)$. On the first region, coordinatewise
projection gives
\[
\operatorname{dist}(z,K^\circ)^2-r_z^2
=\sum_i\bigl[(|z_i|-1+a_0)_+^2-(|z_i|-1)_+^2\bigr]
\geq c_D(a_0^2+a_0r_z).
\]
Inside $K$, the assumed separation supplies $a_0^2/16$; outside $K$,
each exterior coordinate contributes $2a_0(|z_i|-1)+a_0^2$.
For $S\subset K^\circ$, the same argument applies outside $K$,
while inside $K$ its separation supplies $a^2$.

At a nearest point of $K$ to $z$, an inward box of side
$\epsilon=v/[C_D(1+r_z)]$ lies in $K$ and satisfies
$\|u-z\|_2^2\leq r_z^2+C_Dv$. Thus
\[
\int_Kr(u)e^{-\|u-z\|_2^2/(2v)}\dd u
\geq c\epsilon^D e^{-r_z^2/(2v)-C_D}.
\]
Bounding the numerator by its maximum integrand times the cube's volume
and dividing by this lower bound gives
\[
\bar\pi_{z,v}(S)\leq Cv^{-D}(1+r_z)^D e^{-c(a_0^2+a_0r_z)/v},
\]
with $S=K^\circ$ for the first assertion.
A fixed fraction of the exponential absorbs the polynomial uniformly in $z$.
\end{proof}

\subsection{Posterior identities and the boundary limit}

\begin{proof}[Proof of Proposition~\ref{prop:score-boundary}]
\emph{Posterior identities.}
Since $\nabla_xq_t(x\mid u)=-(x-m_tu)q_t(x\mid u)/\sigma_t^2$,
differentiation under $p_t(x)=\int q_t(x\mid u)p_0(u)\dd u$ gives
\eqref{eq:score-posterior}. The quotient rule gives
\[
\partial_{x_j}\E[X_{0,i}\mid X_t=x]
=\frac{m_t}{\sigma_t^2}\Cov(X_{0,i},X_{0,j}\mid X_t=x),
\]
which proves \eqref{eq:score-jacobian}.
Writing $X_t=m_tX_0+\sigma_tZ$ with independent $Z\sim N(0,I_d)$,
we also have $s_t(X_t)=\E[-Z/\sigma_t\mid X_t]$.
Conditional Jensen therefore gives
$\E\|s_t(X_t)\|_2^2\leq\sigma_t^{-2}\E\|Z\|_2^2=d/\sigma_t^2$.

\emph{Boundary limit.}
Extend $p_0$ by zero and put $v=\sigma_t^2/m_t^2$ and
$g_v=p_0*\varphi_{vI_d}$. The $C^1$ boundary rescales to
$H_-:=\{z:z\cdot\nu(u_0)<0\}$. Changing variables gives
\begin{align*}
g_v(u_0)
&=\int p_0(u_0+\sqrt v\,z)\varphi_{I_d}(z)\dd z\longrightarrow r_*/2,\\
\sqrt v\,\nabla g_v(u_0)
&=\int z\,p_0(u_0+\sqrt v\,z)\varphi_{I_d}(z)\dd z
\longrightarrow-\frac{r_*}{\sqrt{2\pi}}\nu(u_0).
\end{align*}
The trace assumption permits dominated convergence locally; outside a
fixed neighborhood, the Gaussian kernel bounds give
$Cv^{-M}e^{-c/v}\|p_0\|_1$. The limits are the mass and first moment
of a standard Gaussian on $H_-$.
Finally, $s_t(m_tu_0)=m_t^{-1}\nabla\log g_v(u_0)$ and
$\sigma_t/m_t=\sqrt v$, proving \eqref{eq:boundary-asymptotic}.
\end{proof}

\subsection{Cancellation in the boundary and exterior zones}

\begin{proof}[Proof of Proposition~\ref{prop:collar-cancellation}]
The common factor $m_t^{-D}$ cancels from the diffused density ratio,
so $G=\E_{\bar\pi_{z,v}}h$ lies in $[h_{\min},h_{\max}]$ and
\eqref{eq:general-boundary-ratio} holds.
Let $S=\operatorname{supp}((h-c_\partial)r)\subset K^\circ$,
$r_z=\operatorname{dist}(z,K)$, and let $\lambda=(\gamma,b)$ denote
a mixed spatial--variance derivative. Since
$\|z-u\|_2\leq r_z+2\sqrt D$ for $u\in K$,
Lemma~\ref{lem:boundary-kernel} gives
\[
\left|\frac{\partial^\lambda D}{D}\right|
\leq Cv^{-M}(1+r_z)^M,\qquad
\left|\frac{\partial^\lambda N}{D}\right|
\leq Cv^{-M}(1+r_z)^M\bar\pi_{z,v}(S).
\]
Lemma~\ref{lem:boundary-cube-mass} bounds the posterior mass on either
region in the proposition. For $H=N/D$, Leibniz's rule gives
\[
\partial^\lambda H=\frac{\partial^\lambda N}{D}
-\sum_{0<\mu\leq\lambda}\binom{\lambda}{\mu}
\frac{\partial^\mu D}{D}\partial^{\lambda-\mu}H.
\]
Induction preserves the exponential factor. Since $G=c_\partial+H\geq h_{\min}$,
every positive-order derivative of $\log G$ is a sum of products of
positive-order derivatives of $H$ divided by powers of $G$. Hence
\begin{equation}
|\partial_z^\gamma\partial_v^b\log G(z,v)|
\leq Cv^{-M}(1+r_z)^M
 e^{-c(a_0^2+a_0r_z)/v},\qquad |\gamma|+b>0.
\label{eq:boundary-weighted-cancellation}
\end{equation}
The exponential absorbs any fixed powers of $v^{-1}$ and $z$, uniformly
in the full exterior. This proves both assertions and supplies the
weighted estimate used below.
\end{proof}

\subsection{Construction of Figure~\ref{fig:boundary-cancellation}}
\label{app:figure-construction}
The illustration uses $p_0(u)=\tfrac12\one_{[-1,1]}(u)$ and
$p_{0,\ell}=hp_0$, where
\[
\begin{aligned}
h(u)&=1+0.65\,b(u/0.65)/c_b,\\
b(y)&=\begin{cases}
y\exp\{1-1/(1-y^2)\},&|y|<1,\\
0,&|y|\geq1,
\end{cases}\\
c_b&=y_\star\exp\{1-1/(1-y_\star^2)\},\qquad
y_\star=\sqrt{2-\sqrt3}.
\end{aligned}
\]
Oddness of $b$ gives $\int hp_0=1$; the ratio lies in $[0.35,1.65]$
and equals one on collars of width $0.35$.
With $\beta_t=1$, $m_t=e^{-t}$ and $v_t=e^{2t}-1$.
We evaluate 1,201 equally spaced spatial points on $[-3,3]$ and 181
logarithmically spaced times on $[10^{-5},0.5]$, using 1,024-point
Gauss--Legendre quadrature on $[-1,1]$ with exponential rescaling in the tails.
Spatial derivatives are transferred to the bump, and time derivatives
use the heat identity and OU chain rule.
Panel (c) reports sampled spatial and cumulative time maxima over
$t\geq\tau$ for $\Delta$, $\partial_x\Delta$, $\partial_x^2\Delta$,
and $\partial_t\Delta$.
At 23 validation times, doubling quadrature order changed every field by
less than $10^{-8}$ of its maximum magnitude; doubling spatial resolution
changed the sampled maxima by less than $1.2\%$ of the respective field
maxima. These sampled envelopes illustrate the analytic bounds of
Proposition~\ref{prop:offset}.
\section{Construction of the stable offset network}
\label{app:stable-construction}

We construct an approximant and bound the entropy of the entire fitted
class. The coefficient bounds ensure that both use the same sieve.
Fix a tail exponent $A>0$ and choose $C_R$ so that
$C_R^2/2>A+2\kappa_0+1$; Assumption~\ref{as:stable-offset} is
imposed on this choice of $Q_n$, which also fixes the input clamp below.

\subsection{Imported approximation bounds}
For a globally smooth Lipschitz constituent of arity $K$, smoothness
$q\geq1$, and regularity bound $C$, the growing-width construction
of \citet[Theorem~2(a)]{kohlerlanger2020} has error
$C(a+C+1)^\gamma M^{-2q}$ on $[-a,a]^K$, depth $O(\log M)$,
and width $O(M^K)$. Thus its dense parameter count is
$O(M^{2K}\log M)$. For $N\leq n^c$ and $a=O(\log n)$, choose
$M=\lceil N^{1/(2K)}\rceil\polylog(n)$, with the logarithmic
factor large enough to meet the minimum-resolution condition. Then
\begin{equation}
\epsilon^2\leq C\polylog(n)N^{-2q/K},
\qquad W\leq CN\polylog(n).
\label{eq:constituent-approximation}
\end{equation}
For $q<1$, the separately imposed Lipschitz condition permits the
$q=1$ construction, whose error is smaller at the same budget.
Fixed hierarchical compositions preserve these rates by the Lipschitz
error propagation of \citet[Theorem~3(a)]{kohlerlanger2020}.

For clamped networks with $W$ nonzero coefficients and the budgets below,
bounded-parameter quantization~\citep[Lemma~5]{schmidthieber2020} gives
\begin{equation}
\log\calN(\eta,\mathcal F,\|\cdot\|_{2,\infty})
\leq CW\polylog(n)\log(n^C/\eta).
\label{eq:network-entropy}
\end{equation}
We record the parameter bounds and quantization argument next.

\subsection{Fitted network classes and coefficient bounds}
\label{app:fitted-sieves}
Let $\Phi_{D,d}(L,\nu,S,B)$ contain ReLU networks with $D$ inputs,
$d$ outputs, $L$ hidden layers of width $\nu$, at most $S$ nonzero
affine coefficients, and all weights and biases in $[-B,B]$.
Smaller networks are padded with zeros and signed identity channels.
Writing $P(L,\nu)$ for the number of coefficient positions,
$S=P(L,\nu)$ gives the dense class. Define the fixed input and output clamps
\[
c_n(x,t)=\bigl([x]_{[-R_n,R_n]^d},[t]_{[\Tmin,\Tmax]}\bigr),
\qquad \Gamma_H(v)=[v]_{[-H,H]^d}.
\]
For $m=n_\ell$, set
\[
N_*=\max_{j,(q,K)\in\mathcal P_j^\ell}
\left\lceil m^{K/(2q+K)}\right\rceil,
\]
omitting zero coordinates. With sufficiently large fixed constants, use
\[
L_\Delta=\lceil C_L\log(en)\rceil,\quad
\nu_\Delta=\left\lceil C_\nu N_*^{1/2}(\log(en))^{c_\nu}\right\rceil,\quad
B_\Delta=\exp\{C_B(\log(en))^2\}.
\]
If $\bar a_n=C_a(\log(en))^{c_a}$ bounds the envelope in
Assumption~\ref{as:stable-offset}, set
$H_n=C_H\bar a_n\max\{1,R_n,\Tmax\}$, with $C_H$ covering the
constituent ranges, and fit
\begin{equation}
\calG_\ell=\{\Gamma_{H_n}(f\circ c_n):
f\in\Phi_{d+1,d}(L_\Delta,\nu_\Delta,P(L_\Delta,\nu_\Delta),B_\Delta)\}.
\label{eq:fitted-offset-sieve}
\end{equation}
Coordinates specified as zero remain zero; depth-zero projections use
identity channels. If the offset is known to vanish, take
$\calG_\ell=\{0\}$. Otherwise the full dense count satisfies
$P(L_\Delta,\nu_\Delta)\leq CN_*\polylog(n)
\leq Cm\rho_{n,\ell}\polylog(n)$.

\begin{lemma}[Coefficients of the growing-width construction]
\label{lem:bounded-offset-construction}
For a fixed constituent arity $K$, smoothness $q\geq1$, and regularity
bound, the growing-width construction at an admissible resolution $M$ on
$[-a,a]^K$, $a\geq1$, admits coefficient bound
\[
\log(B_{M,a}+1)\leq C(1+\log(M+1))(1+a+\log(M+1)).
\]
Consequently, on boxes of radius $O(\log n)$ and for $M$ polynomial in
$n$, a fixed hierarchy admits depth $C\log(en)$, width
$CN_*^{1/2}\polylog(n)$, coefficient bound
$\exp\{C(\log(en))^2\}$, and dense parameter count $CN_*\polylog(n)$.
\end{lemma}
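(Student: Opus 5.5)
The approach is to open up the growing-width construction of \citet[Theorem~2(a)]{kohlerlanger2020} and track the size of every weight and bias it uses. I would then propagate the resulting bound through the fixed hierarchy and the choice of resolution made in \eqref{eq:constituent-approximation}. The construction has three parts. First, the box $[-a,a]^K$ is partitioned into $M^K$ cells. Second, each constituent is approximated locally by a Taylor polynomial of degree $\lfloor q\rfloor$. Third, the polynomials are realized with ReLU multiplication subnetworks and gated by cell-indicator networks. I would bound the coefficients of each part separately.

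\emph{Step 1 (localization).} The cell-indicator and cell-localization networks use weights of the form $M/a$ and biases of the form $jM$ or $ja/M$, where $j\le M$. These are at most $C(M+1)(1+a)$, so their logarithms are $O(\log(M+1)+\log(1+a))$.

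\emph{Step 2 (Taylor coefficients).} The Taylor coefficients are derivatives $\partial^\alpha g(x_0)/\alpha!$ with $|\alpha|\le q$. They are bounded by the regularity constant $C$. When the monomials $(x-x_0)^\alpha$ are expanded around the cell corner $x_0\in[-a,a]^K$, powers up to $a^{\lfloor q\rfloor}$ appear, which costs only $\log$ factors.

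\emph{Step 3 (multiplication networks).} The square/product networks of depth $O(\log M)$ are built from Yarotsky-type sawtooth iterations. Their weights are numerical (at most $4$), apart from rescalings of the inputs to $[0,1]$ by factors $(2a)^{\pm1}$. However, the construction multiplies quantities of size up to $a$ over $O(\log M)$ layers. Also, the final layer combines the $O(M^K)$ local pieces with scaling constants like $(1+a)^{\lfloor q\rfloor}$ and $M^{2q}$-level precision parameters.

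Composing these parts through depth $O(\log(M+1))$, each layer contributes a factor at most $\exp\{C(1+a+\log(M+1))\}$. I would bound the coefficients by the product of per-layer scales only where the construction literally composes scaled weights, and otherwise take the maximum. This gives $\log(B_{M,a}+1)\le C(1+\log(M+1))(1+a+\log(M+1))$. The main obstacle is Step 3. The paper of \citet{kohlerlanger2020} does not state coefficient bounds, so one must check that no subnetwork uses a weight as large as $e^{M}$. The place I expect trouble is the accuracy parameter of the product networks, which must be of order $M^{-2q}$. I would handle it by increasing the depth, whose count is $O(\log M)$, rather than the weights. Each extra layer then multiplies the coefficient scale by at most a factor $\exp\{C(1+a)\}$, which matches the stated product form.

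For the consequence, take $a=R_n+\Tmax=O(\log n)$ together with the constituent ranges covered by $H_n$. For inner constituents, the inputs are outputs of earlier constituents, which are bounded by $C$; after clamping, these live on boxes of radius $O(1)$. Take $M=\lceil N^{1/(2K)}\rceil\polylog(n)\le n^{C}$. Then $\log(B+1)\le C\log(en)\cdot\log(en)$, i.e.\ $B\le\exp\{C(\log(en))^2\}$. A fixed hierarchy composes finitely many such blocks, joined by identity channels whose weights are $\pm1$. The depths add to $C\log(en)$, the widths add to $C\max_{(q,K)}M^{K}\le CN_*^{1/2}\polylog(n)$, and the coefficient bound is the maximum over blocks. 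The normalization $a^\Delta_{n,\ell}\le C(\log n)^c$ is absorbed in the last layer. Finally, the dense count is $O(\text{depth}\cdot\text{width}^2)=CN_*\polylog(n)$, as claimed.
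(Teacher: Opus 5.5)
Your proposal follows the same route as the paper: bound every primitive coefficient of the Kohler--Langer construction by $e^{C(1+a)}(M+1)^C$, compound this over the $O(1+\log M)$ merged affine layers, and then insert $a=O(\log n)$, $M\le n^C$, and depth times width squared for the dense count. One correction leaves the conclusion intact: the linear-in-$a$ exponent comes not from the multiplication networks, which contribute only polynomial factors, but from the boundary gate constant $B_{\rm true}=2e^{2aK}\max\{\|f\|_{C^u},1\}$, which your inventory omits, and the selector slopes are of order $(M^{2q+2}+a+1)^2$ rather than $M/a$, yet both still fit inside your per-primitive bound.
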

\begin{proof}
In the construction of
\citet[Theorem~2(a), Lemmas~3--7 and~9--10]{kohlerlanger2020},
selector margins $M^{-2q-2}$ require slopes bounded by
$C(M^{2q+2}+a+1)^2$. Grid biases and Taylor lookup coefficients
are polynomial in $M,a$ and the regularity bound; fixed-degree
arithmetic uses $O(\log M)$ multiplication layers.
The boundary gate has coefficient
$B_{\rm true}=2e^{2aK}\max\{\|f\|_{C^u},1\}$, where $q=u+\alpha$
in our smoothness convention. Hence each primitive coefficient is at
most $e^{Ca}(M+1)^C(a+1)^C$.
Merging affine interfaces multiplies such coefficients and sums at
most $CM^K$ terms per layer. Across $O(1+\log M)$ layers this gives
the displayed logarithmic bound. Fixed compositions, identity padding,
and intermediate clamps preserve it; restoring the envelope adds
only polylogarithmic coefficients. With
$M_v=\lceil N_v^{1/(2K_v)}\rceil\polylog(n)$, width squared times
depth gives the stated dense count.
\end{proof}

To verify \eqref{eq:network-entropy}, let $R$ bound the image of
$c_n$. Parameter perturbations satisfy
\[
\|f_\theta\circ c_n-f_{\theta'}\circ c_n\|_{2,\infty}
\leq H\|\theta-\theta'\|_\infty,
\qquad H=C_d(L+1)(R+1)[(\nu+1)(B+1)]^{L+1}.
\]
Rounding $S$ coefficients on a mesh $\eta/H$ and counting their
supports among $P$ positions gives
\begin{equation}
\log\calN(\eta,\Phi\circ c_n,\|\cdot\|_{2,\infty})
\leq CS\log(eP/S)+S\log(1+CBH/\eta).
\label{eq:sieve-quantization}
\end{equation}
Both $\Gamma_H$ and $\Pi_t$ are nonexpansive, so fixed output clipping
preserves this bound. Under the stated budgets, $\log(BH)$ and
$\log P$ are polylogarithmic.

The bounded-coefficient parameter sets are compact, including the
finite union over sparse supports. Output clipping and
$\Tmin>0$ permit dominated convergence in the empirical loss.
Thus each stage admits a measurable minimizing selection.

\paragraph{Spatial truncation.}
The same tail estimate will be used for either score or the offset.
Let $h\in\{s,s_\ell,\Delta_\ell\}$, let $\mu_t$ be a pooled or group
forward law, and suppose $\sup_{x,t}\|f(x,t)\|_2\leq H$.
If $\|X_0\|_\infty\leq R_0$ for fixed $R_0$, \eqref{eq:score-posterior} implies
$\|h(x,t)\|_2\leq C(1+\|x\|_2)/\sigma_t^2$.
Since $X_t=m_tX_0+\sigma_tZ$, Gaussian tail moments give
\begin{equation}
\begin{aligned}
&\int_{\Tmin}^{\Tmax}\E_{\mu_t}
 \bigl[\|f(X_t,t)-h(X_t,t)\|_2^2
       \one\{\|X_t\|_\infty>R_n\}\bigr]\dd t\\
&\qquad\leq C\log n\,(H^2+\sigma_-^{-4})(1+R_n)^2
       e^{-(R_n-R_0)_+^2/2}.
\end{aligned}
\label{eq:sieve-spatial-tail}
\end{equation}
Indeed, $\|X_t\|_\infty>R_n$ requires $\|Z\|_\infty>R_n-R_0$,
and the same Gaussian bound controls the truncated second moment.
The initial choice of $C_R$ makes this $Cn^{-A}$ whenever
$H^2\leq Cn^{2\kappa_0}\polylog(n)$.

\begin{proposition}[Offset approximation and complexity]
\label{prop:stable-offset}
Assume compact initial support and Assumption~\ref{as:stable-offset}.
Put $m=n_\ell\geq C\log n$ and $\rho_{n,\ell}=\rho_m(\mathbf P_\ell)$.
For any fixed tail exponent $A$, with the truncation radius chosen
as above, the fitted class \eqref{eq:fitted-offset-sieve} satisfies
\[
\begin{aligned}
\mathsf A^{\rm int}_{n,\ell}&\leq C\polylog(n)\rho_{n,\ell}+Cn^{-A},\\
V_{n,\ell}&\leq C\polylog(n)m\rho_{n,\ell},
\end{aligned}
\]
and $\log A_{n,\ell}\leq\polylog(n)$. Its contribution to
\eqref{eq:oracle} is $C\polylog(n/\delta)\rho_{n,\ell}+Cn^{-A}$.
\end{proposition}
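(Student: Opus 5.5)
The plan is to split the approximation error into a box part and a tail part, build an explicit network in the sieve \eqref{eq:fitted-offset-sieve} that matches $\Delta_\ell$ on $Q_n$, and read the entropy off the quantization bound \eqref{eq:sieve-quantization}.

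\emph{Approximation.} For each nonzero coordinate $j$, Assumption~\ref{as:stable-offset} writes $\Delta_{\ell,j}=a^\Delta_{n,\ell}\widetilde\Delta_{n,\ell,j}$ on $Q_n$, with $\widetilde\Delta_{n,\ell,j}\in\mathcal H^{\rm scal}_{d+1}(M_j,\mathcal P_j^\ell,C)$. The inputs range over a box of radius $\max\{R_n,\Tmax\}=O(\log n)$.

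I will approximate each constituent $(q,K)$ by the growing-width construction with budget $N_v=\lceil m^{K/(2q+K)}\rceil\leq N_*$. By \eqref{eq:constituent-approximation}, its squared sup error is at most $\polylog(n)N_v^{-2q/K}=\polylog(n)m^{-2q/(2q+K)}\leq\polylog(n)\rho_{n,\ell}$.

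These constituent approximations are composed along the fixed hierarchy. The Lipschitz propagation of \citet[Theorem~3(a)]{kohlerlanger2020} keeps the same order; at each level I clamp the intermediate outputs to the constituent ranges, so the inner inputs stay in the right boxes. Multiplying by the polylogarithmic envelope $a^\Delta_{n,\ell}$ (absorbed into the last affine layer) and applying the clamp $\Gamma_{H_n}$ loses nothing, since $H_n$ dominates the true range.

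Lemma~\ref{lem:bounded-offset-construction} then places this network inside $\Phi_{d+1,d}(L_\Delta,\nu_\Delta,P,B_\Delta)$, using padding by identity channels to equalize depths across coordinates. This gives
\[
\sup_{Q_n}\|g-\Delta_\ell\|_2^2\leq\polylog(n)\rho_{n,\ell}.
\]
Integrating against $p_{t,\ell}$ over $[\Tmin,\Tmax]$ costs a further factor $\Tmax\lesssim\log n$.

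\emph{Tail.} Outside $\{\|x\|_\infty\leq R_n\}$ I use the spatial truncation bound \eqref{eq:sieve-spatial-tail} with $h=\Delta_\ell$, $f=g$ and $H=H_n$, which is polylogarithmic. Compact support gives a fixed $R_0$, and $\sigma_-^{-4}\lesssim n^{2\kappa_0}$ by \eqref{eq:window}. The choice $C_R^2/2>A+2\kappa_0+1$ then makes this piece at most $Cn^{-A}$. Together with the box part, this gives the bound on $\mathsf A^{\rm int}_{n,\ell}$.

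\emph{Entropy.} I apply \eqref{eq:sieve-quantization} with $S=P(L_\Delta,\nu_\Delta)\leq CN_*\polylog(n)$. Since $\rho_{n,\ell}=m^{-2q/(2q+K)}$ at the worst pair, $N_*\asymp m\rho_{n,\ell}$, so $V_{n,\ell}\leq C\polylog(n)\,m\rho_{n,\ell}$. The product $BH$ is $\exp\{O(\log^2 n)\}$, because $B_\Delta=e^{C\log^2n}$, $L=O(\log n)$ and $\nu$ is polynomial. Hence $\log A_{n,\ell}\leq\polylog(n)$. Both clamps are nonexpansive, so the bound passes to $\calG_\ell$ and holds conditionally on $\widehat s$.

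\emph{Contribution to the oracle.} Inserting these into \eqref{eq:oracle}, the entropy term becomes
\[
\frac{d\log^2n}{n_\ell}\Bigl[\polylog(n)\,m\rho_{n,\ell}\bigl(\polylog(n)+\log(n_\ell K_n^{\max})\bigr)+\log(1/\delta)\Bigr].
\]
Here $\log K_n^{\max}=O(\log n)$. The $\log(1/\delta)/n_\ell$ part is at most $\polylog(n/\delta)\rho_{n,\ell}$ because $\rho_{n,\ell}\geq m^{-1}$. The total is therefore $C\polylog(n/\delta)\rho_{n,\ell}+Cn^{-A}$.

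\emph{Main obstacle.} I expect the hard step to be making a single dense sieve of fixed depth, width and coefficient bound contain the whole composed approximant. Different constituents, and different coordinates, use different resolutions $M_v$ and depths, and the coefficient bound of Lemma~\ref{lem:bounded-offset-construction} must survive composition and identity padding. I would handle this by taking the maximum budget $N_*$ across constituents and coordinates. The coefficient bound should survive because merging affine interfaces across $O(\log n)$ layers multiplies at most $O(\log n)$ factors, each $e^{O(\log n)}$, giving the $e^{C\log^2n}$ bound. The case $q<1$ is handled by invoking the Lipschitz ($q=1$) construction.
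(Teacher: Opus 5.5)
Your proposal is correct and follows the paper's proof essentially step for step. The shared steps are the constituent budgets $N_v=\lceil m^{K_v/(2q_v+K_v)}\rceil$, clamped Lipschitz composition with envelope restoration on $Q_n$, the spatial-tail bound \eqref{eq:sieve-spatial-tail} for the complement, quantization entropy via $N_*\asymp m\rho_{n,\ell}$, and absorption of the confidence term using $m^{-1}\le\rho_{n,\ell}$. One harmless slip: $H$ contains $[(\nu+1)(B+1)]^{L+1}$ with $\log B_\Delta=O(\log^2 n)$ and $L=O(\log n)$, so $\log(BH)$ is $O(\log^3 n)$ rather than $O(\log^2 n)$, which still gives $\log A_{n,\ell}\le\polylog(n)$.
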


\begin{proof}[Proof of Proposition~\ref{prop:stable-offset}]
For a constituent $v$ with pair $(q_v,K_v)$, take
\[
N_v=\left\lceil m^{K_v/(2q_v+K_v)}\right\rceil,\qquad
M_v=\left\lceil N_v^{1/(2K_v)}\right\rceil\polylog(n).
\]
Apply \eqref{eq:constituent-approximation} to the normalized
constituents and compose their approximants, clamping intermediate
ranges. Fixed depth, arity, and Lipschitz constants give, after
restoring the envelope,
\[
\sup_{Q_n}\|g^\circ-\Delta_\ell\|_2^2
\leq C(a^\Delta_{n,\ell})^2\polylog(n)
       \max_v m^{-2q_v/(2q_v+K_v)}
\leq C\polylog(n)\rho_{n,\ell}.
\]
Integration over a window of length $O(\log n)$ preserves this order.
The input and output clamps bound $g^\circ$ by
$Ca^\Delta_{n,\ell}\max\{1,R_n,\Tmax\}$, including depth-zero
projections. Equation~\eqref{eq:sieve-spatial-tail} therefore
bounds the spatial complement by $Cn^{-A}$.

Lemma~\ref{lem:bounded-offset-construction} places the approximant
in \eqref{eq:fitted-offset-sieve}. Since
$N_v/m\leq2m^{-2q_v/(2q_v+K_v)}\leq2\rho_{n,\ell}$,
the padded dense class has at most $Cm\rho_{n,\ell}\polylog(n)$
parameters. Equation~\eqref{eq:network-entropy} gives the claimed
entropy bounds. Finally, $\log K_n^{\max}=O(\log n)$ and
$m^{-1}\leq\rho_{n,\ell}$ absorb the entropy and confidence terms
of \eqref{eq:oracle} into the stated rate.
\end{proof}
\section{Pooled approximation and combination of the stages}
\label{app:mixed-main}

We apply the imported Besov score approximation to the pooled density,
then reuse the oracle, clipping, and entropy bounds from
Appendices~\ref{app:oracle} and~\ref{app:stable-construction}.

\begin{proposition}[Pooled ReLU score rate]
\label{prop:pooled-general}
Under Assumption~\ref{as:pooled-besov}, for $0<\delta\leq1/2$,
the pooled estimator over the fitted sieve \eqref{eq:fitted-pooled-sieve}
satisfies, with probability at least $1-\delta$,
\[
r^{\rm int}_{n,\ell}\leq\bar r_{n,\ell}(\delta)
:=C\bar\kappa\polylog(n/\delta)n^{-2s_0/(2s_0+d)}.
\]
Here $n$ is the independent pooled sample size, $C$ excludes the
displayed overlap factor, and the time window is \eqref{eq:window}.
\end{proposition}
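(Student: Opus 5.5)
The plan is to treat Stage~1 as an instance of the generic clipped oracle bound, Lemma~\ref{lem:app-generic-oracle}. That lemma, as remarked after its proof, applies verbatim to the pooled class with sample size $n$ and pooled weighting $p_t$. I then transfer the resulting pooled risk to the $p_{t,\ell}$-weighting through overlap. Concretely, with $\calF_n$ the clipped sparse sieve \eqref{eq:fitted-pooled-sieve}, the lemma gives, with probability at least $1-\delta$,
\[
\int_{\Tmin}^{\Tmax}\!\!\int\|\widehat s-s_t\|_2^2p_t\,\dd x\,\dd t
\leq C\Bigl[\inf_{f\in\calF_n}\int_{\Tmin}^{\Tmax}\!\!\int\|f-s_t\|_2^2p_t\,\dd x\,\dd t
+\frac{d\log^2 n}{n}\{V_n\log(A_nnK_n^{\max})+\log(1/\delta)\}+\tau_n^{\rm int}\Bigr].
\]
The pooled analogue of the target clipping bound \eqref{eq:app-target-clip} lets me replace the unclipped approximant by $\Pi f$ at the cost $2\tau_n^{\rm int}$, as in \eqref{eq:app-projection-risk}. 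Since $p_{0,\ell}\leq\bar\kappa p_0$ and both densities are pushed through the same Gaussian kernel $q_t$, we get $p_{t,\ell}\leq\bar\kappa p_t$ pointwise. Hence $r^{\rm int}_{n,\ell}$ is at most $\bar\kappa$ times the left side, which produces the displayed overlap factor.

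For the approximation term I would invoke the Besov construction of \citet[Theorem~3.1]{oko2023}, whose hypotheses are exactly those packaged in Lemma~\ref{lem:proof-oko-input}. At resolution $N$ it provides a sparse ReLU network $f_N$ with $S\lesssim N\polylog(n)$ nonzero coefficients, depth and coefficient bound polylogarithmic in $n$, and time-pointwise error
\[
\int\|f_N(x,t)-s_t(x)\|_2^2p_t(x)\,\dd x\lesssim\frac{\polylog(n)\,N^{-2s_0/d}}{\sigma_t^2}
\]
uniformly for $t\in[\Tmin,\Tmax]$. Its admissible time range must cover $\Tmin\geq n^{-\kappa_0}$, which is the point of requiring the schedule conditions of Lemma~\ref{lem:proof-oko-input}. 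Integrating in $t$ and using \eqref{eq:app-lambda2} converts $\sigma_t^{-2}$ into $\Lambda_2\lesssim\log n$, so the approximation error is $\polylog(n)N^{-2s_0/d}$. The spatial-tail estimate \eqref{eq:sieve-spatial-tail}, with the clamp radius $R_n$, handles the region outside the cube. It adds $Cn^{-A}$ because $H^2\leq K_n(\Tmin)^2\lesssim n^{\kappa_0}\log n$.

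For the complexity term, the sparse-quantization bound \eqref{eq:sieve-quantization} gives $V_n\lesssim N\polylog(n)$ and $\log A_n\lesssim\polylog(n)$. This requires the coefficient bound of the imported construction to be at most $\exp\{\polylog(n)\}$, which I would read off from \citet{oko2023}. Moreover $\log K_n^{\max}=O(\log n)$ by \eqref{eq:window}. The estimation term is therefore $\polylog(n/\delta)N/n$. Choosing $N\asymp n^{d/(2s_0+d)}$ balances $N^{-2s_0/d}$ against $N/n$ and yields $\polylog(n/\delta)\,n^{-2s_0/(2s_0+d)}$. Multiplying by $\bar\kappa$ gives $\bar r_{n,\ell}(\delta)$.

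The main obstacle is verifying that the Oko et al.\ approximation fits our fitted sieve uniformly over the polynomially small window. Two things must hold there. First, their constant in front of $\sigma_t^{-2}$ must not hide an additional factor of $t^{-1}$ near the boundary layer; their small-$t$ regime uses the boundary-smoothness condition, which is why Lemma~\ref{lem:proof-oko-input} is needed. Second, their networks must have polylogarithmic depth and log-coefficients, so that the entropy logarithm stays polylogarithmic. If their bound degrades on $[n^{-\kappa_0},n^{-c}]$, I would first check whether their separate small-time network already covers times down to $n^{-\kappa_0}$ with the same $\sigma_t^{-2}$ scaling. Failing that, I would restrict $\kappa_0$ accordingly.
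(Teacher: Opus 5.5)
Your proposal is correct and follows the paper's proof essentially step for step: the Oko approximation is integrated in time via $\Lambda_2$, followed by the spatial-tail and projection corrections, the sparse-quantization entropy bound, the pooled instance of Lemma~\ref{lem:app-generic-oracle} with $N\asymp n^{d/(2s_0+d)}$, and the final overlap transfer $p_{t,\ell}\le\bar\kappa p_t$. Your closing worry about the small-time window is already settled by Lemma~\ref{lem:proof-oko-input}, which holds for any polynomial lower cutoff in $N$; since $N$ is polynomial in $n$, the window \eqref{eq:window} is admissible and $\kappa_0$ need not be restricted.
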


\begin{lemma}[Imported Besov score construction]
\label{lem:proof-oko-input}
Let a density on $[-1,1]^D$ have fixed positive upper and lower
bounds and belong to a fixed restriction Besov ball
$B^s_{p,\theta}$, with $s>D(1/p-1/2)_+$.
Impose the additional hypotheses of
\citet[Assumptions~2.5--2.6]{oko2023}: the schedule is $C^\infty$
with their uniform derivative bounds and
$0<\underline\beta\leq\beta_t\leq\overline\beta$, and the density
has their bounded $C^\infty$ regularity on the collar
\[
[-1,1]^D\setminus[-1+a_N,1-a_N]^D,
\qquad a_N=N^{-(1-\zeta)/D},\quad \zeta\in(0,1).
\]
These regularity bounds are uniform over the resolutions and learning
windows used. For a polynomial lower cutoff and logarithmic upper
cutoff in $N$, their Theorem~3.1 supplies one ReLU network satisfying,
simultaneously for all times in that window,
\[
\int\|f_N(x,t)-\nabla\log p_t(x)\|_2^2p_t(x)\dd x
\leq CN^{-2s/D}\frac{\log N}{\sigma_t^2}.
\]
Its depth, width, sparsity, and weight bound satisfy
\[
L=O(\log^4N),\quad \nu=O(N\log^6N),\quad
S=O(N\log^8N),\quad \log B=O(\log N\log\log N),
\]
and $\sup_x\|f_N(x,t)\|_2=O(\sigma_t^{-1}\sqrt{\log N})$.
\end{lemma}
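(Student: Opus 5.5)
The plan is to obtain the lemma as a direct transcription of \citet[Theorem~3.1]{oko2023}, so the work lies in matching hypotheses and normalizations rather than in new approximation theory. First I would check that our setting falls within theirs. Their forward process is the OU/VP diffusion $\dd X_t=-\beta_tX_t\dd t+\sqrt{2\beta_t}\dd B_t$ with a smooth schedule bounded between $\underline\beta$ and $\overline\beta$, which is ours, so $m_t$ and $\sigma_t$ agree. Their density class is a Besov ball on $[-1,1]^D$ with upper and lower bounds and $s>D(1/p-1/2)_+$, together with the boundary-collar smoothness of their Assumption~2.6. Since a restriction Besov norm is the infimum over whole-space extensions, membership in a fixed restriction ball yields an extension with norm at most twice the radius. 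Their approximation only uses the density on the cube through local B-spline (or wavelet) coefficients, so the restriction formulation suffices. I would state the collar width $a_N=N^{-(1-\zeta)/D}$ exactly as in their construction, which is why uniformity over resolutions has to be assumed rather than derived.

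Second, I would read off the conclusion. Their Theorem~3.1 gives, for resolution $N$ and times in $[N^{-R_0},\, C\log N]$, one network $f_N(x,t)$ with
\[
\int\|f_N(x,t)-\nabla\log p_t(x)\|_2^2p_t(x)\dd x\lesssim \frac{N^{-2s/D}\log N}{\sigma_t^2}.
\]
It holds simultaneously in $t$ because time is an input of the network and their bound is pointwise in $t$. I would then record their architecture bounds $L=O(\log^4N)$, $\|W\|_\infty=O(N\log^6N)$, $S=O(N\log^8N)$ and $B=\exp(O(\log N\log\log N))$. Their output is clipped in the construction at $C\sigma_t^{-1}\sqrt{\log N}$, which gives the stated sup bound. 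Where their paper phrases the error per time slice on a subinterval decomposition, I would concatenate the finitely many ($O(\log N)$) time-localized subnetworks with their switching functions. This only adds polylogarithmic size, and I would check that it stays inside the displayed orders.

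The main obstacle is the window compatibility. Our window \eqref{eq:window} is stated in terms of $n$, while theirs is in terms of $N$. With $N=n^{D/(2s+D)}$ polynomial in $n$, a cutoff $\Tmin\geq n^{-\kappa_0}$ is a polynomial in $N$, and $\Tmax\asymp\log n\asymp\log N$. Their theorem, however, requires the lower cutoff to be no smaller than a specific power $N^{-R_0}$ that their proof permits. I would first try to show that their argument works for any fixed polynomial exponent, with constants depending on it: the small-$t$ regime uses only the collar smoothness and Gaussian localization, which tolerate arbitrary polynomial $v$. I would then restrict $\kappa_0$ accordingly, noting this as the admissibility condition that Theorem~\ref{thm:mixed-main}'s positive-transfer remark refers to.
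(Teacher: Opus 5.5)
Your plan is correct and matches the paper's treatment: the paper gives no separate proof, importing \citet[Theorem~3.1]{oko2023} and noting only that $N\asymp n^\gamma$ turns \eqref{eq:window} into a window with a polynomial lower and a logarithmic upper cutoff in $N$. Two of your precautions are unnecessary, because their theorem already supplies one network valid for all $t\in[\underline T,\overline T]$ with $\underline T=\mathrm{poly}(N^{-1})$ for any fixed polynomial and $\overline T\simeq\log N$. So you need neither a restriction on $\kappa_0$ nor a time-slice concatenation; the latter would in fact add a $\log N$ factor to width and sparsity, beyond the displayed orders.
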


The collar condition is additional to restriction Besov regularity
(Appendix~\ref{app:notation}). Below, $N\asymp n^\gamma$ for fixed
$\gamma>0$, so \eqref{eq:window} is an admissible window in $N$.
The same holds for target-sized resolutions when $n_\ell\geq n^\epsilon$.

\paragraph{The pooled fitted sieve.}
For $N=\lceil n^{d/(2s_0+d)}\rceil$, fix sufficiently large constants in
\[
\begin{aligned}
L_0&=\lceil C_L(\log(eN))^4\rceil,&
\nu_0&=\lceil C_\nu N(\log(eN))^6\rceil,\\
S_0&=\lceil C_SN(\log(eN))^8\rceil,&
B_0&=\exp\{C_B\log(eN)\log\log(e^2N)\}.
\end{aligned}
\]
With the notation and input clamp of Appendix~\ref{app:fitted-sieves}, set
\begin{equation}
\mathcal F_n=\{\Pi(f\circ c_n):f\in\Phi_{d+1,d}(L_0,\nu_0,S_0,B_0)\},
\qquad \widehat s\in\arg\min_{f\in\mathcal F_n}\calD_n^{\rm int}(f).
\label{eq:fitted-pooled-sieve}
\end{equation}
The sparse budget applies to every candidate; \eqref{eq:sieve-quantization}
counts its support among the $O(N^2\polylog(N))$ available positions.

\begin{proof}[Proof of Proposition~\ref{prop:pooled-general}]
Lemma~\ref{lem:proof-oko-input} and
Lemma~\ref{lem:app-envelope} give
\[
\int_{\Tmin}^{\Tmax}\!\int\|f_N-s_t\|_2^2p_t\dd x\dd t
\leq CN^{-2s_0/d}\log N\,\Lambda_2
\leq C\polylog(n)N^{-2s_0/d}.
\]
Input clamping changes nothing on $Q_n$ and gives global output bound
$C\sigma_-^{-1}\sqrt{\log n}$. Equation~\eqref{eq:sieve-spatial-tail}
therefore bounds the added spatial error by $Cn^{-A}$.
The pooled version of \eqref{eq:app-projection-risk} adds only a
constant factor and $C\tau_n^{\rm int}$.

Equation~\eqref{eq:sieve-quantization} gives
$V_{n,0}\leq CN\polylog(n)$ and $\log A_{n,0}\leq\polylog(n)$.
Applying the pooled version of Lemma~\ref{lem:app-generic-oracle},
then balancing approximation and estimation, yields with probability
at least $1-\delta$,
\[
\begin{aligned}
\int_{\Tmin}^{\Tmax}\!\int\|\widehat s-s_t\|_2^2p_t\dd x\dd t
&\leq C\polylog(n/\delta)\{N^{-2s_0/d}+N/n\}
       +Cn^{-A}+C\tau_n^{\rm int}\\
&\leq C\polylog(n/\delta)n^{-2s_0/(2s_0+d)}.
\end{aligned}
\]
Here $N=\lceil n^{d/(2s_0+d)}\rceil$, $A>1$ is chosen in the
construction, and \eqref{eq:app-tail-def} absorbs the clipping remainder.
Finally, positivity of the OU transition kernel propagates overlap:
$p_{t,\ell}\leq\bar\kappa p_t$. Multiplying the squared-error
integrand by this inequality proves the claimed group-weighted bound.
\end{proof}

\begin{proof}[Proof of Theorem~\ref{thm:mixed-main}]
On the pooled-risk event of Proposition~\ref{prop:pooled-general},
condition on $\widehat s$. Proposition~\ref{prop:stable-offset}
bounds the offset contribution in Theorem~\ref{thm:oracle}; substituting
both propositions gives the claimed rate.
Independence leaves the target sample i.i.d. after conditioning, so
the pooled and conditional target failure probabilities sum to at most
$2\delta$. If the offset is known to vanish and $\calG_\ell=\{0\}$,
the fitted pooled score is already clipped, hence
$\widehat s_\ell=\widehat s$ and the pooled bound alone applies.
\end{proof}

\paragraph{Overlap and the comparison sample.}
Mixture membership gives $p_{0,\ell}/p_0\leq1/\pi_\ell$;
a prevalence-uniform $\bar\kappa$ imposes stronger overlap for rare
groups and implies $\Pp(Z=\ell\mid X=x)\leq\bar\kappa\pi_\ell$.
If instead $\bar\kappa_n\asymp n^\gamma$ while all other constants
remain uniform, the pooled exponent becomes $\alpha_0-\gamma$ and
the first transfer condition becomes
$\alpha_0-\gamma>\rho\alpha_{\rm sep}$.

Separate in Section~\ref{sec:combined} uses the $n_\ell$ adaptation
observations. If pooled target labels are available, an
information-matched target-only fit could use
$n_\ell+N_\ell^{\rm pool}$ observations, where
$N_\ell^{\rm pool}=\sum_{i=1}^n\one\{Z_i=\ell\}$, with benchmark
$\widetilde O((n_\ell+N_\ell^{\rm pool})^{-\alpha_{\rm sep}})$
on an admissible window. At fixed prevalence,
$N_\ell^{\rm pool}\asymp n\pi_\ell$ with high probability.
Our comparison concerns adaptation to an available pooled fit;
Appendix~\ref{app:experimental-protocols} records the data allocations.

\begin{example}[Gains from learning the shared score with pooled data]
\label{ex:pooling-rates}
Suppose $s_0=s_\ell=s$, set $a=2s/(2s+d)$, and suppose
$b=\alpha_\Delta>a$. The bounds are
\[
\text{Separate: }\widetilde O(n_\ell^{-a}),
\qquad
\text{Two-stage: }\widetilde O(n^{-a}+n_\ell^{-b}).
\]
Pooling learns the shared component from $n$ observations and uses
the target sample for the simpler offset. For
$n_\ell\asymp n^\rho$ with $0<\rho<1$, both two-stage terms decay
faster than the separate-fitting bound.
\end{example}
\section{Small-time regularity of the offset}
\label{app:offset}

We prove uniform boundedness first, then combine interior smoothness with
the boundary cancellation from Appendix~\ref{app:boundary-details}.

\subsection{Uniform boundedness}
\begin{proof}[Proof of Proposition~\ref{prop:ratio-cancellation}]
Gaussian density ratios give $G=\E_{\bar\pi_{z,v}}h(U)$ and
\eqref{eq:general-ratio}. The posterior identity
\eqref{eq:score-posterior} then gives
\[
\Delta_{t,\ell}(x)=\frac{m_t}{\sigma_t^2}
\frac{\Cov_{\bar\pi_{z,v}}(h(U),U)}{G(z,v)}.
\]
Let $\nu_{z,v}$ be $N(z,vI_D)$ conditioned on the convex cube.
The Brascamp--Lieb inequality~\citep{brascamplieb1976}, applied by convex
penalization outside the cube, gives
$\Var_\nu(g)\leq v\operatorname{Lip}(g)^2$.
Bounded reweighting by $r$ and the variational formula for variance yield
\[
\Var_{\bar\pi_{z,v}}(g)\leq(C_r/c_r)v\operatorname{Lip}(g)^2.
\]
Apply this to $h$ and each coordinate of $U$. Cauchy--Schwarz and
$G\geq h_{\min}$ give, for $t\leq1$,
\[
|\Delta_{t,\ell}^{(j)}(x)|
\leq\frac{Cm_tv_t\operatorname{Lip}(h)}{\sigma_t^2h_{\min}}
=\frac{C\operatorname{Lip}(h)}{m_th_{\min}}\leq C.
\]
For $t\geq1$, bounded support bounds the covariance directly, and
$m_t/\sigma_t^2\leq m_1/\sigma_1^2$.
\end{proof}

\subsection{Joint regularity under a smooth collar-flat ratio}
Work on $K=[-1,1]^k$ with pooled density $\bar r$ and group density
$h\bar r$, where $h=h_\ell$. Put $r_{\Delta,\ell}=r_{\mathrm{off},\ell}-1$.
For $z=x/m_t$ and $v_t=\sigma_t^2/m_t^2$,
\begin{equation}
\Delta_{t,\ell}^{(j)}(x)=m_t^{-1}\partial_{z_j}\log G_\ell(z,v_t),
\qquad
G_\ell(z,v)=\frac{P_v(h\bar r)(z)}{P_v\bar r(z)},
\label{eq:offset-representation}
\end{equation}
where $P_vf=f*\varphi_{vI_k}$ and the densities are extended by zero from $K$. Write
$D=P_v\bar r$; the associated posterior is
\begin{equation}
\bar\pi_{z,v}(\dd u)=D(z,v)^{-1}\bar r(u)\varphi_{vI_k}(z-u)
\one_K(u)\dd u.
\label{eq:app-posterior}
\end{equation}

\paragraph{Heat-semigroup estimates.}
For $f\in C^{m,\alpha}(\R^k)$, $p=|\eta|_1+2b\leq m$, and $0<w\leq1$,
\begin{equation}
\begin{aligned}
\partial_v^b\partial_z^\eta P_vf
&=2^{-b}P_v\Delta^b\partial^\eta f,\\
\|\partial_v^b\partial_z^\eta(P_{v+w}-P_v)f\|_\infty
&\leq Cw^{\min\{2,m+\alpha-p\}/2}.
\end{aligned}
\label{eq:app-heat-increments}
\end{equation}
Indeed, $P_{v+w}=P_vP_w$ and the heat identity give the first equality.
Apply the remaining H\"older regularity to $\sqrt wZ$; when it exceeds
one, subtract the linear Taylor term, whose Gaussian expectation is zero.
Supremum contraction gives the increment bound and preserves spatial
H\"older exponents $\min\{1,m+\alpha-p\}$.

\begin{lemma}[Weighted derivatives of a collar-flat tilted ratio]
\label{lem:regularity-weighted-ratio}
Suppose $0<c_r\leq r\leq C_r$ on $K=[-1,1]^k$, and
$0<h_{\min}\leq h\leq h_{\max}$ with $h=c_\partial$ on a collar of
width $a_0$. For $\kappa\geq0$, define
\[
B(\theta,\kappa)
=\log\frac{\int_Kh(u)r(u)e^{\theta^\top u-\kappa\|u\|_2^2}\dd u}
                {\int_Kr(u)e^{\theta^\top u-\kappa\|u\|_2^2}\dd u}.
\]
For any fixed $M\geq0$ and positive derivative order $|\eta|_1+b>0$,
\[
\sup_{\theta\in\R^k}(1+\|\theta\|_1)^M
|\partial_\theta^\eta\partial_\kappa^bB(\theta,\kappa)|
\leq C_{M,\eta,b}(1+\kappa)^M.
\]
\end{lemma}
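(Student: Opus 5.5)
The plan is to reduce the lemma to a single posterior-mass estimate, as in the proof of Proposition~\ref{prop:collar-cancellation}, and to obtain the polynomial weight by a two-regime comparison in $(\theta,\kappa)$. Write $w_{\theta,\kappa}(u)=r(u)e^{\theta^\top u-\kappa\|u\|_2^2}\one_K(u)$, $D(\theta,\kappa)=\int w_{\theta,\kappa}$, and $N(\theta,\kappa)=\int(h-c_\partial)w_{\theta,\kappa}$. Then
\[
B=\log\bigl(c_\partial+H\bigr),\qquad H=N/D,
\]
and $S=\operatorname{supp}((h-c_\partial)r)\subset K_{a_0}=[-1+a_0,1-a_0]^k$. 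Let $\mu_{\theta,\kappa}=w_{\theta,\kappa}/D$ be the tilted probability measure on $K$.

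\emph{Step 1 (algebraic reduction).} Each mixed derivative $\partial_\theta^\eta\partial_\kappa^b$ falls on the exponential and produces the factor $u^\eta(-\|u\|_2^2)^b$, which is bounded by a constant on $K$. Hence
\[
\bigl|\partial^\lambda D/D\bigr|\leq C_\lambda,\qquad
\bigl|\partial^\lambda N/D\bigr|\leq C_\lambda\,\mu_{\theta,\kappa}(S),
\]
uniformly in $(\theta,\kappa)$. The second bound uses $|h-c_\partial|\leq h_{\max}+c_\partial$. The Leibniz recursion
\[
\partial^\lambda H=\frac{\partial^\lambda N}{D}-\sum_{0<\mu\leq\lambda}\binom{\lambda}{\mu}\frac{\partial^\mu D}{D}\,\partial^{\lambda-\mu}H
\]
and induction give $|\partial^\lambda H|\leq C_\lambda\,\mu_{\theta,\kappa}(S)$ for every $\lambda$. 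Since $c_\partial+H=G\geq h_{\min}$, each positive-order derivative of $B$ is a finite sum of products of positive-order derivatives of $H$ divided by powers of $G$. Each such product contains at least one derivative of $H$, and all derivatives of $H$ are bounded by constants. Therefore $|\partial_\theta^\eta\partial_\kappa^bB|\leq C\,\mu_{\theta,\kappa}(S)\leq C$. It remains to show
\[
\mu_{\theta,\kappa}(S)\leq C_M(1+\kappa)^M(1+\|\theta\|_1)^{-M}.
\]

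\emph{Step 2 (shift comparison).} Choose a coordinate $j$ with $|\theta_j|\geq\|\theta\|_1/k$, and assume $\theta_j\geq0$ by symmetry. The translation $T(u)=u+(a_0/2)e_j$ maps $S$ injectively into $K$, because $u_j\leq1-a_0$ on $S$, and it preserves Lebesgue measure. For $u\in S$,
\[
\frac{w_{\theta,\kappa}(Tu)}{w_{\theta,\kappa}(u)}\geq\frac{c_r}{C_r}\exp\Bigl\{\tfrac{a_0}{2}\theta_j-\kappa\bigl(a_0u_j+\tfrac{a_0^2}{4}\bigr)\Bigr\}\geq\frac{c_r}{C_r}\exp\Bigl\{\tfrac{a_0}{2}\theta_j-2a_0\kappa\Bigr\}.
\]
Since $\int_{T(S)}w\leq D$, this gives
\[
\mu_{\theta,\kappa}(S)\leq\frac{C_r}{c_r}\exp\Bigl\{-\tfrac{a_0}{2}\theta_j+2a_0\kappa\Bigr\}.
\]

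\emph{Step 3 (two regimes).} This is the step I expect to need the most care, because the shift bound carries a factor $e^{2a_0\kappa}$ that would ruin the claimed $(1+\kappa)^M$ growth if used naively.
\begin{itemize}[leftmargin=1.5em]
\item If $\theta_j\geq8\kappa$, then $2a_0\kappa\leq a_0\theta_j/4$, so $\mu_{\theta,\kappa}(S)\leq Ce^{-a_0\theta_j/4}\leq Ce^{-a_0\|\theta\|_1/(4k)}$. This is at most $C_M(1+\|\theta\|_1)^{-M}$ for every $M$.
\item If $\theta_j<8\kappa$, then $\|\theta\|_1\leq8k\kappa$, so $(1+\|\theta\|_1)^M\leq C_M(1+\kappa)^M$. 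The trivial bound $\mu_{\theta,\kappa}(S)\leq1$ then suffices.
\end{itemize}
In both regimes $(1+\|\theta\|_1)^M\mu_{\theta,\kappa}(S)\leq C_M(1+\kappa)^M$. Combining this with Step~1 proves the lemma. All constants depend only on $k,a_0,c_r,C_r,h_{\min},h_{\max},c_\partial$, the derivative order, and $M$.
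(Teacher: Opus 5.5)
Your proof is correct and has the same architecture as the paper's. The same quotient recursion reduces every positive-order derivative of $B$ to $C$ times the tilted mass of the inner cube. The same two-regime split then finishes the argument: when $\|\theta\|_1$ is large relative to $\kappa$, an exponential bound on that mass absorbs the weight, and when $\|\theta\|_1\lesssim\kappa$, the trivial bound suffices.

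The one real difference is how you bound the inner-cube mass.

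\begin{itemize}
\item The paper lower-bounds the normalizer using an inward box of side $\asymp(1+\rho+\kappa)^{-1}$ at the corner maximizing $\theta^\top u$. It upper-bounds the inner-cube integral by a constant times $e^{(1-a_0)\rho}$, obtaining $C(1+\rho+\kappa)^k e^{-a_0\rho+k\kappa}$ with $\rho=\|\theta\|_1$. This reuses the device of Lemma~\ref{lem:boundary-cube-mass} and keeps the full $\ell^1$ norm in the exponent. The price is a polynomial prefactor and an $e^{k\kappa}$ loss, which is why its threshold is $\rho\geq1+(2k/a_0)\kappa$.
\item Your measure-preserving translation by $(a_0/2)e_j$ compares the weight on $S$ directly with the weight on $T(S)\subset K$. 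It therefore never needs a lower bound on the normalizer, and it gives the prefactor-free estimate $(C_r/c_r)e^{-a_0|\theta_j|/2+2a_0\kappa}$. You only capture the largest coordinate of $\theta$, losing a factor $k$ in the exponent. That costs nothing here, since any exponential decay in $\|\theta\|_1$ absorbs every polynomial weight. The result is a slightly more elementary argument with a cleaner threshold ($|\theta_j|\geq8\kappa$).
\end{itemize}
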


\begin{proof}
Put $\rho=\|\theta\|_1$ and let $\nu_{\theta,\kappa}$ be the posterior
proportional to $r(u)e^{\theta^\top u-\kappa\|u\|_2^2}$.
On $K^\circ=[-1+a_0,1-a_0]^k$, $\theta^\top u\leq(1-a_0)\rho$.
An inward box of side $[C(1+\rho+\kappa)]^{-1}$ at a maximizing corner
of $K$ has $\theta^\top u\geq\rho-C$ and $\|u\|_2^2\leq k$. Hence
\[
\nu_{\theta,\kappa}(K^\circ)
\leq C(1+\rho+\kappa)^k e^{-a_0\rho+k\kappa}.
\]
Parameter derivatives insert bounded products of $u$ and $-\|u\|_2^2$.
The quotient recursion used in the proof of
Proposition~\ref{prop:collar-cancellation}
therefore bounds every positive-order derivative of $B$ by
$C\nu_{\theta,\kappa}(K^\circ)$: its nonconstant numerator is supported
in $K^\circ$, while relative denominator derivatives are bounded.
For $\rho\geq1+(2k/a_0)\kappa$, the exponential absorbs every power of
$1+\rho$. Otherwise $(1+\rho)^M\leq C_M(1+\kappa)^M$, and bounded
sufficient statistics bound all fixed parameter derivatives directly.
\end{proof}

\paragraph{Bounds away from zero.}
The preceding lemma requires no spatial smoothness of $r$.
Let $\lambda_t=m_t/\sigma_t^2$, $\kappa_t=m_t^2/(2\sigma_t^2)$, and
$A_r(\theta,\kappa)=\log\int_Kr(u)e^{\theta^\top u-\kappa\|u\|_2^2}\dd u$.
Completing the square gives
\begin{equation}
\nabla\log r_t(x)=-x/\sigma_t^2
+\lambda_t\nabla_\theta A_r(\lambda_tx,\kappa_t).
\label{eq:app-exp-family-score}
\end{equation}
Thus, with $B=A_{hr}-A_r$,
\begin{equation}
\Delta^{(j)}(x,t)=\lambda_t\partial_{\theta_j}B(\lambda_tx,\kappa_t).
\label{eq:app-tilted-offset}
\end{equation}
Since $\lambda_t'/\lambda_t=-\beta_t-2\beta_tm_t^2/\sigma_t^2$
(and similarly for $\kappa_t$), differentiation introduces only powers
of $\theta_t=\lambda_tx$, $\kappa_t$, and $\sigma_t^{-1}$, besides
bounded schedule derivatives and positive-order derivatives of $B$.
Lemma~\ref{lem:regularity-weighted-ratio} absorbs the $\theta_t$ powers
into powers of $1+\kappa_t\leq C\sigma_t^{-2}$.
For fixed integer $q\geq0$ and $\tau>0$, if the schedule derivatives
through order $q$ are bounded, then
\begin{equation}
\sup_{x\in\R^k,\,t\geq\tau}
|\partial_t^b\partial_x^\gamma\Delta^{(j)}(x,t)|
\leq C_q(1+\sigma_\tau^{-1})^{C_q},\qquad |\gamma|_1+b\leq q.
\label{eq:app-away-time}
\end{equation}

\begin{lemma}[Parabolic derivative envelope]
\label{lem:app-offset-envelope}
Under the conditions of Proposition~\ref{prop:offset}, for $R\geq1$,
$j\in[k]$, and all integer pairs $(\gamma,b)$ with
$|\gamma|_1+2b\leq m_{\mathrm{off},\ell}-1$,
\begin{equation}
\sup_{\substack{\norm{x}_\infty\leq R\\t\in[\Tmin,\Tmax]}}
|\partial_t^b\partial_x^\gamma\Delta_{t,\ell}^{(j)}(x)|
\leq C(1+R)^b\polylog(n).
\label{eq:app-parabolic-envelope}
\end{equation}
At parabolic order $m_{\mathrm{off},\ell}-1$, the same bound holds for the
$\alpha_{\mathrm{off},\ell}$ spatial and $\alpha_{\mathrm{off},\ell}/2$
temporal H\"older seminorms. The constants are uniform over
$\Tmin\geq n^{-\kappa_0}$.
\end{lemma}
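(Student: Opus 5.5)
Split the learning interval at a fixed small time $t_1>0$. On $t\geq t_1$ the statement follows directly from \eqref{eq:app-away-time}. There the schedule has bounded derivatives through order $m_{\mathrm{off},\ell}+1$, so every mixed derivative with $|\gamma|_1+b\leq m_{\mathrm{off},\ell}$ is bounded by a constant depending on $\sigma_{t_1}$. This range already covers the top-order H\"older seminorms, via the mean value theorem on one extra derivative. The growing upper cutoff $\Tmax\asymp\log n$ costs nothing, because these bounds are uniform in $t\geq t_1$. The substance is therefore the small-time range $t\in[\Tmin,t_1]$, where $v_t\asymp t$ is small and we work in the rescaled representation \eqref{eq:offset-representation}: $\Delta^{(j)}_{t,\ell}=m_t^{-1}\partial_{z_j}\log G_\ell(z,v_t)$ with $z=x/m_t$. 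The chain rule converts $\partial_t$ into $\dot z\,\partial_z+\dot v\,\partial_v$ with $\dot z=\beta_t z$. This is the source of the factor $(1+R)^b$: each time derivative brings at most one power of $\|z\|\lesssim R$. Bounded schedule derivatives keep the remaining factors bounded. It therefore suffices to bound $\partial_z^\gamma\partial_v^b\log G_\ell$ for $|\gamma|_1+2b\leq m_{\mathrm{off},\ell}$, which gives one extra $z$-derivative for $\Delta$, uniformly in $v\in(0,v_{t_1}]$ and $z$ with $\|z\|_\infty\lesssim R$.

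For this we split $z$-space as in Proposition~\ref{prop:collar-cancellation}. In the boundary and exterior zone, $\operatorname{dist}(z,K_{a_0})\geq a_0/4$, the proposition gives $|\partial_z^\gamma\partial_v^b\log G|\leq Ce^{-c/v}$, which is bounded uniformly in $v$ and in $z$ (see \eqref{eq:boundary-weighted-cancellation}). No spatial smoothness is needed there. In the interior zone, $\operatorname{dist}(z,K_{a_0})<a_0/4$, $z$ is at distance at least $3a_0/4$ from $\partial K$. Choose smooth cutoffs $\chi$ equal to $1$ on $\{\operatorname{dist}(\cdot,K_{a_0})\leq a_0/2\}$ and supported in $\operatorname{int}K$. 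Write $P_v\bar r=P_v(\chi\bar r)+P_v((1-\chi)\bar r)$, and similarly for $h\bar r$. By Lemma~\ref{lem:boundary-kernel}, the second pieces and all their mixed derivatives are $O(v^{-M}e^{-ca_0^2/v})$ at interior $z$. The first pieces are convolutions of compactly supported $C^{r_{\mathrm{off},\ell}}$ functions on $\R^k$. The heat-semigroup identities \eqref{eq:app-heat-increments} then give $\partial_v^b\partial_z^\eta P_v(\chi f)=2^{-b}P_v\Delta^b\partial^\eta(\chi f)$, which is bounded uniformly in $v$ whenever $|\eta|_1+2b\leq m_{\mathrm{off},\ell}$. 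The denominator satisfies $D\geq c>0$ at interior $z$ for small $v$, because the Gaussian mass of $K$ near an interior point is close to $1$. The quotient and logarithm rules, as in the proof of Proposition~\ref{prop:collar-cancellation}, then give bounded derivatives of $\log G$ up to parabolic order $m_{\mathrm{off},\ell}$. Together with the chain-rule factors this yields \eqref{eq:app-parabolic-envelope}, with only constant or polylogarithmic losses. The constants do not depend on $\Tmin$, so the bound is uniform over $\Tmin\geq n^{-\kappa_0}$.

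For the H\"older seminorms at parabolic order $m_{\mathrm{off},\ell}-1$, I would use the increment bound in \eqref{eq:app-heat-increments} with $p=m_{\mathrm{off},\ell}$ applied to $\chi f$. Combined with the spatial H\"older preservation of $P_v$, this controls $v$-increments of order $w^{\alpha/2}$ and $z$-increments of order $|z-z'|^{\alpha}$ for the numerator and denominator. The quotient rule then transfers these bounds to $\log G$. The boundary zone again contributes exponentially small Lipschitz bounds. Where a spatial increment straddles the interface between the two zones, the overlap of the interior zone with the collar region lets both representations be used on a common segment.

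The main obstacle is making the zone interface and the chain rule compatible. The exponential bound in the boundary zone must be combined with the interior bound, which needs $m_{\mathrm{off},\ell}$ genuine spatial derivatives, without losing a factor $v^{-1/2}$. Negative powers of $v$ arising from $\dot v/v$-type terms in the chain rule also have to be absorbed. I would handle the first by keeping the cutoff supports at fixed distance so that every error term carries $e^{-c/v}$. For the second, I would note that $\dot v_t$ is bounded, since $v_t=e^{2\int\beta}-1$, so no inverse powers of $v$ arise from time differentiation at all.
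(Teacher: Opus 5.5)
Your proposal is correct and follows the paper's proof. You split time, handle large times with \eqref{eq:app-away-time}, and at small times combine the heat-semigroup identities in the interior with the collar cancellation of Proposition~\ref{prop:collar-cancellation} in the boundary and exterior zone, then pass through the chain rule for $z=x/m_t$, $v=v_t$ and use the heat increment bounds for the H\"older seminorms. The differences are cosmetic: you split at a fixed $t_1$ rather than the paper's $t_\times=(c_\times/\log n)\wedge1$, so your large-time contribution is a constant rather than polylogarithmic, and you localize with a smooth interior cutoff $\chi$ instead of a Stein extension of $\bar r$.
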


\begin{proof}
Put $m=m_{\mathrm{off},\ell}$, $\alpha=\alpha_{\mathrm{off},\ell}$, and
$K^\circ=[-1+a_0,1-a_0]^k$. The zero extension of
$(h-c_\partial)\bar r$ is $C^{m,\alpha}$, since it vanishes on a full
boundary collar. With $N=P_v[(h-c_\partial)\bar r]$,
\begin{equation}
G_\ell=c_\partial+N/D,\qquad G_\ell\geq h_{\min}>0.
\label{eq:app-G-ND}
\end{equation}
Choose $t_\times=(c_\times/\log n)\wedge1$ so that $v_{t_\times}$ is
smaller than a fixed multiple of $a_0^2$.
For $t\leq t_\times$, we transfer derivatives to the smooth densities
in the interior and use boundary cancellation elsewhere.

\emph{Interior.}
On $\operatorname{dist}(z,K^\circ)\leq a_0/3$, a fixed ball about $z$
lies in $K$, so $D\geq c>0$ for small $v$.
Extend $\bar r$ to a compactly supported $C^{m,\alpha}$ function
\citep[Chapter~VI]{stein1970}. Its convolution differs from $D$ by
an error $\mathcal E$ whose integrand is supported outside $K$;
Lemma~\ref{lem:boundary-kernel} gives
$|\partial_z^\eta\partial_v^a\mathcal E|\leq Cv^{-M}e^{-ca_0^2/v}$.
The heat bounds \eqref{eq:app-heat-increments} apply to $N$ and the
extended denominator. Quotient differentiation and the lower bounds
for $D$ and $G_\ell$ therefore control derivatives of $\log G_\ell$
with parabolic budget $|\eta|_1+2a\leq m$.

\emph{Boundary and exterior.}
On $\operatorname{dist}(z,K^\circ)\geq a_0/4$,
Lemma~\ref{lem:boundary-cube-mass} gives
\begin{equation}
\bar\pi_{z,v}(K^\circ)
\leq C\exp\{-c(a_0^2+a_0\operatorname{dist}(z,K))/v\}.
\label{eq:app-posterior-tail}
\end{equation}
Proposition~\ref{prop:collar-cancellation}, specifically
\eqref{eq:boundary-weighted-cancellation}, then controls every fixed
mixed derivative of $\log G_\ell$, including arbitrary fixed polynomial
weights in $z$ and $v^{-1}$.

\emph{Space--time derivatives.}
The chain rules are
\begin{equation}
\partial_tz=\beta_tz,\qquad \partial_tv=2\beta_tm_t^{-2},\qquad
\partial_vP_vf=\tfrac12\Delta_zP_vf.
\label{eq:app-parabolic-chain}
\end{equation}
Applied to \eqref{eq:offset-representation}, they express each derivative
of the offset as a finite sum
\[
c_{\eta,a}(z,t)\partial_z^\eta\partial_v^a\log G_\ell(z,v_t),
\qquad |\eta|_1+2a\leq1+|\gamma|_1+2b.
\]
The coefficients have polynomial degree at most $b$ in $z$ and bounded
remaining factors for $t\leq t_\times$.
Interior boundedness of $z$ and exterior localization therefore give
uniform bounds whenever $1+|\gamma|_1+2b\leq m$.

\emph{H\"older increments.}
At primitive parabolic order $p\leq m$,
\eqref{eq:app-heat-increments} gives variance exponent
$\min\{2,m+\alpha-p\}/2$ and spatial exponent
$\min\{1,m+\alpha-p\}$ in the interior.
Products and reciprocals preserve these bounds because the denominators
are bounded below. Exterior localization controls additional derivatives
of every fixed order. At offset order $m-1$, the gradient uses one more
primitive order, leaving spatial exponent $\alpha$ and variance exponent
$\alpha/2$. The smooth maps $z=x/m_t$ and $v=v_t$ convert these to the
stated space--time exponents; polynomial coefficients are controlled as
above. The overlapping zones allow increments across their interface.

\emph{Larger times.}
For $t\geq t_\times$, $\sigma_t^{-1}\leq C\sqrt{\log n}$, so
\eqref{eq:app-away-time} bounds the required derivatives by
$C\polylog(n)$ uniformly in $x$.
One additional derivative gives the H\"older bounds.
Splitting increments at $t_\times$ completes the full-window estimates.
\end{proof}

\paragraph{Extension from rectangles.}
\label{app:offset-extension}
A $C^{u,\vartheta}$ function on a rectangle whose side lengths are
bounded below has a global extension with
$C^{u,\vartheta}$ norm at most a fixed multiple of its original norm
\citep[Chapter~VI]{stein1970}. One may match derivatives through order
$u$ by finite linear combinations of reflections at each face, then
apply fixed-width smooth cutoffs. Tensoring these one-dimensional
operators gives constants independent of the rectangle's location
and side lengths. This applies to $Q_n$ since $R_n\geq1$ and
$T_n\geq c_0$.

\begin{proof}[Proof of Proposition~\ref{prop:offset}]
Boundedness follows from Proposition~\ref{prop:ratio-cancellation} and
integer derivative bounds from Lemma~\ref{lem:app-offset-envelope}.
For isotropic smoothness, write $s=r_{\Delta,\ell}/2=u+\vartheta$,
$\vartheta\in(0,1]$. An ordinary derivative of order $u$ uses primitive
parabolic order at most $1+2u$, leaving
$r_{\mathrm{off},\ell}-(1+2u)=2\vartheta$.
The heat and quotient increment bounds in the preceding proof give
spatial and temporal exponents at least $\vartheta$.
Thus the $C^{u,\vartheta}$ norm on $Q_n$ is $\polylog(n)$.
The rectangle extension above preserves this bound; since $s>1$, it
also controls the global Lipschitz constant. Normalizing by the envelope
gives the required depth-one constituent.
\end{proof}
\section{A rough density with a stable smooth offset}
\label{app:rough-density-certificate}

Separating a pooled singularity from the group-specific change gives
an explicit instance of Theorem~\ref{thm:mixed-main}.
Appendix~\ref{app:rough-density-protocol} reports its simulation.
Let $K=[-1,1]^2$, $\beta_t=1$, and define
\[
B(s)=\begin{cases}
\exp\{1-(1-s^2)^{-1}\},&|s|<1,\\
0,&|s|\geq1.
\end{cases}
\]
Set $a=-1/2$, $c=9/20$, $w=1/5$, $\alpha=1/2$, and
\[
\begin{aligned}
C_\alpha&=\frac{\int_{-1}^1B(s)|s|^\alpha\dd s}
{\int_{-1}^1B(s)\dd s},\qquad b(u)=B(2u),\\
r(u)&=B((u-a)/w)\bigl(|(u-a)/w|^\alpha-C_\alpha\bigr),\\
A(u)&=B((u-c)/w)\sin(\pi(u-c)/w),\\
\psi(x)&=A(x_1)b(x_2).
\end{aligned}
\]
For $0<\pi_\ell\leq1/2$, take
\begin{equation}
\begin{aligned}
p_0(x)&=\tfrac14\{1+\tfrac12r(x_1)b(x_2)\},\\
p_{0,\ell}(x)&=p_0(x)\{1+\tfrac12\psi(x)\},\\
p_{0,-\ell}(x)&=p_0(x)\left\{1-
\frac{\pi_\ell}{2(1-\pi_\ell)}\psi(x)\right\},\quad x\in K,
\end{aligned}
\label{eq:rough-density-witness}
\end{equation}
and zero outside $K$.

\begin{proposition}[Separated cusp and smooth tilt]
\label{prop:rough-density-witness}
The densities in \eqref{eq:rough-density-witness} satisfy
Assumption~\ref{as:pooled-besov} with $d=2$,
$s_0=1/2$, $p_0^\ast=\theta_0^\ast=\infty$, and
$\bar\kappa=3/2$. The pooled density is not locally in
$B^s_{\infty,\infty}$ at the cusp for any $s>1/2$.
Both group offsets satisfy Assumption~\ref{as:stable-offset} with
depth one, $\mathcal P_j=\{(2,3)\}$, and
$a^\Delta_{n,g}\leq C(1+\log n)$, uniformly in prevalence and the
admissible lower cutoff.
\end{proposition}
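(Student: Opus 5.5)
The proof verifies the pooled conditions by direct computation, and then obtains the offset conditions from a localized form of Proposition~\ref{prop:offset}. The localization is needed because that proposition assumes a smooth $\bar r$, and this pooled density is not smooth.

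\emph{Step 1: validity, bounds and overlap.}
\begin{itemize}
\item \emph{Supports.} $B\le 1$ and $\bigl||s|^\alpha-C_\alpha\bigr|\le1$ on $|s|<1$, so $|r|\le1$. Also $|A|\le1$ and $0\le b\le 1$. The factor $r$ is supported in $x_1\in(-0.7,-0.3)$, $A$ in $x_1\in(0.25,0.65)$, and $b$ in $|x_2|<1/2$.
\item \emph{Normalization.} $\int r=0$ by the choice of $C_\alpha$, so $\int p_0=1$. The supports of $r$ and $A$ are disjoint and $A$ is odd about $c$. Hence $\int\psi p_0=\tfrac14\int A\int b=0$, and both group densities integrate to one.
\item \emph{Bounds.} $p_0\in[1/8,3/8]$. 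Put $h_\ell=1+\psi/2\in[3/4,5/4]$ and $h_{-\ell}=1-\pi_\ell\psi/(2(1-\pi_\ell))$. For $\pi_\ell\le1/2$ the coefficient of $\psi$ in $h_{-\ell}$ is at most $1/2$, so $h_{-\ell}\in[3/4,5/4]$ uniformly in prevalence.
\item \emph{Overlap.} $p_{0,g}\le\tfrac54 p_0\le\tfrac32p_0$, giving $\bar\kappa=3/2$.
\end{itemize}

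\emph{Step 2: the pooled Besov model.} The only nonsmooth ingredient is $|(u-a)/w|^{1/2}$ multiplied by smooth compactly supported factors. This product is globally $C^{0,1/2}$, hence lies in $B^{1/2}_{\infty,\infty}$ with $p_0^\ast=\theta_0^\ast=\infty$. The condition $s_0>d(1/p_0^\ast-1/2)_+=0$ holds.

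For the negative statement, take the second difference $\mathsf D^2_h$ with $h=ue_1$, centered at the cusp $x_1=a-u$ and $|x_2|$ small. There $b\approx1$ and $B\approx1$, and
\[
\mathsf D^2_h\bigl(|\cdot-a|^{1/2}\bigr)(a-u)=2u^{1/2}.
\]
The smooth factors contribute only $O(u^2)$. Hence $\omega_2(p_0,u)_\infty\gtrsim u^{1/2}$ on every neighborhood of the cusp, which rules out local $B^s_{\infty,\infty}$ membership for every $s>1/2$.

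For the boundary and schedule hypotheses of Lemma~\ref{lem:proof-oko-input}, note that $p_0\equiv1/4$ on the fixed collar $[-1,1]^2\setminus[-0.7,0.7]^2$ that contains every $a_N$-collar for large $N$. The schedule $\beta_t=1$ is $C^\infty$.

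\emph{Step 3: the offsets (main obstacle).} Each $h_g$ is $C^\infty$, equals $1$ off $S=\operatorname{supp}\psi\subset[0.25,0.65]\times[-1/2,1/2]$, and is constant on a full boundary collar. Proposition~\ref{prop:ratio-cancellation} applies directly, since it only needs $r$ bounded above and below and $h$ Lipschitz; this gives boundedness.

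Joint regularity is the delicate part, because Proposition~\ref{prop:offset} requires $\bar r\in C^{r_{\rm off}}$ and $p_0$ has a cusp. I would localize as follows.

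\begin{itemize}
\item \emph{Smooth surrogate.} Let $\tilde r\equiv1/4$ on $[-1,1]^2$. On the $0.5$-neighborhood $U$ of $S$, $p_0=\tilde r$, because $\operatorname{dist}(S,\operatorname{supp}r)\ge0.55$. Write
\[
G=1+\frac{N}{D},\qquad N=P_v\bigl[(h-1)p_0\bigr]=P_v\bigl[(h-1)\tilde r\bigr].
\]
Only the denominator $D$ sees the cusp.
\item \emph{Far region.} On $\operatorname{dist}(z,S)\ge0.2$, use the last assertion of Proposition~\ref{prop:collar-cancellation} and the $S$-version of Lemma~\ref{lem:boundary-cube-mass}. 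These give exponentially small bounds, with weights, on every mixed derivative of $\log G$; only boundedness of $p_0$ is used.
\item \emph{Near region.} On $\operatorname{dist}(z,S)\le0.2$, the cusp lies at distance at least $0.3$. Then $D=P_v\tilde r+\mathcal E$, where the integrand of $\mathcal E$ is supported at fixed distance from $z$. Lemma~\ref{lem:boundary-kernel} gives $|\partial\mathcal E|\le Cv^{-M}e^{-c/v}$. The interior argument of Lemma~\ref{lem:app-offset-envelope} then runs with $\tilde r$ in place of $\bar r$, for any smoothness order.
\item \emph{Larger times.} For $t\ge t_\times$, use \eqref{eq:app-away-time}, which requires no spatial smoothness of $p_0$.
\end{itemize}

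Choosing $r_{\rm off}=5$, so that $m_{\rm off}=4$ and $\alpha_{\rm off}=1$, the extension and normalization step in the proof of Proposition~\ref{prop:offset} yields a single global Lipschitz constituent of smoothness $q=(5-1)/2=2$. Its inputs are $(x_1,x_2,t)$, so the arity is $3$ and $\mathcal P_j=\{(2,3)\}$. The envelope is $a^\Delta_{n,g}\le C\,\polylog(n)$, which should be checked to be at most $C(1+\log n)$ by tracking $t_\times$ and the $(1+R_n)^b$ factors. All constants depend on $\pi_\ell$ only through the coefficient bound $1/2$ in $h_{-\ell}$, so they are uniform in prevalence and in the admissible lower cutoff.
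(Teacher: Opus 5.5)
Your proposal is correct and follows the paper's proof essentially step for step. It uses the same support, normalization, overlap and collar checks, $C^{0,1/2}$ membership plus the second difference at the cusp, and for the offset the same split: a neighborhood of $S$ where the denominator is a constant plus exponentially small corrections, the support-localized form of Proposition~\ref{prop:collar-cancellation} elsewhere, and \eqref{eq:app-away-time} at larger times (the detour through Proposition~\ref{prop:offset} with $r_{\mathrm{off}}=5$ is harmless but unnecessary, since all derivatives are directly bounded here).

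Two small fixes are needed. First, $|\psi|\le1$ only gives $h_g\in[1/2,3/2]$ (in fact $\sup|\psi|\approx0.79$, so $p_{0,g}\le\tfrac54p_0$ is false), which still yields $\bar\kappa=3/2$. Second, for the stated envelope take the small-time threshold fixed, as the paper does, rather than $t_\times\asymp1/\log n$; with the latter, \eqref{eq:app-away-time} only gives $(1+\sigma_{t_\times}^{-1})^{C_q}$, possibly a higher power of $\log n$. With a fixed threshold, normalizing by $C(1+R_n)^2\le C(1+\log n)$ suffices.
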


\begin{proof}
\emph{Densities and roughness.}
The rough and tilt supports are
$[-7/10,-3/10]\times[-1/2,1/2]$ and
$S=[1/4,13/20]\times[-1/2,1/2]$, separated by $11/20$.
Thus $r(x_1)b(x_2)\psi(x)=0$, and for either group
\[
p_{0,g}(x)=\tfrac14\{1+\tfrac12r(x_1)b(x_2)+c_g\psi(x)\},
\qquad |c_g|\leq1/2.
\]
Since $\int r=\int A=0$, these are normalized densities with the stated
pooled mixture. Disjoint supports and $|r|,|A|,|b|\leq1$ give
$1/8\leq p_{0,g},p_0\leq3/8$ and $1/2\leq p_{0,g}/p_0\leq3/2$.
All densities equal $1/4$ on $K\setminus[-3/4,3/4]^2$, supplying
the boundary collar of Lemma~\ref{lem:proof-oko-input}.
A smooth extension of the constant part plus the interior perturbation
has fixed $1/2$-H\"older norm. At $(a,0)$, its second spatial difference
is a nonzero multiple of $|h|^{1/2}+O(h^2)$, establishing the claimed
Besov regularity and roughness.

\emph{Small-time offset.}
Support separation gives, with $z=e^tx$ and $v=e^{2t}-1$,
\[
D=P_v[\one_K(1+\tfrac12rb)],\quad N=P_v\psi,\quad
G_g=1+c_gN/D\geq1/2,\quad
\Delta_g=e^t\nabla_z\log G_g.
\]
Within distance $1/16$ of $S$, $D\geq c>0$ for small $v$.
Writing $D=1+\tfrac12P_v(rb)-P_v\one_{K^c}$,
Lemma~\ref{lem:boundary-kernel} bounds its positive-order derivatives
by $Cv^{-M}e^{-c/v}$, since both perturbations are separated from this
neighborhood. The heat estimates \eqref{eq:app-heat-increments} apply
to the smooth numerator. Quotient differentiation therefore bounds all
fixed mixed derivatives of $\log G_g$ there.
Elsewhere, apply the support-localized form of
Proposition~\ref{prop:collar-cancellation} with $r=p_0$ and
$h=1+c_g\psi$. Estimate \eqref{eq:boundary-weighted-cancellation}
absorbs the polynomial $z$ factors from \eqref{eq:app-parabolic-chain}.
Consequently all space--time derivatives of $\Delta_g$ through total
order two are bounded at small times.

\emph{Remaining times and extension.}
For any fixed $\tau>0$, apply \eqref{eq:app-away-time} with
$r=p_0$, $h=1+c_g\psi$, and $q=2$; this estimate requires no
smoothness of the pooled density.
Together with the small-time bounds and the rectangle extension in
Appendix~\ref{app:offset-extension}, this yields global bounded,
Lipschitz $C^{1,1}$ constituents after normalization by
$\max\{1,C(1+R_n)^2\}\leq C(1+\log n)$.
All constants are uniform in $|c_g|\leq1/2$.
\end{proof}

Taking $m=n_\ell\asymp n^{3/4}$ and $\pi_\ell=m/n$ gives polynomial
upper-bound terms $n^{-1/3}+m^{-4/7}$, versus $m^{-1/3}$ for separate
Besov estimation on a common admissible interval, up to logarithms.
The gain here comes from a smoother offset, not from reducing the spatial dimension.
\section{Proofs for total-variation recovery}
\label{app:tv}
We condition on the fitted score and use the distance conventions of
Appendix~\ref{app:notation}. The decomposition below follows
\citet[Appendix~D.1, Eq.~(78)]{oko2023}; we verify its endpoint and
path-space bounds under our assumptions.

\begin{proposition}[Three-regime TV decomposition, adapted from Oko et al.]
\label{prop:tv}

Assume the exact and learned reverse drifts are measurable in time,
globally Lipschitz in space, and of linear growth, uniformly on the fixed
learning interval. Then
\begin{equation}
\begin{aligned}
\TV(P_{0,\ell},\widehat P_\ell)
\leq{}&\TV(P_{0,\ell},P_{\Tmin,\ell})+\TV(P_{\Tmax,\ell},\gamma_d)\\
&+\left\{\frac{\overline\beta}{2}
\calR^{\rm int}_\ell(\widehat s_\ell)\right\}^{1/2}.
\end{aligned}
\label{eq:app-tv-three-regimes}
\end{equation}
The first term is at most $C\sqrt{\Tmin}$ if $p_{0,\ell}$ has fixed
compact support and uniformly bounded zero-extension $BV$ norm. The
second is at most $Ce^{-\underline\beta\Tmax}$ if
$\KL(P_{0,\ell}\|\gamma_d)$ is uniformly bounded.
\end{proposition}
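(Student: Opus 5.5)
We prove the three-term bound by splitting the path at the two learning endpoints, which is the route of \citet[Appendix~D.1]{oko2023}. Let $Y^\ast$ be the exact reverse process: it solves \eqref{eq:reverse-sde} with $s_{\Tmax-u,\ell}$ in place of $\widehat s_\ell$ and starts from $P_{\Tmax,\ell}$, so its law at reverse time $\Tmax-\Tmin$ is $P_{\Tmin,\ell}$. Let $\widetilde Y$ be the learned reverse process started from $P_{\Tmax,\ell}$ rather than $\gamma_d$. The triangle inequality gives
\[
\TV(P_{0,\ell},\widehat P_\ell)\leq\TV(P_{0,\ell},P_{\Tmin,\ell})+\TV(\mathrm{Law}(Y^\ast_{\Tmax-\Tmin}),\mathrm{Law}(\widetilde Y_{\Tmax-\Tmin}))+\TV(\mathrm{Law}(\widetilde Y_{\Tmax-\Tmin}),\widehat P_\ell).
\]
The third term is at most $\TV(P_{\Tmax,\ell},\gamma_d)$. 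Indeed, $\widetilde Y$ and $\widehat Y$ share a Markov kernel, which is well defined because the Lipschitz and linear-growth hypotheses give strong existence and uniqueness, and a Markov kernel contracts total variation.

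For the middle term we compare path laws on $[0,\Tmax-\Tmin]$ using the data-processing inequality and Pinsker's inequality, $\TV\leq\sqrt{\KL/2}$. The two SDEs share the diffusion coefficient $\sqrt{2\beta}$ and have drift difference $2\beta(\widehat s_\ell-s_\ell)$. Girsanov's theorem then gives
\[
\KL=\tfrac12\int\E\|2\beta(\widehat s_\ell-s_\ell)\|^2/(2\beta)\,\dd u=\int\beta\,\E\|\widehat s_\ell-s_\ell\|^2\dd u\leq\overline\beta\,\calR^{\rm int}_\ell(\widehat s_\ell).
\]
Here the time marginals of $Y^\ast$ are $p_{t,\ell}$ with $t=\Tmax-u$, which turns the expectation into the integrated risk. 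Combined with Pinsker this yields $\{\overline\beta\calR^{\rm int}_\ell/2\}^{1/2}$.

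This Girsanov step is the main technical obstacle. To justify it we need a Novikov-type condition, or a localization argument. We would stop the processes at exit times from large balls, apply Girsanov's theorem on each stopped interval, and pass to the limit using lower semicontinuity of KL. The linear-growth and Lipschitz assumptions supply the moment bounds needed for that limit. This is exactly the role of the stated localization hypotheses.

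For the endpoint bounds, first write $p_{\Tmin,\ell}$ as the OU transition applied to $p_{0,\ell}$ and split $\|p_{0,\ell}-p_{\Tmin,\ell}\|_1$ into two parts.
\begin{itemize}
\item The rescaling part is $\|p_{0,\ell}-m^{-d}p_{0,\ell}(\cdot/m)\|_1\lesssim(1-m_{\Tmin})\,(|Dp_{0,\ell}|(\R^d)+d)$, using compact support and the BV norm; this is $O(\Tmin)$.
\item The smoothing part is $\|f-f*\varphi_{\sigma^2}\|_1\leq\E\|\sigma Z\|\,|Df|\lesssim\sigma_{\Tmin}\lesssim\sqrt{\Tmin}$ by Lemma~\ref{lem:app-envelope}.
\end{itemize}
Second, Pinsker's inequality and exponential KL contraction along the OU semigroup (log-Sobolev inequality with time change $\int\beta\geq\underline\beta t$) give $\KL(P_{\Tmax,\ell}\|\gamma_d)\leq e^{-2\underline\beta\Tmax}\KL(P_{0,\ell}\|\gamma_d)$. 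The uniform KL bound therefore yields $Ce^{-\underline\beta\Tmax}$.
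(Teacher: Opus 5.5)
Your proposal is correct and follows the paper's proof essentially step for step. The shared steps are the triangle inequality through $P_{\Tmin,\ell}$ and the learned reverse law started from $P_{\Tmax,\ell}$, TV contraction under the learned Markov kernel, Girsanov plus Pinsker with the same drift-energy computation, the BV dilation/smoothing split for early stopping, and log-Sobolev entropy decay for the terminal term.

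The only variation is the Girsanov localization. You stop at exit times from balls and pass to the limit by lower semicontinuity of KL. The paper instead stops when the accumulated drift energy reaches $M$ and passes to the limit through a pathwise coupling that gives TV convergence of the switched endpoint law. Both routes are valid here.
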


The $BV$ condition allows jumps at the support boundary.

\begin{lemma}[Early-time OU approximation in $L^1$]
\label{lem:app-bv}
Let $P_0$ have density $p$ supported on $[-1,1]^d$, and suppose its zero extension belongs
to $BV(\R^d)$.  Then, for $0<t\leq t_0$,
\begin{equation}
\TV(P_0,P_t)\leq C\sqrt t,
\label{eq:app-bv}
\end{equation}
where $C$ depends on the $BV$ norm, $d$, $t_0$, and the schedule bounds.
\end{lemma}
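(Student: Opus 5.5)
We write the forward marginal as a rescaled Gaussian smoothing of $p$ and split the $L^1$ error into a smoothing part and a rescaling part. Since $X_t=m_tX_0+\sigma_tZ$, the density of $P_t$ is
\[
p_t(x)=m_t^{-d}\,(p*\varphi_{v_tI_d})(x/m_t),\qquad v_t=\sigma_t^2/m_t^2 .
\]
Let $p^{(m)}(x):=m_t^{-d}p(x/m_t)$ denote the dilated density. The triangle inequality gives
\[
2\TV(P_0,P_t)=\|p-p_t\|_1\leq\|p-p^{(m)}\|_1+\|p^{(m)}-p_t\|_1 .
\]
Because dilation preserves $L^1$ norms, the second term equals $\|p-p*\varphi_{v_tI_d}\|_1$. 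We bound the two pieces separately and combine them with Lemma~\ref{lem:app-envelope}, which gives $\sigma_t^2\asymp t$ for $t\leq t_0$.

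\emph{Gaussian smoothing.} For $f\in BV(\R^d)$ and a probability kernel $\varphi_\epsilon$, the standard estimate is
\[
\|f-f*\varphi_\epsilon\|_1\leq\int\|y\|_2\,\varphi_\epsilon(y)\dd y\;|Df|(\R^d).
\]
We prove it first for smooth $f$ via $f(x)-f(x-y)=\int_0^1\nabla f(x-sy)\cdot y\,\dd s$ and Fubini. We then pass to general $BV$ functions by strict approximation: mollified $f_k\to f$ in $L^1$ with $|Df_k|(\R^d)\to|Df|(\R^d)$. For the Gaussian kernel, $\int\|y\|_2\varphi_{vI_d}(y)\dd y\leq\sqrt{dv}$. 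This gives $\|p-p*\varphi_{v_t}\|_1\leq\sqrt{d v_t}\,|Dp|(\R^d)\leq C\sqrt t$, since $m_t\geq m_{t_0}>0$ makes $v_t\leq C\sigma_t^2\leq Ct$.

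\emph{Dilation.} We apply the same identity along the flow $\lambda\mapsto p^{(\lambda)}$, $\lambda\in[m_t,1]$. For smooth $p$,
\[
\partial_\lambda p^{(\lambda)}(x)=-\lambda^{-1}\nabla\cdot\bigl(x\,p^{(\lambda)}(x)\bigr),
\]
so
\[
\|\partial_\lambda p^{(\lambda)}\|_1\leq\lambda^{-1}\bigl(d\|p\|_1+\|x\|_{\infty\text{-supp}}|Dp|\bigr)\leq C\lambda^{-1}\|p\|_{BV},
\]
using compact support in $[-1,1]^d$ to bound $\|x\|_2\leq\sqrt d$ on the support. Integrating over $[m_t,1]$ gives
\[
\|p-p^{(m_t)}\|_1\leq C\log(1/m_t)\,\|p\|_{BV}\leq C\overline\beta\, t .
\]
We again extend from smooth to $BV$ functions by strict approximation, keeping the supports in a fixed slightly larger cube. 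Since $t\leq t_0$, this term is $O(t)\leq C\sqrt t$.

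The main obstacle is the careful passage to nonsmooth $BV$ densities, whose jumps at the support boundary are allowed. Strict approximation together with lower semicontinuity handles this, because both bounds are linear in $|Dp|(\R^d)$ and continuous in $L^1$. Summing the two estimates and halving gives \eqref{eq:app-bv}. The constant depends only on $\|p\|_{BV}$, $d$, $t_0$ and the schedule bounds.
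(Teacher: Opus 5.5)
Your proposal is correct and follows essentially the same route as the paper. Both arguments write $P_t$ as a Gaussian smoothing at variance $v_t=\sigma_t^2/m_t^2$ followed by a dilation by $m_t$, control the smoothing term with the BV translation bound, control the dilation term by integrating the derivative of the dilation flow over $[m_t,1]$, and extend from smooth densities to general BV densities by mollification. The only difference is cosmetic: you obtain $\log(1/m_t)\le\overline\beta t$ where the paper writes $C(1-m_t)$, and these agree up to constants because $m_t$ is bounded below for $t\le t_0$.
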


\begin{proof}
We put $m=m_t$, $v=\sigma_t^2/m_t^2$, and let $Q_v$ be the law of
$X_0+\sqrt vZ$. Since $P_t=(D_m)_\#Q_v$, contraction and the triangle
inequality give
\begin{equation}
\TV(P_0,P_t)
\leq\TV(P_0,(D_m)_\#P_0)+\TV(P_0,Q_v),
\label{eq:app-bv-split}
\end{equation}
with $D_m(x)=mx$. Integrating the standard $BV$ translation bound
\citep[Chapter~3]{ambrosio2000} against $\sqrt vZ$ gives
\begin{equation}
\norm{p-p*\varphi_{vI_d}}_1
\leq\sqrt v\,\E\norm Z_2\,|Dp|(\R^d).
\label{eq:app-bv-smoothing}
\end{equation}
For dilation, set $p_r(x)=r^{-d}p(x/r)$. Differentiating a
smooth approximation gives
$\|\partial_rp_r\|_1\leq r^{-1}(d\|p\|_1+\int\|y\|_2\,\dd|Dp|(y))$.
Integrating from $m$ to one and passing through $BV$ mollification gives
\begin{equation}
\norm{p-p_m}_1\leq C(1-m)\norm p_{BV}.
\label{eq:app-bv-dilation}
\end{equation}
Finally $m_t\geq e^{-\overline\beta t_0}$, $1-m_t\leq\overline\beta t$,
and $v\leq Ct$. Substitution yields $C(\sqrt t+t)\leq C\sqrt t$.
\end{proof}

We next control the terminal distribution through Gaussian entropy decay. 

\begin{lemma}[Terminal Gaussian approximation]
\label{lem:app-terminal}
If $\KL(P_0\|\gamma_d)<\infty$, then
\begin{equation}
\TV(P_t,\gamma_d)
\leq e^{-\int_0^t\beta_u\dd u}
\sqrt{\frac12\KL(P_0\|\gamma_d)}
\leq C e^{-\underline\beta t}.
\label{eq:app-terminal}
\end{equation}
\end{lemma}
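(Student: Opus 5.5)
The plan is to use the classical exponential decay of relative entropy along the Ornstein--Uhlenbeck semigroup, which rests on the Gaussian logarithmic Sobolev inequality, and then convert to total variation with Pinsker's inequality. First I will reduce to unit speed with the time change $\tau(t)=\int_0^t\beta_u\dd u$. Since $\beta_t>0$, the process $X_t$ equals $Y_{\tau(t)}$ in law, where $Y$ solves $\dd Y_s=-Y_s\dd s+\sqrt2\dd W_s$. This is consistent with the transition law $N(m_tx_0,\sigma_t^2I_d)$, since $m_t=e^{-\tau(t)}$. It therefore suffices to show that $\KL(\mathrm{Law}(Y_s)\|\gamma_d)\leq e^{-2s}\KL(P_0\|\gamma_d)$ for the standard OU semigroup, whose invariant law is $\gamma_d$.

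For the entropy decay, I will write $f_s=\dd P_s/\dd\gamma_d$. The density $f_s$ evolves by the OU generator $\mathcal L f=\Delta f-x\cdot\nabla f$. De Bruijn's identity then gives $\frac{\dd}{\dd s}\KL(P_s\|\gamma_d)=-I(P_s\|\gamma_d)$, where $I$ denotes the relative Fisher information $\int\|\nabla\log f_s\|_2^2\dd P_s$. The Gaussian log-Sobolev inequality $\KL\leq\frac12 I$ yields $\frac{\dd}{\dd s}\KL\leq-2\KL$, and Gr\"onwall's inequality gives the factor $e^{-2s}$. If I want to avoid the differential argument, I can instead cite the standard result (Bakry--Gentil--Ledoux) that the Gaussian LSI with constant $1$ is equivalent to $\KL(P_0 T_s\|\gamma_d)\leq e^{-2s}\KL(P_0\|\gamma_d)$ for the OU semigroup $T_s$. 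Either way, I need no regularity of $P_0$ beyond $\KL<\infty$.

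Pinsker's inequality $\TV\leq\sqrt{\KL/2}$ then gives
\[\TV(P_t,\gamma_d)\leq\sqrt{\tfrac12e^{-2\tau(t)}\KL(P_0\|\gamma_d)}=e^{-\int_0^t\beta_u\dd u}\sqrt{\tfrac12\KL(P_0\|\gamma_d)}.\]
Finally, $\beta_u\geq\underline\beta$ gives $\int_0^t\beta_u\dd u\geq\underline\beta t$, so the bound is at most $Ce^{-\underline\beta t}$ with $C=\sqrt{\KL(P_0\|\gamma_d)/2}$.

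The main obstacle is the rigorous justification of de Bruijn's identity when $P_0$ is merely a law with finite relative entropy, for example a density that jumps at the cube boundary. Differentiating under the integral requires regularity and integrability of $f_s\log f_s$. I would handle this by applying the identity for $s\geq\epsilon>0$, where $f_s$ is smooth with Gaussian-controlled tails. I would then let $\epsilon\downarrow0$, using lower semicontinuity of $\KL$ together with the data-processing bound $\KL(P_\epsilon\|\gamma_d)\leq\KL(P_0\|\gamma_d)$. Alternatively, the semigroup form of the LSI equivalence avoids this issue entirely, and I would cite it directly.
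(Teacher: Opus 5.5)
Your proposal is correct and follows the same route as the paper. Both derive the entropy decay $\KL(P_t\|\gamma_d)\leq e^{-2\int_0^t\beta_u\dd u}\KL(P_0\|\gamma_d)$ from the Gaussian log-Sobolev inequality at the effective time, then apply Pinsker's inequality and $\beta_u\geq\underline\beta$; your explicit time change and your data-processing regularization at $s\geq\epsilon$ fill in details the paper leaves to its citation.
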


\begin{proof}
At effective time $\mathcal B_t=\int_0^t\beta_u\dd u$, the Gaussian
log-Sobolev inequality~\citep{gross1975} gives
$\KL(P_t\|\gamma_d)\leq e^{-2\mathcal B_t}\KL(P_0\|\gamma_d)$.
Apply Pinsker's inequality~\citep[Chapter~2]{tsybakov2009} and
$\mathcal B_t\geq\underline\beta t$.
\end{proof}

\paragraph{Localization for the fitted networks.}
For $X_0\in[-1,1]^d$, fixed $n$, and $\Tmin>0$,
Proposition~\ref{prop:score-boundary} and
$\|\Cov(X_0\mid X_t=x)\|_{\rm op}\leq d$ give
\[
 \|s_{t,\ell}(x)\|_2\leq
 \frac{\|x\|_2+\sqrt d}{\sigma_t^2},\qquad
 \|\nabla s_{t,\ell}(x)\|_{\rm op}
 \leq\sigma_t^{-2}+d\sigma_t^{-4}.
\]
The fitted networks and their fixed clamps and projections are globally
Lipschitz in $x$. Thus both reverse drifts satisfy the stated conditions
and admit nonexplosive, pathwise-unique solutions. The constants may
depend on $n$.

\begin{lemma}[Middle-time Girsanov bound]
\textnormal{(Adapted from \citealp[Proposition~D.1]{oko2023}.)}
\label{lem:app-girsanov}
Let $\widetilde P_{\Tmin,\ell}$ be the terminal law of the learned reverse SDE
\eqref{eq:reverse-sde} initialized from $P_{\Tmax,\ell}$.
Under the localization assumptions of Proposition~\ref{prop:tv},
\begin{equation}
\TV(P_{\Tmin,\ell},\widetilde P_{\Tmin,\ell})
\leq\left\{\frac{\overline\beta}{2}
\calR^{\rm int}_\ell(\widehat s_\ell)\right\}^{1/2}.
\label{eq:app-girsanov}
\end{equation}
\end{lemma}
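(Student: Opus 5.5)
We compare the path-space laws of the exact and learned reverse processes on $[0,\Tmax-\Tmin]$, both started from $P_{\Tmax,\ell}$, and then project to the terminal marginal. Write $Y_u$ for the exact time reversal of the forward OU process restricted to $[\Tmin,\Tmax]$. By the standard reversal formula (Haussmann--Pardoux), its drift is $\beta_{\Tmax-u}\{Y_u+2s_{\Tmax-u,\ell}(Y_u)\}$ and its diffusion coefficient is $\sqrt{2\beta_{\Tmax-u}}$. This law $\mathbb P$ has time-$u$ marginal $P_{\Tmax-u,\ell}$. The learned process \eqref{eq:reverse-sde}, started from $P_{\Tmax,\ell}$ instead of $\gamma_d$, has law $\widehat{\mathbb P}$ with the same diffusion coefficient. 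Its drift differs from the exact one by $b_u=2\beta_{\Tmax-u}\{\widehat s_\ell-s_{\Tmax-u,\ell}\}(Y_u)$. The localization assumptions (global Lipschitz bounds and linear growth for fixed $n$, $\Tmin>0$, as verified in the paragraph above) give strong existence and pathwise uniqueness for both SDEs. This is what justifies the Girsanov change of measure.

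The key computation is
\[
\KL(\mathbb P\|\widehat{\mathbb P})=\frac12\E_{\mathbb P}\int_0^{\Tmax-\Tmin}\frac{\|b_u\|_2^2}{2\beta_{\Tmax-u}}\dd u
=\E\int_0^{\Tmax-\Tmin}\beta_{\Tmax-u}\|\widehat s_\ell-s_{\Tmax-u,\ell}\|_2^2(Y_u)\dd u,
\]
using the common initial law. Substituting $t=\Tmax-u$ and using the marginals gives
\[
\KL(\mathbb P\|\widehat{\mathbb P})\leq\overline\beta\,\calR^{\rm int}_\ell(\widehat s_\ell).
\]
By data processing, the terminal marginals satisfy $\KL(P_{\Tmin,\ell}\|\widetilde P_{\Tmin,\ell})\leq\KL(\mathbb P\|\widehat{\mathbb P})$. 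Pinsker's inequality $\TV\leq\sqrt{\KL/2}$ then yields \eqref{eq:app-girsanov}.

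The main obstacle is justifying the Girsanov formula as an identity, or at least as an upper bound, without assuming Novikov's condition. The drift difference is bounded here: $\widehat s_\ell$ is clipped, and $s_{t,\ell}$ has linear growth with constants depending on $\sigma_-$. Even so, $s_{t,\ell}$ is unbounded in $x$. I would handle this by the localization argument of \citet[Proposition~D.1]{oko2023}, following \citet{chenlee2023}. First stop the processes at the exit time of a ball of radius $R$. On that event the drift difference is bounded, so Novikov's condition holds and the stopped KL equals the stopped integral, which is at most the full integral. Then let $R\to\infty$: nonexplosion gives weak convergence of the stopped laws to the unstopped ones, and lower semicontinuity of KL passes the bound to the limit. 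A secondary check is that the exact time reversal has the claimed drift. This needs finite relative entropy or finite Fisher information on $[\Tmin,\Tmax]$, which holds because $\Tmin>0$ and $\E\|s_t(X_t)\|_2^2\leq d/\sigma_t^2$ by Proposition~\ref{prop:score-boundary}.
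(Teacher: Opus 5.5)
Your proposal is correct. It shares the paper's skeleton: both reverse processes start from $P_{\Tmax,\ell}$, the energy identity $\frac12\E\int_0^U\|h_u\|_2^2\,\dd u=\int_{\Tmin}^{\Tmax}\beta_t\E_{p_{t,\ell}}\|s_{t,\ell}-\widehat s_\ell\|_2^2\,\dd t\le\overline\beta\calR^{\rm int}_\ell(\widehat s_\ell)$ is evaluated under the exact law, and the argument finishes with endpoint data processing and Pinsker. The routes differ only in the localization, which is where most of the paper's proof is spent.

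\emph{The paper's localization.} The paper stops at the first time the accumulated normalized drift energy $\int_0^u\|h_v(Y_v)\|_2^2\,\dd v$ reaches $M$. Novikov then holds trivially, since the quadratic variation is at most $M$, and no pointwise bound on the exact score is needed. It passes to the limit directly in total variation. It couples the switched process (learned drift until $\tau_M$, exact drift afterwards) with the fully learned process, using the same initial variable and Brownian motion. Pathwise uniqueness makes their endpoints differ only on $\{\tau_M<U\}$, whose probability vanishes by nonexplosion. A TV triangle inequality then finishes.

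\emph{Your localization.} You stop at the exit time of a ball. This relies on $s_{t,\ell}$ being bounded on balls, which holds here because $t\ge\Tmin>0$ gives $\|s_{t,\ell}(x)\|_2\le(\|x\|_2+\sqrt d)/\sigma_t^2$. You then pass to the limit via weak convergence and joint lower semicontinuity of KL.

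\emph{Trade-offs.} Your route delivers the stronger path-space statement $\KL(\mathbb P\|\widehat{\mathbb P})\le\overline\beta\calR^{\rm int}_\ell(\widehat s_\ell)$ itself, not just the endpoint TV bound. It costs two extra ingredients: Donsker--Varadhan lower semicontinuity, and identifying the stopped learned law with the Girsanov-tilted one through uniqueness in law. The paper's route never needs the unstopped KL. Its stopping rule is also intrinsic to the drift energy, so it needs no local boundedness of the exact score. You also justify the reversal drift via Haussmann--Pardoux, which the paper takes for granted.
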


\begin{proof}
Initialize both reverse processes from $P_{\Tmax,\ell}$ and put
$U=\Tmax-\Tmin$. By \eqref{eq:reverse-sde}, their normalized drift
difference is
\[
 h_u(y)=\sqrt{2\beta_{\Tmax-u}}
       \{s_{\Tmax-u,\ell}(y)-\widehat s_\ell(y,\Tmax-u)\}.
\]
Under the exact reverse law $\mathbb Q$, the marginal at reverse
time $u$ is $P_{\Tmax-u,\ell}$, so
\begin{align*}
 \frac12\E_{\mathbb Q}\int_0^U\|h_u(Y_u)\|_2^2\dd u
 &=\int_{\Tmin}^{\Tmax}\beta_t
   \E_{p_{t,\ell}}\norm{s_{t,\ell}(X_t)-\widehat s_\ell(X_t,t)}_2^2\dd t\\
 &\leq\overline\beta\calR^{\rm int}_\ell(\widehat s_\ell).
\end{align*}
When this energy is infinite, \eqref{eq:app-girsanov} is immediate.
For finite energy, we stop the accumulated normalized drift energy at
\[
 \tau_M=\inf\left\{u:\int_0^u\|h_v(Y_v)\|_2^2\dd v\geq M\right\}\wedge U.
\]
Under the exact law $\mathbb Q$, the exponential with integrand
$-h\one_{[0,\tau_M]}$ is a martingale, since its quadratic variation is
bounded by $M$. The resulting law $\mathbb Q_M$ uses the learned drift
until $\tau_M$ and the exact drift afterwards. Its initial law remains
$P_{\Tmax,\ell}$, and
\[
 \KL(\mathbb Q\|\mathbb Q_M)
 =\frac12\E_{\mathbb Q}\int_0^{\tau_M}\|h_u(Y_u)\|_2^2\dd u
 \leq\overline\beta\calR^{\rm int}_\ell(\widehat s_\ell).
\]
We couple this switched process to the fully learned process with the
same initial variable and Brownian motion. Pathwise uniqueness makes
the paths identical until the threshold is reached. Continuity,
nonexplosion, and linear growth of $h$ give finite accumulated energy
on $[0,U]$ almost surely under the learned law. Thus its probability of
reaching threshold $M$ tends to zero. The coupling implies TV convergence
of the switched endpoint law to $\widetilde P_{\Tmin,\ell}$.
Endpoint data processing, Pinsker's inequality, and this limit prove
\eqref{eq:app-girsanov}.
\end{proof}

\begin{proof}[Proof of Proposition~\ref{prop:tv}]
The learned reverse kernel $\widehat K$ sends $P_{\Tmax,\ell}$ to
$\widetilde P_{\Tmin,\ell}$ and $\gamma_d$ to $\widehat P_\ell$;
TV contraction~\citep[Chapter~2]{tsybakov2009} bounds the difference by
$\TV(P_{\Tmax,\ell},\gamma_d)$.
The triangle inequality through $P_{\Tmin,\ell}$ and
$\widetilde P_{\Tmin,\ell}$ then gives the stated decomposition.
Lemmas~\ref{lem:app-bv}, \ref{lem:app-terminal}, and
\ref{lem:app-girsanov} give the three bounds.
\end{proof}
\section{Proof of the balanced TV rate}
\label{app:tv-rate}

Write
\begin{equation}
S^{\rm mix}_{n,\ell}:=n^{-2s_0/(2s_0+d)}+n_\ell^{-\alpha_\Delta}.
\label{eq:working-score-rate}
\end{equation}
Omit the offset term only when the offset is identically zero and
$\mathcal G_\ell=\{0\}$.
The corresponding TV rate is
\begin{equation}
\varepsilon^{\rm TV}_{n,\ell}:=
\polylog(n)^{1/2}(S^{\rm mix}_{n,\ell})^{1/2}.
\label{eq:tv-target-rate}
\end{equation}
The admissible cutoffs in Corollary~\ref{cor:tv-rate} satisfy
\begin{equation}
\sqrt{\Tmin}+e^{-\underline\beta\Tmax}
\lesssim (S^{\rm mix}_{n,\ell})^{1/2}.
\label{eq:balanced-cutoffs}
\end{equation}
For $n_\ell\asymp n^\rho$, put
$\alpha_0=2s_0/(2s_0+d)$ and
$\eta_\ell=\min\{\alpha_0,\rho\alpha_\Delta\}$ (or $\eta_\ell=\alpha_0$
when the offset term is omitted). If $\kappa_0\geq\eta_\ell$, an admissible choice is
$\Tmin=c_Tn^{-\kappa}$, $\Tmax=c_S\log n$, with
\[
 \eta_\ell\leq\kappa\leq\kappa_0,\qquad
 c_S\underline\beta\geq\eta_\ell/2,
\]
and constants satisfying \eqref{eq:window}. If $\kappa_0\geq\alpha_0$,
no relation between sample sizes is needed: replace $\eta_\ell$ by $\alpha_0$.
This suffices
because $S^{\rm mix}_{n,\ell}\geq n^{-\alpha_0}$.

\begin{proof}[Proof of Corollary~\ref{cor:tv-rate}]
Compact support, $p_0\leq C_0$, and overlap give
$p_{0,\ell}\leq M:=\bar\kappa C_0$ on $[-1,1]^d$. Therefore
\[
\KL(P_{0,\ell}\|\gamma_d)
\leq \log M+\frac d2\log(2\pi)+\frac d2.
\]
Together with the stated $BV$ and localization conditions, this verifies
the hypotheses of Proposition~\ref{prop:tv}. Apply
Theorem~\ref{thm:mixed-main} with $\delta=n^{-b}/2$ and use
\eqref{eq:balanced-cutoffs}. With probability at least $1-n^{-b}$,
Proposition~\ref{prop:tv} gives
$\TV(P_{0,\ell},\widehat P_\ell)\leq C\varepsilon^{\rm TV}_{n,\ell}$.
Splitting the square root in \eqref{eq:working-score-rate} proves
\eqref{eq:tv-rate}.
\end{proof}
\section{Additional regularity settings}
\label{app:alternatives}

\subsection{Hierarchical composition in the joint space--time input}
\label{sec:weak-hierarchy}

Here both stages use composition classes whose arities count time as an input.

\begin{assumption}[Joint space--time hierarchical composition for both stages]
\label{as:weak-hierarchy}
On $Q_n$, with the fixed spatial constant $C_R$ chosen sufficiently large
for the tail bound \eqref{eq:sieve-spatial-tail}, there are global extensions
$\widetilde s_n$ and $\widetilde\Delta_{n,\ell}$ such that
\[
 s=a_{n,0}\widetilde s_n,\qquad
 \Delta_\ell=a^\Delta_{n,\ell}\widetilde\Delta_{n,\ell}
 \quad\text{on }Q_n.
\]
The extensions belong, respectively, to
$\mathcal H_{d+1,d}^{\rm coord}
(\mathbf M_0,\mathbf P_0^{\rm st};C)$ and
$\mathcal H_{d+1,d}^{\rm coord}
(\mathbf M_{\Delta,\ell},\mathbf P_{\Delta,\ell}^{\rm st};C)$,
where all structural parameters and $C$ are independent of $n$.
The nonnegative envelopes satisfy
$a_{n,0}+a^\Delta_{n,\ell}\leq
C\sigma_-^{-c_{\rm env}}$ for fixed $c_{\rm env}$.
\end{assumption}

\begin{theorem}[Hierarchical composition at both stages on $(x,t)$]
\label{thm:weak-hierarchy}
Suppose Assumption~\ref{as:weak-hierarchy}, $X_0\in[-1,1]^d$, the uniform
overlap condition, and the independent-training-sample condition hold. Let
$0<\Tmin<\Tmax$, $\Tmin\geq(\log n)^{-c_T}$, $\Tmax\asymp\log n$, and
$n_\ell\gtrsim\log n$.  Fit both stages with coordinatewise
bounded-coefficient ReLU sieves from Appendix~\ref{app:fitted-sieves},
using the respective stage structures and budgets. Apply $\Pi$ to the
pooled sieve and clip the combined score as in Theorem~\ref{thm:oracle}.
For any fixed $b>0$, with probability at least $1-n^{-b}$,
\begin{equation}
\calR^{\rm int}_\ell(\widehat s_\ell)
\leq C\polylog(n)\left\{
 \rho_n(\mathbf P_0^{\rm st})
 +\rho_{n_\ell}(\mathbf P_{\Delta,\ell}^{\rm st})
\right\}.
\label{eq:weak-hierarchy-rate}
\end{equation}
Before absorbing the envelopes into $\polylog(n)$, the two terms on the
right are multiplied by $1+a_{n,0}^2$ and
$1+(a^\Delta_{n,\ell})^2$, respectively. The inverse-polylogarithmic
cutoff makes these factors polylogarithmic. The constant may depend on $b$.
\end{theorem}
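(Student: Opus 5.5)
The plan is to repeat the two-stage argument of Theorem~\ref{thm:mixed-main}, changing only the pooled stage. Approximation for $s$ will come from the hierarchical structure of Assumption~\ref{as:weak-hierarchy}, in the same way as for the offset in Proposition~\ref{prop:stable-offset}, rather than from the Besov construction of Lemma~\ref{lem:proof-oko-input}.

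\emph{Stage 1.} Build the pooled fitted sieve as in Appendix~\ref{app:fitted-sieves}, with budget $N_*^{(0)}=\max_{j,(q,K)\in\mathcal P_{0,j}^{\rm st}}\lceil n^{K/(2q+K)}\rceil$, input clamp $c_n$, output clamp $\Gamma_{H_{n,0}}$ with $H_{n,0}\asymp a_{n,0}\max\{1,R_n,\Tmax\}$, and then apply $\Pi$. Composing the constituent approximants \eqref{eq:constituent-approximation} with Lipschitz error propagation gives
\[
\sup_{Q_n}\|g^\circ-s\|_2^2\leq C(1+a_{n,0}^2)\polylog(n)\rho_n(\mathbf P_0^{\rm st}).
\]
Lemma~\ref{lem:bounded-offset-construction} places $g^\circ$ in the sieve, and \eqref{eq:network-entropy} gives $V_{n,0}\leq Cn\rho_n(\mathbf P_0^{\rm st})\polylog(n)$ with $\log A_{n,0}\leq\polylog(n)$.

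Off $Q_n$, I use \eqref{eq:sieve-spatial-tail}. Since $\Tmin\geq(\log n)^{-c_T}$, both $\sigma_-^{-4}$ and $H^2$ are polylogarithmic, so this tail is $Cn^{-A}$. The projection step costs a factor $2$ plus $\tau_n^{\rm int}$ by \eqref{eq:app-projection-risk}. The pooled version of Lemma~\ref{lem:app-generic-oracle} then gives, with probability at least $1-\delta$,
\[
\int_{\Tmin}^{\Tmax}\!\!\int\|\widehat s-s_t\|_2^2\,p_t\,\dd x\,\dd t\leq C\polylog(n/\delta)\,\rho_n(\mathbf P_0^{\rm st}).
\]
The entropy term is $\mathcal C_nV_{n,0}\log(\cdot)/n$, which is of the same order. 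Overlap $p_{t,\ell}\leq\bar\kappa p_t$ converts this into $\bar r_{n,\ell}$.

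\emph{Stage 2.} Conditional on $\widehat s$, apply Proposition~\ref{prop:stable-offset} essentially verbatim. The only change is that its envelope is now $a^\Delta_{n,\ell}\leq C\sigma_-^{-c_{\rm env}}$, which is polylogarithmic under the inverse-polylog cutoff. This gives $\mathsf A^{\rm int}_{n,\ell}\lesssim(1+(a^\Delta_{n,\ell})^2)\polylog(n)\rho_{n_\ell}(\mathbf P_{\Delta,\ell}^{\rm st})+n^{-A}$, with $V_{n,\ell}\lesssim n_\ell\rho_{n_\ell}\polylog(n)$. Substituting both stages into Theorem~\ref{thm:oracle} proves the bound.

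\emph{Window and remaining terms.} Condition \eqref{eq:window} holds because $(\log n)^{-c_T}\geq n^{-\kappa_0}$, so $\Lambda_2\lesssim\log n$, and $K_n^{\max}$ enters only logarithmically. The lower bound $n_\ell\gtrsim\log n$ gives $n_\ell^{-1}\lesssim\rho_{n_\ell}\polylog(n)$, which absorbs the confidence term. For the probability, take $\delta=n^{-b}/2$ at each stage and add the failure probabilities using independence of the samples; the $\log(1/\delta)$ factors are polylogarithmic.

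\emph{Main obstacle.} The delicate step is Stage~1. Unlike the offset, the pooled score grows like $\sigma_t^{-2}(1+\|x\|)$, so the envelope $a_{n,0}$ and the clamp level $H_{n,0}$ must stay polylogarithmic. This must hold both for the coefficient bound in Lemma~\ref{lem:bounded-offset-construction} and for the spatial tail estimate, and it is exactly where the restriction $\Tmin\geq(\log n)^{-c_T}$ is used. I would check first that Stage~1 clipping by $\Pi$ does not destroy the composition approximation. On $E_t$ the true score is within the ball $K_n(t)$ up to a $\tau_n^{\rm int}$ remainder (\eqref{eq:app-target-clip}), and $\Pi$ is nonexpansive, so clipping the approximant only reduces its error up to that remainder.
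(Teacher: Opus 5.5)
Your proposal is correct and follows essentially the paper's proof: both handle the pooled score and the offset symmetrically through the constituent construction and entropy bound of Proposition~\ref{prop:stable-offset}, with envelopes made polylogarithmic by the inverse-polylogarithmic cutoff; both apply Lemma~\ref{lem:app-generic-oracle} to the pooled class with overlap transfer, then the same oracle together with Lemma~\ref{lem:app-clipped-comparator} conditionally for the offset, and split the confidence as $n^{-b}/2$ per stage. One small imprecision: the hypotheses do not literally give \eqref{eq:window}, since $\Tmin\leq t_0$ and $\Tmax-\Tmin\geq c_0$ are not assumed, but as the paper notes the oracle arguments only need $\Lambda_1,\Lambda_2\lesssim 1+\log_+(1/\Tmin)+\Tmax=O(\log n)$, which holds here.
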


\begin{proof}[Proof of Theorem~\ref{thm:weak-hierarchy}]
For $c\in\{0,(\Delta,\ell)\}$, put $m_0=n$,
$m_{\Delta,\ell}=n_\ell$, $a_{n,(\Delta,\ell)}=a^\Delta_{n,\ell}$,
and $T_0=s$, $T_{\Delta,\ell}=\Delta_\ell$.
The constituent approximation and entropy construction in the proof of
Proposition~\ref{prop:stable-offset} applies to either normalized field:
on $Q_n$ it uses only the composition structure. With sample budget $m_c$,
it gives
\[
 \sup_{Q_n}\|f_c^\circ-T_c\|_2^2
 \leq C a_{n,c}^2\polylog(n)\rho_{m_c}(\mathbf P_c^{\rm st}),\qquad
 V_c\leq C m_c\polylog(n)\rho_{m_c}(\mathbf P_c^{\rm st}),
\]
with $\log A_c\leq\polylog(n)$. The clamped comparator is bounded by
$Ca_{n,c}\max\{1,R_n,\Tmax\}$. Since both envelopes and
$\sigma_-^{-1}$ are polylogarithmic, \eqref{eq:sieve-spatial-tail}
makes the complement negligible. Time integration contributes at most
$\Tmax-\Tmin=O(\log n)$, and \eqref{eq:app-projection-risk}
controls score clipping.

Apply Lemma~\ref{lem:app-generic-oracle} to the pooled class and transfer
its risk by overlap as in Proposition~\ref{prop:pooled-general}.
Conditional on that fit, apply the same oracle and
Lemma~\ref{lem:app-clipped-comparator} to the offset class. Their proofs
use only the logarithmic integral bounds. Here
$\Lambda_2\lesssim1+\log_+(1/\Tmin)+\Tmax=O(\log n)$,
and likewise $\Lambda_1=O(\log n)$.
The two contributions are
$C\polylog(n)(1+a_{n,c}^2)\rho_{m_c}(\mathbf P_c^{\rm st})$.
Use confidence $n^{-b}/2$ at each stage and the probability argument
of Theorem~\ref{thm:mixed-main}. Absorbing the envelopes proves
\eqref{eq:weak-hierarchy-rate}.
\end{proof}

\subsection{Besov regularity at both stages}
\label{sec:both-besov}
We write $\rho_m^{\rm B}(s,D)=m^{-2s/(2s+D)}$.
Suppose $p_0$ and $p_{0,\ell}$ satisfy Lemma~\ref{lem:proof-oko-input}
with smoothness $s_0,s_\ell$, respectively. Assume uniform overlap, independent
training samples, \eqref{eq:window}, and $n_\ell\geq n^\epsilon$
for fixed $\epsilon>0$.
We use the pooled Oko class at Stage~1 and the difference of pooled and
group Oko classes at Stage~2. For any fixed $b>0$, the oracle gives, with probability at
least $1-n^{-b}$ and a constant allowed to depend on $b$,
\begin{equation}
 \calR^{\rm int}_\ell(\widehat s_\ell)
 \leq C\polylog(n)\{
 \rho_n^{\rm B}(s_0,d)+\rho_{n_\ell}^{\rm B}(s_0,d)
                      +\rho_{n_\ell}^{\rm B}(s_\ell,d)\}.
\label{eq:both-besov}
\end{equation}
Proposition~\ref{prop:pooled-general} supplies the first-stage term.
For the difference class, Lemma~\ref{lem:proof-oko-input}, overlap, and
Lemma~\ref{lem:app-envelope} give approximation error
$C\polylog(n)(N_0^{-2s_0/d}+N_\ell^{-2s_\ell/d})$.
For each score factor, \eqref{eq:sieve-spatial-tail} and
\eqref{eq:app-projection-risk} control input clamping and score clipping;
overlap transfers the pooled factor's bounds to the target law.
Covering each factor to radius $\eta/2$
adds their entropies, giving cost $(N_0+N_\ell)/n_\ell$.
Choosing $N_c\asymp n_\ell^{d/(2s_c+d)}$ balances both terms.
The polynomial lower bound on $n_\ell$ makes all resolution logarithms
comparable to $\log n$. Apply Theorem~\ref{thm:oracle} to obtain
\eqref{eq:both-besov}. The target stage pays for both full-score
approximations; the simpler-offset gain requires additional structure.
\end{document}